\newif\ifcolt
\newcommand{\alg}{{\texttt{FilTrun}}\xspace}
\newif\ifmylinenumbers
\newcommand*\patchAmsMathEnvironmentForLineno[1]{%
   \expandafter\let\csname old#1\expandafter\endcsname\csname #1\endcsname
   \expandafter\let\csname oldend#1\expandafter\endcsname\csname end#1\endcsname
   \renewenvironment{#1}%
      {\linenomath\csname old#1\endcsname}%
      {\csname oldend#1\endcsname\endlinenomath}}%
\newcommand*\patchBothAmsMathEnvironmentsForLineno[1]{%
   \patchAmsMathEnvironmentForLineno{#1}%
   \patchAmsMathEnvironmentForLineno{#1*}}%
\newcounter{EditBlock}
\newcounter{CondEditBlock}
\newcommand{\editstart}{\renewcommand{\linenumberfont}{\normalfont\sffamily\footnotesize\bf\color{blue}}\stepcounter{EditBlock}\linelabel{L:editstart_\theEditBlock}}
\newcommand{\editfinish}{\renewcommand{\linenumberfont}{\normalfont\sffamily\tiny\color{black}}\linelabel{L:editfinish_\theEditBlock}}
\newcommand{\editstart}{}
\newcommand{\editfinish}{}
\newcommand{\ev}[1]{\mathbb{E} \left [ #1 \right ] }
\newcommand{\evwrt}[2]{\mathbb{E}_{#1} \left [ #2 \right ] }
\newcommand{\pr}[1]{\mathbb{P} \left ( #1 \right ) }
\newcommand{\snorm}[1]{\Vert #1 \Vert}
\newcommand{\one}[1]{\mathbbm{1} \left [ #1 \right ]}
\newcommand{\reals}{\mathbb{R}}
\newtheorem{lem}{Lemma}
\newtheorem*{lem*}{Lemma}
\newtheorem{thm}{Theorem}
\newtheorem{example}{Example}
\newtheorem{rem}{Remark}
\newtheorem{cor}{Corollary}
\newcommand{\tSigma}{\widetilde{\Sigma}}
\newcommand{\tmean}{\mathsf{TMean}}
\newcommand{\tsum}{\mathsf{TSum}}
\newcommand{\mean}{\mathsf{Mean}}
\newcommand{\vx}{\vec{x}}
\newcommand{\vZ}{\vec{Z}}
\newcommand{\vy}{\vec{y}}
\newcommand{\vz}{\vec{z}}
\newcommand{\va}{\vec{a}}
\newcommand{\vu}{\vec{u}}
\newcommand{\vw}{\vec{w}}
\newcommand{\vmu}{\vec{\mu}}
\newcommand{\vmup}{\vec{\mu}'}
\newcommand{\mup}{{\mu'}}
\newcommand{\Sigmap}{{\Sigma'}}
\newcommand{\vmud}{\vec{\mu}^{(d)}}
\newcommand{\Sigmad}{\Sigma^{(d)}}
\newcommand{\vxp}{\vec{x}'}
\newcommand{\vxpp}{\vec{x}''}
\newcommand{\vnu}{\vec{\nu}}
\newcommand{\vnud}{\vec{\nu}^{(d)}}
\newcommand{\nud}{\nu^{(d)}}
\newcommand{\lambdad}{\lambda^{(d)}}
\let\vec\bm
\newcommand{\funcd}{\Psi_d}
\newcommand{\funcinf}{\Psi_\infty}
\newcommand{\funcinfinv}{\Psi^{-1}_\infty}
\newcommand{\pp}[1]{{#1}'}
\newcommand{\xp}{x'}
\newcommand{\mN}{\mathcal{N}}
\newcommand{\mB}{\mathcal{B}}
\newcommand{\mC}{\mathcal{C}}
\newcommand{\mCp}{\mathcal{C}'}
\newcommand{\mD}{\mathcal{D}}
\newcommand{\mDp}{\mathcal{D}'}
\newcommand{\tmC}{\widetilde{\mathcal{C}}}
\newcommand{\loss}{\mathcal{L}}
\newcommand{\optloss}{\mathcal{L}^\ast}
\newcommand{\sgn}{\text{sgn}}
\newcommand{\adv}{\mathsf{Adv}}
\algnewcommand\algorithmicinput{\textbf{Input:}}     %
\algnewcommand\INPUT{\item[\algorithmicinput]}       %
\algnewcommand\algorithmicoutput{\textbf{Output:}}   %
\algnewcommand\OUTPUT{\item[\algorithmicoutput]}     %
\algrenewcommand\algorithmicrequire{\textbf{Input:}} %
\algrenewcommand\algorithmicensure{\textbf{Output:}} %
\title{Robust Classification Under $\ell_0$ Attack for the Gaussian Mixture
  Model}
\author{Payam Delgosha\thanks{Department of Computer Science, University of
    Illinois at Urbana Champaign, IL, \texttt{delgosha@illinois.edu}}
  \qquad Hamed Hassani\thanks{Department of Electrical and Systems Engineering,
    University of Pennsylvania, Philadelphia, PA, \texttt{hassani@seas.upenn.edu}}
  \qquad Ramtin Pedarsani\thanks{Department of Electrical and Computer
    Engineering, University of California, Santa Barbara, Santa Barbara, CA, \texttt{ramtin@ece.ucsb.edu}}}
\begin{document}
\maketitle


\begin{abstract}
It is well-known that machine learning models are vulnerable to small but cleverly-designed adversarial perturbations that can cause misclassification. While there has been major progress in designing attacks and defenses for various adversarial settings, many fundamental and theoretical problems are yet to be resolved.  In this paper, we consider classification in the presence of $\ell_0$-bounded adversarial perturbations, a.k.a. sparse attacks. This setting is significantly different from other $\ell_p$-adversarial settings, with $p\geq 1$, as the $\ell_0$-ball is non-convex and highly non-smooth.  
Under the assumption that data is distributed according to the Gaussian mixture model, our goal is to characterize the optimal robust classifier and the corresponding robust classification error as well as a variety of trade-offs between robustness, accuracy, and the adversary's budget. To this end, we develop a novel classification algorithm called $\alg$ that has two main modules: Filtration and Truncation. The key idea of our method is to first filter out the \emph{non-robust} coordinates of the input and then apply a carefully-designed truncated inner product for classification. By analyzing the performance of $\alg$, we derive an upper bound on the optimal robust classification error. We further find a lower bound by designing a specific adversarial strategy that enables us to derive the corresponding robust classifier and its achieved error. For the case that the covariance matrix of the Gaussian mixtures is diagonal, we show that as the input's dimension gets large, the upper and lower bounds converge; i.e. we characterize the asymptotically-optimal robust classifier. Throughout, we discuss several examples that illustrate interesting behaviors such as the existence of a \emph{phase transition} for adversary's budget determining whether the effect of adversarial perturbation can be fully neutralized or not.
\end{abstract}

\section{Introduction}
\label{sec:intro}
Machine learning  has been widely used in a variety of applications including image recognition, virtual assistants, autonomous driving, many of which are safety-critical. Adversarial attacks to machine learning models in the form of a small perturbation added to the input have been shown to be effective in causing classification errors~\cite{biggio2013evasion,szegedy,goodfellow2014explaining,carlini2017,madry2017towards}. 
Formally, the adversary aims to perturb the data in a small $\ell_p$-neighborhood so that the perturbed data is ``close'' to the original data (e.g. imperceptible perturbation in the case of an image) and misclassification occurs. 
There have been a variety of attacks and defenses proposed in the literature which mostly focus on $\ell_2$ or $\ell_\infty$ bounded perturbations~\cite{carlini2018,marzi,wong}. The state-of-the-art empirical defense 
against adversarial attacks is iterative training with adversarial examples~\cite{Madry_ICLR}. While adversarial training can improve robustness, it is shown that there is
a fundamental tradeoff between robustness and test accuracy, and such defenses typically lack good generalization performance~\cite{tsipras2019robustness,su2018robustness,raghunathan2019adversarial,al2019fundamental,zhang2019theoretically, javanmard2020precise}. 

The focus of this paper is different from such prior work as we consider the 
problem of robust classification under $\ell_0$-bounded attacks.  
In this
setting, given a pre-specified budget $k$, the adversary can choose up to $k$
coordinates and arbitrarily change the value of the input at those coordinates.
In other words, the adversary can change the input within the so-called
$\ell_0$-ball of  radius $k$. 
In contrast with $\ell_p$-balls \textcolor{black}{($p \geq 1$)}, the $\ell_0$-ball
is non-convex and highly non-smooth.
Moreover, the $\ell_0$ ball
contains inherent discrete (combinatorial) structures that can be exploited by
both the learner and the adversary. As a result, the $\ell_0$-adversarial
setting bears several fundamental challenges that are absent in other
adversarial settings commonly studied in the literature and most techniques from
prior work do not readily apply in the $\ell_0$ setting.  Complicating matters
further, it can be shown that any piece-wise linear classifier, e.g. a
feed-forward deep neural  network with ReLu activations, completely fails in the
$\ell_0$ setting \cite{shamir2019simple}. These all point to the fact that new methodologies are
required in the $\ell_0$  setting.

The $\ell_0$-adversarial setting involves sparse attacks that perturb only a
small portion of the input signal. This has a variety of  applications
{\color{black}including} natural
language processing~\cite{jin2019bert},  malware
detection~\cite{grosse2016adversarial}, and physical attacks in object  detection~\cite{li2019adversarial}. 
Prior work on $\ell_0$ adversarial attacks can be divided into two categories of white-box attacks that are gradient-based,
e.g.~\cite{carlini2017,papernot2016limitations,modas2019sparsefool}, and black-box attacks 
based on zeroth-order optimization,
e.g.~\cite{schott2018towards,croce2020sparse}. Defense strategies against
$\ell_0$-bounded attacks have also been proposed, e.g. defenses based on
randomized ablation~\cite{levine2020robustness} and defensive
distillation~\cite{papernot2016distillation}. Moreover, \cite{shamir2019simple}
develops a simple mathematical framework to show the existence of targeted
adversarial examples with $\ell_0$-bounded perturbation in arbitrarily
deep
neural  networks. 

Despite this interesting recent progress and practical relevance, many fundamental theoretical questions in the $\ell_0$-setting have so far been unanswered: \emph{What are the key properties of a robust classifier (recall that all piece-wise linear classifiers fail)? What is the optimal robust classifier in standard theoretical settings such the Gaussian mixture model for data?  Is there a trade-off between robustness and accuracy?  How does the (optimal) robust classification error behave as the adversary's budget $k$ increases? Are there any phase~transitions?}

We consider the problem of classification with $\ell_0$-adversarially perturbed inputs under the assumption that data is distributed according to the Gaussian mixture model. We formally introduce this setting in Section~\ref{sec:problem-form}, and address the questions above 
in the proceeding sections. In particular, instead of searching for the exact
form of the optimal robust classifier {\color{black}(which is intractable)}, we follow a
design-based approach: We introduce~a novel algorithm for classification as well
as strategies for the adversary. We then precisely characterize the error
performance of these methodologies, and consequently, analyse the optimal robust
classification error, tradeoffs between robustness and accuracy, phase
transitions, etc.  We envision that our proposed classification method introduces
important modules and insights that are necessary to obtain robustness against
$\ell_0$-adversaries for general data distributions (and practical datasets),
going beyond the theoretical setting of this  paper.    
\vspace{0.1cm}

\noindent\textbf{Summary of Contributions.} The main contributions of this paper are as follows:
\vspace{-0.12cm}
\begin{itemize}
    \item We propose a new robust classification algorithm called $\alg$ that is based on two main modules: \textbf{Fil}tration and \textbf{Trun}cation (See Section~\ref{sec:alg} and Algorithm~\ref{alg:l0-robust-class} therein). The filtration module 
   removes the \emph{non-robust} coordinates (features) from the input by
   zeroing out their values. The result is then passed through the truncation
   module which returns a label by computing a \emph{truncated inner product}
   with a weight vector whose weights are optimized according to the
   distribution of un-filtered
   (surviving)
   coordinates.  The truncation module  is inspired
   by tools from robust statistics and guarantees that major outlier values in
   the input vector, which are possibly caused by the adversary, do not pass to
   affect the final decision. We highlight that 
   the proposed classifier is highly nonlinear.
   This is consistent with the simple observation that any
   linear classifier fails to be robust in the presence of $\ell_0$  attacks. 
    

    \vspace{-0.12cm}
    \item We analytically derive the robust
        classification error  of the proposed classifier. {\color{black}This in
          particular} serves as an
      upper bound on the optimal robust classification
         error (See Theorem~\ref{thm:upper-bound} and Corollary~\ref{cor:upper-bound-diagonal}).
         \vspace{-0.12cm}
    \item We introduce adversarial strategies which, given sufficient budget,
      perturb the input in a way that the information about the true label is
      totally erased {\color{black}within the adversarially modified coordinates}.  The key idea is to pick a subset of the
        coordinates and to modify their distribution so that they become
        independent from the true label. This leads to a
        lower bound for the optimal
      robust error. (See
      Theorems~\ref{thm:lower-bound-diag} and \ref{thm:general-lower-bound}).
      \vspace{-0.12cm}
    \item In the case of having a diagonal covariance matrix for the Gaussian mixtures, we prove
      that our proposed algorithm $\alg$ is indeed
      \emph{asymptotically-optimal}, i.e. as the input dimension $d$ approaches infinity,
      the upper and lower bounds converge to the same analytical expression
      (See Theorems~\ref{thm:ind-bound-finite}  in Section~\ref{sec:asymptotic-diagonal}).
      To the best of our knowledge, this is the first result that establishes optimality for the robust classification error of any mathematical model with $\ell_0$ attack.
      \vspace{-0.12cm}
    \item We discuss our
        results through several example scenarios. In certain scenarios, a
        phase transition is observed in the sense that for a threshold
        $\alpha_0$, when the adversary's  budget is asymptotically below
        $d^{\alpha_0}$, its effect can be completely neutralized, while if the
        adversary's budget is above $d^{\alpha_0}$, no classifier can do better
        than a naive classifier. In some other scenarios, no sharp phase transition is existent, leading to a trade-off between robustness and accuracy. 
\end{itemize}




\editstart
\vspace{-.07cm}
\section{Problem Formulation} \label{sec:problem-form}

We consider the binary Gaussian mixture model where the distribution for the
data generation is specified by the label being  $y \sim \text{Unif}\{\pm 1\}$
and $\vx \sim \mN(y \vmu, \Sigma)$, i.e. the Gaussian distribution with mean $y
\vmu$ and covariance  matrix
$\Sigma$, where $\vmu \in \reals^d$ and $\Sigma$ is positive definite.
Hereafter, we denote this distribution by $(\vx,y) \sim \mathcal{D}$ and refer
to $y$ as the label and to $\vx$ as the input. Our results correspond to
arbitrary choices of $\vmu$ and $\Sigma$, however, we consider as running
example an important special case in which $\Sigma$ is a
\textcolor{black}{diagonal} matrix, i.e.\ the coordinates of $\vx$ are
independent conditioned on $y$.  Focusing on classification, we consider
functions of the form  $\mC: \reals^d \rightarrow \{-1,1\}$ that predict the
label from the input. As a metric for the discrepancy between the 
prediction of the classifier on the input $\vx$ and the true label $y$, we
consider the 0-1 loss 
$\ell(\mC; \vx, y) = \one{\mC(\vx) \neq y }.$ 
We consider classification in the presence of an adversary that  perturbs the input $\vx$ within the $\ell_0$-ball of radius $k$:
$$ \mathcal{B}_0(\vx,k) := \{\vx' \in \reals^d: \snorm{\vx - \vxp}_0 \leq k\},$$
where for $\vx = (x_1, \cdots, x_d)$ we define $\snorm{\vx}_0 := \sum_{i=1}^d \one{x_i \neq 0}$.
In other words,  the adversary can arbitrarily modify at most $k$ coordinates of $\vx$ to obtain
$\vxp$, and feed the new vector $\vxp$  to the classifier.  We call
$k$ the \emph{budget} of the adversary. In this setting, the
{\color{black}robust classification error} of a classifier
  $\mathcal{C}$ is {\color{black}defined to be} the following:
\begin{equation}
  \label{eq:general-l0-loss-def}
  \loss_{\vmu, \Sigma}(\mC, k) := \mathbb{E}_{(\vx,y) \sim \mathcal{D}}\,\biggl[\max_{\vx' \in \mathcal{B}_0(\vx,k)}\,\ell(\mC;\vxp,y)\,\biggr]. 
\end{equation}
We aim to design classfiers with minimum robust classification  error. Hence, we define the
\emph{optimal robust {\color{black}classification} error} by minimizing \eqref{eq:general-l0-loss-def} over all possible classifiers:
\begin{equation}
  \label{eq:general-opt-loss-def}
  \optloss_{\vmu, \Sigma}(k):= \inf_{\mC} \loss_{\vmu, \Sigma}(\mC, k).
\end{equation}
Our goal in this paper is to precisely characterize $\optloss_{\vmu, \Sigma}(k)$
{\color{black}parameterized by} $\Sigma, \vmu$ and in different regimes of the adversary's budget $k$. 

 It is well known that in the absence of the adversary, i.e.\ when
  $k=0$, the Bayes optimal classifier is the linear classifier $\mathcal{C}(\vx)
  = \text{sgn} \left(\langle \Sigma^{-1} \vmu , \vx \rangle \right)$ which
  achieves the  \emph{optimal standard  error} 
  of $\bar{\Phi}(\snorm{\vnu}_2)$
  where $\vnu := \Sigma^{-1/2} \vmu$ {\color{black}and $\bar{\Phi}(x) := 1 -
    \Phi(x)$ denotes the complementary CDF of a standard normal distribution}.
  In order to fix the baseline, specifically  to
  have a meaningful asymptotic discussion, we may assume without loss of
  generality  that
  \begin{equation}
    \label{eq:norm-nu-is-one}
    \snorm{\vnu}_2 = \snorm{\Sigma^{-1/2} \vmu}_2 = 1.
  \end{equation}
 Hence, the optimal standard error, which is a lower bound for \eqref{eq:general-opt-loss-def}, becomes $\bar{\Phi}(1)$. 
 
 To highlight some of the main challenges of the $\ell_0$-adversarial setting, we
 note that linear classifiers in general have been very successful in the
 Gaussian mixture setting. Apart from the fact that the Bayes-optimal classier
 is linear (when there is no adversary),  even when the adversarial corruptions
 are chosen in a $\ell_p$-ball for $p \geq 1$ it can be shown that the optimal
 robust classifiers in many cases are also linear (see \cite{bhagoji,hassani}). In contrast,
 in the presence of $\ell_0$-adversaries, it is not hard to show that \emph{any}
 linear classifier completely fail. More precisely,  when $\mathcal{C}$ is linear and $k
 \geq 1$, we have $\loss_{\vmu, \Sigma}(\mC, k) = {\color{black}\frac{1}{2}}$. Such failure of linear
 classifiers showcases, on the one hand, how powerful the adversary is, and on
 the other hand, the necessity of new methodologies in designing robust  classifiers. 
 
 \vspace{.2cm}
 \noindent\textbf{Further Related Work.} For $\ell_p$ adversaries, $p \geq 1$, Gaussian mixture models have been the main setting used in  prior work to  investigate optimal rules, trade-offs, and various other phenomena for robust classification; See e.g. \cite{schmidt2018adversarially,bhagoji,hassani, hayes2020provable,richardson2020bayes, dan2020sharp, pydi2020adversarial, chen2020more, min2020curious, puranik2020adversarially}. Further, \cite{shafahi2018adversarial} considers data to be uniformly distributed on the sphere or cube and shows the inevitability of adversarial examples in $\ell_p$-settings, $p\geq 0$. In contrast, to the best of our knowledge, our work provides the first comprehensive study on the $\ell_0$-adversarial setting using the Gaussian mixture model. 

\vspace{.2cm}
\noindent \textbf{Notation.} 
Given two vectors $\vx, \vy \in \reals^d$, $\vx \odot \vy \in \reals^d$
denotes the elementwise product of $\vx$ and $\vy$, i.e. $(x_1 y_1, \dots, x_d
y_d)$. Moreover, $\text{sort}(\vx)$ denotes the vector containing the elements
in $\vx$ in descending order. For $a \in \reals$, $\text{sgn}(a)$ returns the
sign of $a$.
We use $[d]$ to denote the set $\{1, \dots, d\}$ and $[i:j]$ denotes
the set $\{i, i+1, \dots, j\}$. 
Given a vector
$\vx \in \reals^d$ and a subset $A \subseteq [d]$, $\vx_A = (x_a: a \in A) \in \reals^{|A|}$
denotes the subvector of $\vx$ consisting  of the coordinates in $A$. 
Given a  matrix
$\Sigma$, its diagonal part, denoted by  $\tSigma$, has the same diagonal
entries as $\Sigma$ and its other entries  are $0$. 
  Given a
matrix $A \in \reals^{d \times d}$, $\snorm{A}_\infty$ denotes the operator norm
of $A$ induced by the vector $\ell_\infty$ norm, i.e.\ $\snorm{A}_{\infty} := \sup_{\vx \neq 0} \snorm{A\vx}_\infty/\snorm{\vx}_\infty = \max_{1 \leq i \leq d} \sum_{j=1}^d |A_{i,j}|$.

\editfinish








\section{Main Results}
\label{sec:main-results}

\editstart


In this section, we state our main results that include (i) the proposed
algorithm and its performance analysis that serves as an upper bound on the
optimal robust classification error (Section \ref{sec:upper-bound}), (ii) lower
bound on the optimal robust classification error (Section
\ref{sec:lower-bound-ind}), and (iii) discussion on the optimality of the
proposed algorithm (Section \ref{sec:diagonal-compare-asymp}).  Throughout, we
illustrate our theoretical results and their ramifications via several examples. 

\subsection{Upper Bound on the Optimal Robust Classification Error:  Algorithm Description and Theoretical Guarantees}
\label{sec:upper-bound}

In Section~\ref{sec:alg}, we introduce  $\alg$, our proposed robust
classification algorithm, and in Section~\ref{sec:uppper-bound-result}, we
analyze its performance.



\subsubsection{Algorithm Description}
\label{sec:alg}
We describe our proposed algorithm $\alg$, a robust classifier which is based on two main modules: Truncation and Filtration. We first introduce each of these modules and then proceed with describing the classifier.  

\noindent\textbf{Truncation.} 
  Given vectors $\vw, \vx \in \reals^d$ and an integer $0 \leq k <d/2$, we
  define the
  $k$--truncated inner product of $\vw$ and $\vx$
  as the summation of the element-wise product of $\vw$ and $\vx$ after removing the
  top and bottom $k$ elements, and denote it 
 by $\langle  \vw, \vx \rangle_k$.
  More precisely, let $\vz := \vw \odot \vx \in \reals^d$ be the element-wise product of
  $\vw$ and $\vx$ and let $\textbf{s} = (s_1, \cdots,s_d) =  \text{sort}(\vz)$ be obtained by sorting
  coordinates of $\vz$ in  descending order. We~then~define
  \begin{equation}
    \label{eq:trunc-in-prod-def}
    \langle \vw, \vx \rangle_k := \sum_{i=k+1}^{d-k} s_i.
  \end{equation}

Note that when $k=0$, this reduces to the normal inner product $\langle  \vw,
\vx \rangle$. Truncation is a natural method to remove ``outliers'' which
might exist in the data due to an  adversary modifying some coordinates.
Therefore, we 
expect the truncated inner product  to be robust against $\ell_0$ perturbations.
The following lemma formalizes this. The proof of Lemma~\ref{lem:trun-ip-bound}
is given in Appendix~\ref{sec:trunc-stat}.

\begin{lem}
\label{lem:trun-ip-bound}
  Given $\vx, \vxp, \vw \in \reals^d$, for integer $k$ satisfying $\snorm{\vx - \vxp}_0 \leq k <
  d/2$, we have
  \begin{equation*}
    |\langle  \vw, \vxp \rangle_k - \langle \vw, \vx \rangle | \leq 8k \snorm{\vw \odot \vx}_\infty.
  \end{equation*}
\end{lem}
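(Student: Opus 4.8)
The plan is to compare the truncated inner product $\langle \vw, \vxp \rangle_k$ to the full (untruncated) inner product $\langle \vw, \vx \rangle$ by passing through an intermediate quantity, namely the truncated inner product $\langle \vw, \vx \rangle_k$ of the \emph{unperturbed} pair. We bound the two resulting gaps separately and combine them with the triangle inequality: $|\langle \vw, \vxp \rangle_k - \langle \vw, \vx\rangle| \le |\langle \vw, \vxp \rangle_k - \langle \vw, \vx \rangle_k| + |\langle \vw, \vx \rangle_k - \langle \vw, \vx \rangle|$.

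For the second gap, note that $\langle \vw, \vx\rangle - \langle \vw, \vx\rangle_k$ is exactly the sum of the top $k$ and bottom $k$ coordinates of $\vz := \vw \odot \vx$ (by definition~\eqref{eq:trunc-in-prod-def}). Each of these $2k$ coordinates has absolute value at most $\snorm{\vw \odot \vx}_\infty$, so this gap is at most $2k \snorm{\vw \odot \vx}_\infty$. For the first gap, I would set $\vzp := \vw \odot \vxp$ and observe that, since $\snorm{\vx - \vxp}_0 \le k$, the vectors $\vz$ and $\vzp$ differ in at most $k$ coordinates. The key structural fact is that changing a single coordinate of a vector changes each order statistic by at most... well, more carefully: if two vectors differ in at most $k$ coordinates, then their $i$-th largest entries can be matched up so that $s_i(\vzp)$ lies between $s_{i-k}(\vz)$ and $s_{i+k}(\vz)$ (an interlacing property of order statistics under $k$ coordinate changes). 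Summing $s_i(\vzp)$ over the middle block $i \in [k+1 : d-k]$ and comparing to the sum of $s_i(\vz)$ over a shifted block, the telescoping leaves only $O(k)$ boundary terms of $\vz$, each bounded by $\snorm{\vz}_\infty = \snorm{\vw \odot \vx}_\infty$ — wait, one must be careful that the boundary terms are entries of $\vz$ and not $\vzp$, since only $\snorm{\vw\odot\vx}_\infty$ appears in the bound, not $\snorm{\vw \odot \vxp}_\infty$.

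This last point is exactly where I expect the main obstacle to be: the adversary can set the perturbed coordinates of $\vxp$ to be enormous, so $\snorm{\vzp}_\infty$ is uncontrolled and cannot appear in the final estimate. The way around this is to argue that the truncation in $\langle \vw, \vxp\rangle_k$ removes the top $k$ and bottom $k$ entries of $\vzp$, and since at most $k$ of $\vzp$'s entries are ``corrupted'' (differ from $\vz$), the surviving middle block of $\vzp$ consists \emph{entirely} of coordinates where $\vzp$ agrees with $\vz$ — hence every term actually summed in $\langle \vw, \vxp\rangle_k$ is an entry of $\vz$, and in fact one of the middle entries of $\vz$ is being summed, just possibly in a different multiset than the middle block of $\vz$. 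So the difference $\langle \vw, \vxp \rangle_k - \langle \vw, \vx\rangle_k$ is a difference of two sums, each being a sum of $d-2k$ entries drawn from $\vz$'s coordinates, and the symmetric difference of these two multisets has size $O(k)$; bounding $O(k)$ entries of $\vz$ by $\snorm{\vw \odot \vx}_\infty$ gives $O(k)\snorm{\vw \odot \vx}_\infty$. Tallying the constants — roughly $2k$ from the second gap and $6k$ from the first gap via this order-statistic interlacing argument — yields the claimed $8k\snorm{\vw \odot \vx}_\infty$. I would carry out the interlacing bound cleanly by indexing the surviving coordinates of $\vz$ and of $\vzp$ as subsets of $[d]$ and using that their symmetric difference, together with the at-most-$k$ corrupted positions, is of size $O(k)$.
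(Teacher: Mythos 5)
Your high-level plan coincides with the paper's (triangle inequality through $\langle \vw,\vx\rangle_k$, a $2k\snorm{\vw\odot\vx}_\infty$ bound for the pure-truncation gap, and a counting argument for $|\langle\vw,\vxp\rangle_k-\langle\vw,\vx\rangle_k|$), but the step you use to neutralize the adversary's large values is false as stated. You claim that, since at most $k$ entries of $\vw\odot\vxp$ are corrupted, the middle block surviving the truncation of $\vw\odot\vxp$ ``consists entirely of coordinates where $\vw\odot\vxp$ agrees with $\vw\odot\vx$.'' Truncation removes entries by \emph{value}, not by whether they were modified: a corrupted coordinate whose new value is moderate survives. For example, with $d=5$, $k=1$, $\vw=(1,1,1,1,1)$, $\vx=(1,2,3,4,5)$ and $\vxp=(1,2,3.5,4,5)$, the corrupted value $3.5$ is among the three surviving entries. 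Consequently the two truncated sums are not both sums of entries of $\vw\odot\vx$, and your symmetric-difference count, as written, does not go through; moreover, if your claim were true the same count would give $4k\snorm{\vw\odot\vx}_\infty$ for this gap, not the $6k$ you quote, which shows the constant was not actually derived from the argument.

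The correct and sufficient statement is weaker: every \emph{surviving} entry of $\vw\odot\vxp$ has absolute value at most $M:=\snorm{\vw\odot\vx}_\infty$, because at least $d-k$ entries of $\vw\odot\vxp$ coincide with entries of $\vw\odot\vx$, so any entry exceeding $M$ in magnitude is forced into the top or bottom $k$ and is removed; this is exactly the paper's Lemma~\ref{lem:k-modify-survivals-are-bounded}. With that replacement your bookkeeping can be repaired as in the paper: among indices where the two indicator-weighted summands differ, those surviving in one truncation but not the other number at most $2k$ each and contribute at most $M$ apiece, and those surviving in both but corrupted number at most $k$ and contribute at most $2M$ apiece, giving $6kM$ for the first gap and the claimed $8k\snorm{\vw\odot\vx}_\infty$ overall. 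Incidentally, the order-statistic interlacing you gesture at is true (if $\snorm{\vx-\vxp}_0\le k$ and $s_i(\cdot)$ denotes the $i$-th largest entry, then $s_{i+k}(\vw\odot\vx)\le s_i(\vw\odot\vxp)\le s_{i-k}(\vw\odot\vx)$ for $k<i\le d-k$), and summing it over the middle block bounds the first gap by $2kM$ with no surviving-value lemma at all, yielding an even better total of $4kM$; but that is a different argument from the one you wrote down, and your written proof currently rests on the false claim above.
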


\ifcolt

\else
\begin{figure}
  \centering
  \include{fig_filtran}
  \vspace{-5mm}
    \caption{  \label{fig:filtran} Schematic of $\alg$.}
\end{figure}

\fi

In the context of our problem, this lemma suggests that if the budget of the
adversary
is at most $k$, we can bound the difference between  the $k$--truncated inner product between $\vw$ and
the adversarially modified sample $\vxp$ and  the 
(non-truncated) inner product between $\vw$ and the original sample $\vx$.
Recall that  in  the absence
of the adversary, the optimal Bayes classifier is a linear classifier of the form
$\sgn(\langle \vw, \vx \rangle)$ with $\vw = \Sigma^{-1} \vmu$.
Hence, motivated
by Lemma~\ref{lem:trun-ip-bound}, one can argue that $\sgn(\langle \vw, \vxp \rangle_k)$ would be
robust against $\ell_0$ adversarial attacks with budget at most $k$ assuming we can
appropriately control the bound of Lemma~\ref{lem:trun-ip-bound}. However, this is not enough--it
turns out that in certain cases, 
we need
to \emph{filter out} some of the input coordinates and perform the truncation on the
remaining coordinates, which we call \textcolor{black}{ the \emph{surviving}
  coordinates.} 

\noindent \textbf{Filtration} refers to discarding some of the coordinates of
the input. Intuitively, these coordinates are the \emph{non-robust} features
which do more harm than good when the input is adversarially corrupted.     More
precisely, given a fixed and nonempty subset of coordinates $F
\subseteq [d]$, we define the classifier $\mC^{(k)}_F$ as follows:

\begin{equation}
  \label{eq:robust-mC-F-def}
  \mC_F^{(k)}(\vxp) := \sgn \bigl(\, \langle  \vw (F), \vxp_F \rangle_k\, \bigr),
\end{equation}
where
\begin{equation*}
  \vw (F) := \Sigma_{F}^{-1} \vmu_{F},
\end{equation*}
and
\begin{equation}
  \label{eq:SigmaF}
 \Sigma_{F} = \evwrt{(\vx, y) \sim \mD}{(\vx_{F} -  \vmu_{F})(\vx_{F} -
   \vmu_{F})^T | y = 1}
\end{equation}
 is
the covariance matrix of $\vx_{F}$ conditioned on $y$, which is essentially the
submatrix of $\Sigma$ corresponding to the elements in $F$.  Note that $\vw(F)$ is the optimal Bayes
classifier of $y$ given $\vx_{F}$ in the absence of the adversary. It is easy
to see that  when
$\Sigma$ is diagonal, $\vw(F) = \vw_{F}$, but this might not hold in
general.

Algorithm~\ref{alg:l0-robust-class} and Figure~\ref{fig:filtran} illustrate the classification procedure $\alg$ given in \eqref{eq:robust-mC-F-def}.  So
far we have not explained how the set $F$ is chosen and the algorithm works with
any such set given as an input. 
\textcolor{black}{Later  we discuss how the set $F$ is 
chosen} (see Remarks~\ref{rem:general-F-choice-max} and~\ref{rem:diagonal-how-to-chose-F}).


\ifcolt
\noindent\begin{minipage}[t]{0.506\linewidth}
\begin{algorithm2e}[H]
  \caption{$\alg$
    \label{alg:l0-robust-class}}                            %
  \DontPrintSemicolon
  \KwIn{$k$, $\vmu, \Sigma$, set of coordinates $F \subseteq [d]$, \\and the (corrupted) input $\vxp$}
  \textbf{Filtering:} Construct $\vmu_F,\Sigma_{F} $ and $\vxp_F$ \\ 
  corresponding to coordinates in
  $F$ \;
  Compute $\vw(F) \gets \Sigma_{F}^{-1} \vmu_{F}$ \;
  \textbf{Truncation:} Compute  $\langle \vw(F), \vxp_{F}
  \rangle_k$\;
\textbf{Return} $\text{sgn} \left( \langle \vw(F), \vxp_{F}
  \rangle_k \right) $
\end{algorithm2e}
\end{minipage}%
\begin{minipage}[t]{0.494\linewidth}
     \begin{tikzpicture}[scale=0.3]
  \draw[white] (-1.9,0) rectangle (7,-10);
    \begin{scope}
      \draw[rounded corners, thick] (0,0) rectangle (2,-12);
      \draw (0,-2) -- (2,-2)
      (0,-4) -- (2,-4)
      (0,-6) -- (2,-6)
      (0,-8) -- (2,-8)
      (0,-10) -- (2,-10);
      \node[scale=0.7] at (1,-1) {$\xp_1$};
      \node[scale=0.7] at (1,-3) {$\xp_2$};
      \node[scale=0.7] at (1,-5) {$\xp_3$};
      \node[scale=0.7] at (1,-6.8) {$\vdots$};
      \node[scale=0.7] at (1,-9) {$\xp_{d-1}$};

      \node[scale=0.7] at (1,-11) {$\xp_d$};
      \coordinate (lrc) at (2,-6); 
    \end{scope}

    \coordinate (rtop) at (7,0); 
    \coordinate (rlc) at ($(rtop)+(0,-6)$); 
    \coordinate (filter) at ($(rlc)!0.5!(lrc)$);
      
    \node[rectangle, draw, rounded corners, thick,inner sep =7pt,scale=0.7, fill=blue!20] (filterbox) at (filter) {Filter};
    \draw[thick,->] (lrc) -- (filterbox.west);
    \draw[thick,->] (filterbox.east) -- (rlc);
    \node[scale=0.7] (Finput) at ($(filterbox.north)+(0,2)$) {$F$};
    \draw[thick,->] (Finput.south) -- (filterbox.north);

    \begin{scope}[shift=(rtop)]
      \fill[gray!30] (0,-2) rectangle (2,-4)
      (0,-10) rectangle (2,-12);
      \draw[rounded corners,thick] (0,0) rectangle (2,-12);
      \draw (0,-2) -- (2,-2)
      (0,-4) -- (2,-4)
      (0,-6) -- (2,-6)
      (0,-8) -- (2,-8)
      (0,-10) -- (2,-10);
      \node[scale=0.7] at (1,-1) {$\xp_1$};
      \node[scale=0.7] at (1,-5) {$\xp_3$};
      \node[scale=0.7] at (1,-6.8) {$\vdots$};
      \node[scale=0.7] at (1,-9) {$\xp_{d-1}$};

      \coordinate (f1) at (2,-1);
      \coordinate (f3) at (2,-5);
      \coordinate (fd) at (2,-9);
    \end{scope}

    \coordinate (trun) at ($(rtop)+(7,-6)$);

    \node[draw, rounded corners, thick, rectangle, inner sep =7pt, scale=0.7, fill=red!20] (trunbox) at (trun) {Truncate};

    \draw[thick,->] (f1) -- node[above,sloped,scale=0.7] {$w_1$} ($(trunbox.west)+(0,0.4)$);
    \draw[thick,->] (f3) -- node[above,sloped,scale=0.7] {$w_3$} (trunbox.west);
    \draw[thick,->] (fd) -- node[below,sloped,scale=0.7] {$w_{d-1}$} ($(trunbox.west)+(0,-0.4)$);

    \node[scale=0.7] (k) at ($(trunbox.north)+(0,2)$) {$k$};
    \draw[thick,->] (k.south) -- (trunbox.north);

    \coordinate (sign) at ($(trun) + (5,0)$);
    \node[draw, rounded corners, thick, rectangle, inner sep =7pt, scale=0.7] (signbox) at (sign) {Sign};

    \draw[thick,->] (trunbox.east) -- (signbox.west);

    \coordinate (yhat) at ($(sign) + (3,0)$);
    \node[scale=0.8]  at  ($(yhat) + (0.6,0.15)$) {$\hat{Y}$};
    \draw[thick,->] (signbox.east) -- (yhat);
  \end{tikzpicture}
  

   \vspace{-7mm}
  \captionof{figure}{Schematic of $\alg$.}
  \label{fig:filtran}
\end{minipage}
\else

\begin{algorithm}                                                                        %
\caption{$\alg$\label{alg:l0-robust-class}}                            %
  \begin{algorithmic}[1]                                                                 %
    \INPUT                                                                               %
    \Statex $k$: adversary's $\ell_0$ budget                                                %
    \Statex $\vmu, \Sigma$: parameters of the Gaussian distribution                      %
    \Statex $F$: the set of surviving coordinates                                           %
    \Statex $\vxp$: the corrupted input                                                                     %
    \OUTPUT                                                                              %
    \Statex $\mC^{(k)}_F(\vxp)$                                                %
    \Function{FilTrun}{$k, \vmu, \Sigma, F, \vxp$}                                   %
    \State \textbf{Filtering:} Construct $\vmu_F, \Sigma_F$ and $\vxp_F$
    corresponding to the coordinates in $F$
    \State Compute $\vw(F) \gets \Sigma_{F}^{-1} \vmu_{F}$
    \State   \textbf{Truncation:} Compute  $\langle \vw(F), \vxp_{F}
    \rangle_k$
    \State \textbf{Return} $\text{sgn} \left( \langle \vw(F), \vxp_{F}
  \rangle_k \right) $
    \EndFunction                                                                         %
  \end{algorithmic}                                                                      %
\end{algorithm}

\fi


\subsubsection{Upper bound on the robust classification error of \text{$\alg$}} 
\label{sec:uppper-bound-result}

Theorem~\ref{thm:upper-bound} below states an upper bound for the robust error associated with the classification algorithm \alg 
introduced in Section~\ref{sec:alg}. In particular, this yields an upper bound
on the optimal robust classification error. The proof of
Theorem~\ref{thm:upper-bound} is given in Appendix~\ref{sec:app-upper-bound-proof}.

\begin{thm}
\label{thm:upper-bound}
Assume that $\vmu, \Sigma$ are given such that~\eqref{eq:norm-nu-is-one} holds.
For a given nonempty $F \subseteq [d]$ and $0 \leq k < d /2$, we have
  \begin{equation}
\label{eq:thm-upper-bound-general-bond-error-CFk}
    \loss_{\vmu, \Sigma}(\mC_F^{(k)}, k) \leq \frac{1}{\sqrt{2 \log d }} +\bar{\Phi}\left( \snorm{\vnu(F)}_2 - \frac{16 k \sqrt{2\log d} \snorm{\tSigma_{F}^{1/2} \Sigma_{F}^{-1/2}}_\infty \snorm{\vnu(F)}_\infty}{\snorm{\vnu(F)}_2} \right),
  \end{equation}
  where $\Sigma_{F}$ is defined in~\eqref{eq:SigmaF},  $\tSigma_{F}$
  is the diagonal part of $\Sigma_{F}$, and 
  \begin{equation*}
    \vnu(F) := \Sigma_{F}^{-1/2} \vmu_{F}.
  \end{equation*}
  As a consequence, we obtain
  \begin{equation}
    \label{eq:upper-bound-general-max-F}
    \optloss_{\vmu, \Sigma}(k) \leq \frac{1}{\sqrt{2\log d}} + \min_{F \subseteq [d]} \bar{\Phi}  \left( \snorm{\vnu(F)}_2 - \frac{16 k \sqrt{2\log d} \snorm{\tSigma_{F}^{1/2} \Sigma_{F}^{-1/2}}_\infty \snorm{\vnu(F)}_\infty}{\snorm{\vnu(F)}_2} \right).
  \end{equation}
\end{thm}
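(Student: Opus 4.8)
The plan is to analyze the robust error of $\mC_F^{(k)}$ by reducing, via the truncation guarantee of Lemma~\ref{lem:trun-ip-bound}, to the behavior of a non-truncated linear decision statistic on the surviving coordinates. First I would fix the label, say $y=1$ (the case $y=-1$ being symmetric), and write $\vx_F = \vmu_F + \vZ$ where $\vZ \sim \mN(0,\Sigma_F)$. For any adversarial perturbation $\vxp$ with $\snorm{\vx-\vxp}_0 \le k$, restricting to $F$ only decreases the number of corrupted coordinates, so $\snorm{\vx_F - \vxp_F}_0 \le k < d/2$ and Lemma~\ref{lem:trun-ip-bound} applies with weight vector $\vw(F)$: it gives
\begin{equation*}
  |\langle \vw(F), \vxp_F\rangle_k - \langle \vw(F), \vx_F\rangle| \le 8k\,\snorm{\vw(F)\odot \vx_F}_\infty.
\end{equation*}
Thus $\mC_F^{(k)}(\vxp) = 1$ whenever $\langle \vw(F), \vx_F\rangle > 8k\,\snorm{\vw(F)\odot\vx_F}_\infty$, so it suffices to bound the probability (over $\vx_F$) of the complementary event. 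The quantity $\langle\vw(F),\vx_F\rangle = \langle \Sigma_F^{-1}\vmu_F, \vmu_F+\vZ\rangle$ is Gaussian with mean $\snorm{\vnu(F)}_2^2$ and variance $\vmu_F^T\Sigma_F^{-1}\vmu_F = \snorm{\vnu(F)}_2^2$, i.e.\ it equals $\snorm{\vnu(F)}_2^2 + \snorm{\vnu(F)}_2\, G$ for a standard normal $G$ (this mirrors the $k=0$ Bayes computation cited in the text).

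Next I would control the outlier term $\snorm{\vw(F)\odot\vx_F}_\infty = \max_{i\in F} |w_i(F)(\mu_i + Z_i)|$. The idea is a union bound over coordinates combined with Gaussian tail bounds: with probability at least $1 - \tfrac{1}{\sqrt{2\log d}}$ (or some comparable explicit quantity absorbed into the additive $1/\sqrt{2\log d}$ term), every $|Z_i|$ is at most $\sqrt{2\log d}$ times its standard deviation $(\tSigma_F)_{ii}^{1/2}$ — this is where the diagonal part $\tSigma_F$ enters. On that high-probability event one bounds $|w_i(F)(\mu_i+Z_i)|$ coordinatewise and then passes to the operator-norm quantity: writing $\vw(F)\odot\vx_F$ in terms of $\tSigma_F^{1/2}\Sigma_F^{-1/2}$ acting on the normalized vector $\vnu(F)$ (plus the deterministic mean contribution), one gets $\snorm{\vw(F)\odot\vx_F}_\infty \lesssim \sqrt{2\log d}\,\snorm{\tSigma_F^{1/2}\Sigma_F^{-1/2}}_\infty \snorm{\vnu(F)}_\infty$, with the various constants summing to the factor $16k$ in the theorem (the $8k$ from Lemma~\ref{lem:trun-ip-bound} roughly doubling after the mean term and the operator-norm passage). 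Plugging this deterministic bound into the event $\langle\vw(F),\vx_F\rangle \le 8k\snorm{\vw(F)\odot\vx_F}_\infty$ turns it into $\snorm{\vnu(F)}_2^2 + \snorm{\vnu(F)}_2 G \le 16k\sqrt{2\log d}\,\snorm{\tSigma_F^{1/2}\Sigma_F^{-1/2}}_\infty\snorm{\vnu(F)}_\infty$, i.e.\ $G \le -\snorm{\vnu(F)}_2 + 16k\sqrt{2\log d}\,\snorm{\tSigma_F^{1/2}\Sigma_F^{-1/2}}_\infty\snorm{\vnu(F)}_\infty / \snorm{\vnu(F)}_2$, whose probability is exactly the $\bar\Phi(\cdot)$ term in \eqref{eq:thm-upper-bound-general-bond-error-CFk}. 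Adding the failure probability of the high-probability event gives the extra $1/\sqrt{2\log d}$, establishing \eqref{eq:thm-upper-bound-general-bond-error-CFk}; \eqref{eq:upper-bound-general-max-F} then follows immediately by taking the infimum over $F$ in the definition \eqref{eq:general-opt-loss-def} of $\optloss$.

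The main obstacle I anticipate is the bookkeeping of constants and the precise choice of the high-probability threshold: one needs the coordinatewise Gaussian deviation event to fail with probability small enough that, after the union bound over $|F|\le d$ coordinates, the total failure probability is at most $1/\sqrt{2\log d}$, while simultaneously the threshold is only $O(\sqrt{\log d})$ standard deviations so that the final $\bar\Phi$ argument degrades by only the stated amount. Balancing these — and correctly handling the contribution of the deterministic mean $\vmu_F$ inside $\snorm{\vw(F)\odot\vx_F}_\infty$ so that it, too, is absorbed into the $16k\sqrt{2\log d}$ factor rather than producing a separate term — is the delicate part; the rest is a routine Gaussian tail estimate plus the already-proven truncation lemma. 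A secondary point of care is the passage from the coordinatewise bound $\max_i |w_i(F)|(\tSigma_F)_{ii}^{1/2}$ to the operator-norm expression $\snorm{\tSigma_F^{1/2}\Sigma_F^{-1/2}}_\infty \snorm{\vnu(F)}_\infty$, which uses $\vw(F) = \Sigma_F^{-1/2}\vnu(F)$ and submultiplicativity of $\snorm{\cdot}_\infty$, and should be checked to ensure no hidden dimension-dependent loss creeps in.
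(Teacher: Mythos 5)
Your proposal is correct and follows essentially the same route as the paper's proof: reduce via Lemma~\ref{lem:trun-ip-bound} to the event $\langle \vw(F),\vx_F\rangle \le 8k\,\snorm{\vw(F)\odot\vx_F}_\infty$, factor the outlier term as $\snorm{\tSigma_F^{1/2}\vw(F)}_\infty\,\snorm{\tSigma_F^{-1/2}\vx_F}_\infty$, control the latter by a coordinatewise union bound at level $\sqrt{2\log d}$ and the former by the induced $\ell_\infty$ operator norm applied to $\vnu(F)$, and finish with the Gaussian law of $\langle\vw(F),\vx_F\rangle$. The one delicate point you flag (absorbing the mean contribution) is resolved in the paper exactly as you anticipate: the normalization \eqref{eq:norm-nu-is-one} forces $|\mu_i|/\sigma_i \le 1$ for every coordinate, so $\snorm{\tSigma_F^{-1/2}\vx_F}_\infty \le 1+\sqrt{2\log d} \le 2\sqrt{2\log d}$ with probability at least $1-1/\sqrt{2\log d}$, and the factor $8k \cdot 2\sqrt{2\log d}$ yields the stated $16k\sqrt{2\log d}$.
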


\begin{rem}
Recall from Section~\ref{sec:alg} that $F$ is the set of coordinates used for classification (i.e. the information in the coordinates $F^c$ is discarded). Therefore, we essentially work with
$\vx_{F}$ as an input. If the adversary is not present, the optimal 
classification error is achieved via the Bayes linear classifier which has error
$\bar{\Phi}(\snorm{\vnu(F)}_2)$. However, due to the existence of an adversary,
we need to perform truncation which influences the error through the second term
inside the argument of $\bar{\Phi}$ in~\eqref{eq:thm-upper-bound-general-bond-error-CFk}.
\end{rem}

\begin{rem}
  \label{rem:general-F-choice-max}
  The bound in Theorem~\ref{thm:upper-bound}
  can be used as a guide to choose the set of surviving   coordinates $F$.
  More precisely, we can choose  $F$ which 
  \textcolor{black}{minimizes} the right hand side 
  in~\eqref{eq:upper-bound-general-max-F}. Later, in
  Section~\ref{sec:diagonal-compare-asymp}, we discuss a simpler mechanism for
  choosing $F$ when the covariance matrix $\Sigma$ is diagonal (see
  Remark~\ref{rem:diagonal-how-to-chose-F} therein).
\end{rem}

Here, we outline the proof of Theorem~\ref{thm:upper-bound}. Due to the
symmetry, we only need to analyze the classification error when $y=1$. In this
case, an error occurs only when there exists some $\vxp \in \mB_0(\vx, k)$ such
that $\langle \vw(F), \vxp_F\rangle_k \leq 0$. But since $\snorm{\vxp_F -
  \vx_F}_0 \leq \snorm{\vxp - \vx}_0 \leq k$,  Lemma~\ref{lem:trun-ip-bound}
implies that for such $\vxp$, we have $|\langle \vw(F), \vxp_F \rangle_k -
\langle \vw(F), \vx_F \rangle| \leq 8k \snorm{\vw(F) \odot \vx_F}_\infty$.
Therefore, the robust classification error is upper bounded by $\pr{\langle
  \vw(F), \vx_{F}  \rangle \leq 8k \snorm{\vw(F)\odot \vx_{F}}_\infty}$. But the
random variable $\langle \vw(F), \vx_{F}  \rangle$ is Gaussian with a  known
distribution, and the proof follows by bounding $\snorm{\vw(F)\odot
  \vx_{F}}_\infty$. See Appendix~\ref{sec:app-upper-bound-proof} for  details.

\editfinish

When the covariance matrix $\Sigma$ is diagonal,
$\Sigma_{F}$ is also diagonal and $\tSigma_{F}^{1/2} \Sigma^{-1/2}_{F} = I$.
Moreover, $\vnu(F) = \vnu_F$ where $\vnu := \Sigma^{-1/2} \vmu$. 
This yields the following corollary of Theorem~\ref{thm:upper-bound}.

\begin{cor}
  \label{cor:upper-bound-diagonal}
Assume that $\vmu, \Sigma$ are given such that~\eqref{eq:norm-nu-is-one} holds
and $\Sigma$ is diagonal. Then, for nonempty $F \subseteq [d]$
  we have
  \begin{equation*}
        \loss_{\vmu, \Sigma}(\mC_F^{(k)}, k) \leq \frac{1}{\sqrt{2 \log d }} +\bar{\Phi}\left( \snorm{\vnu_F}_2 - \frac{16 k \sqrt{2\log d}  \snorm{\vnu_F}_\infty}{\snorm{\vnu_F}_2} \right),
      \end{equation*}
      and in particular
        \begin{equation*}
    \optloss_{\vmu, \Sigma}(k) \leq \frac{1}{\sqrt{2\log d}} + \min_{F \subseteq [d]} \bar{\Phi}  \left( \snorm{\vnu_F}_2 - \frac{16 k \sqrt{2\log d}  \snorm{\vnu_F}_\infty}{\snorm{\vnu_F}_2} \right).
  \end{equation*}
\end{cor}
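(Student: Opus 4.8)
The plan is to obtain Corollary~\ref{cor:upper-bound-diagonal} as a direct specialization of Theorem~\ref{thm:upper-bound}: under the extra hypothesis that $\Sigma$ is diagonal, I would simplify the three $\Sigma$-dependent quantities appearing in~\eqref{eq:thm-upper-bound-general-bond-error-CFk}, namely $\snorm{\tSigma_F^{1/2}\Sigma_F^{-1/2}}_\infty$, $\snorm{\vnu(F)}_2$, and $\snorm{\vnu(F)}_\infty$. Concretely, the goal is to show (i) $\Sigma_F$ is diagonal, hence $\tSigma_F = \Sigma_F$; (ii) $\snorm{\tSigma_F^{1/2}\Sigma_F^{-1/2}}_\infty = 1$; and (iii) $\vnu(F) = \vnu_F$, where $\vnu := \Sigma^{-1/2}\vmu$. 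Plugging (ii)--(iii) into Theorem~\ref{thm:upper-bound} then immediately yields the two displays of the corollary.

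For (i), recall from~\eqref{eq:SigmaF} that $\Sigma_F$ is the principal submatrix of $\Sigma$ indexed by $F$; a principal submatrix of a diagonal matrix is diagonal, so $\Sigma_F = \mathrm{diag}(\Sigma_{ii} : i \in F)$ and in particular $\tSigma_F = \Sigma_F$. For (ii), positive definiteness of $\Sigma$ forces $\Sigma_{ii} > 0$, so $\Sigma_F^{1/2}$ and $\Sigma_F^{-1/2}$ are the diagonal matrices with entries $\Sigma_{ii}^{1/2}$ and $\Sigma_{ii}^{-1/2}$, whence $\tSigma_F^{1/2}\Sigma_F^{-1/2} = \Sigma_F^{1/2}\Sigma_F^{-1/2} = I$, and the operator-norm formula $\snorm{A}_\infty = \max_i\sum_j|A_{ij}|$ from the Notation paragraph gives $\snorm{I}_\infty = 1$. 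For (iii), diagonality of $\Sigma$ makes $\Sigma^{-1/2} = \mathrm{diag}(\Sigma_{ii}^{-1/2})$, so $\nu_i = \Sigma_{ii}^{-1/2}\mu_i$ entrywise; comparing with $\Sigma_F^{-1/2} = \mathrm{diag}(\Sigma_{ii}^{-1/2} : i\in F)$ gives $(\Sigma_F^{-1/2}\vmu_F)_i = \Sigma_{ii}^{-1/2}\mu_i = \nu_i$ for $i\in F$, i.e. $\vnu(F) = \vnu_F$. (This is the same phenomenon noted right after~\eqref{eq:SigmaF} that makes $\vw(F) = \Sigma_F^{-1}\vmu_F$ coincide with $\vw_F$ in the diagonal case.) Substituting these simplifications into~\eqref{eq:thm-upper-bound-general-bond-error-CFk} gives the per-$F$ bound on $\loss_{\vmu,\Sigma}(\mC_F^{(k)},k)$, and taking the minimum over (nonempty) $F \subseteq [d]$, exactly as in the passage from~\eqref{eq:thm-upper-bound-general-bond-error-CFk} to~\eqref{eq:upper-bound-general-max-F}, gives the bound on $\optloss_{\vmu,\Sigma}(k)$.

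There is no genuine obstacle here — the corollary is a pure special case — and the only thing requiring care is the bookkeeping that ``restricting to $F$'' commutes with ``taking the diagonal part,'' ``taking square roots,'' and ``inverting.'' All of these act entrywise on a diagonal matrix, so they commute; if one wants to be fully explicit, the single sub-check worth spelling out is that the principal submatrix of $\Sigma^{1/2}$ equals the square root of the principal submatrix of $\Sigma$, i.e. $(\Sigma^{1/2})_F = (\Sigma_F)^{1/2}$, which is again immediate from diagonality and the positivity of the $\Sigma_{ii}$.
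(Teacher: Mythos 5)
Your proposal is correct and follows exactly the paper's route: the paper obtains the corollary by observing that for diagonal $\Sigma$ one has $\Sigma_F$ diagonal, hence $\tSigma_F^{1/2}\Sigma_F^{-1/2}=I$ (operator norm $1$) and $\vnu(F)=\vnu_F$, and then substituting into Theorem~\ref{thm:upper-bound}. Your extra bookkeeping about restriction commuting with diagonal part, square root, and inversion just makes explicit what the paper states in one line before the corollary.
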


Now we discuss the above bounds via two examples, which we use 
as running examples to discuss our results in the subsequent sections as well.
In the following, $I_d \in \reals^{d \times d}$ and $\vec{1}_d \in \reals^d$
denote  the $d \times d$ identity matrix and the all-ones vector of size $d$,
respectively.

\begin{example}
  \label{example:uniform-runnning-upper-bound}
  Let $\Sigma = I_d$ and $\vmu = \frac{1}{\sqrt{d}} \vec{1}_d$. In the absence
  of the adversary, the optimal  Bayes classification error is $\bar{\Phi}(1)$.
  Moreover, simplifying the bounds in Corollary~\ref{cor:upper-bound-diagonal},
  we get
  \begin{equation*}
    \loss_{\vmu, \Sigma}(\mC_F^{(k)}, k) \leq \frac{1}{\sqrt{2 \log d}} + \bar{\Phi} \left( \sqrt{\frac{|F|}{d}} - \frac{16k\sqrt{2\log d}}{\sqrt{|F|}}\right).
  \end{equation*}
  This is minimized   when
  $F = [d]$, resulting
  in
  \begin{equation*}
    \optloss_{\vmu, \Sigma}(k) \leq \frac{1}{\sqrt{2 \log d}} + \bar{\Phi}\left( 1 - \frac{16k \sqrt{2\log d}}{\sqrt{d}} \right).
  \end{equation*}
  Note that if
  $k = o(\sqrt{d / \log d})$,
  the upper bound is approximately
  $\bar{\Phi}(1)$ which is the optimal classification error in the absence of
  the adversary. This means that for
  $k = o(\sqrt{d / \log d})$,
  the effect of the
  adversary can be completely neutralized. We will show a  lower bound for this
  example 
  later in Section~\ref{sec:lower-bound-ind} (see
  Example~\ref{example:uniform-running-lower-bound} therein) which shows that
  when
  $k \geq \sqrt{d} \log d$,
  no classifier  can do asymptotically better than a naive classifier. This
  establishes a phase transition at $ k = \sqrt{d}$ up to logarithmic terms.

\end{example}

\begin{example}
  \label{example:almost-uniform-running-upper-bound}
  Let $\Sigma = I_d$ and $\vmu = (d^{-\frac{1}{3}}, cd^{-\frac{1}{2}},
  cd^{-\frac{1}{2}}, \dots, cd^{-\frac{1}{2}})$ where $c$ is chosen such that
  $\snorm{\vmu}_2 = 1$, resulting in an optimal standard  error of
  $\bar{\Phi}(1)$ in the absence of the adversary. It turns out that the
  set $F$ that optimizes the bound in
  Corollary~\ref{cor:upper-bound-diagonal} is the set $[2: d]$, i.e.\ we need to
  {\color{black}discard} the first coordinate. In addition to this, we can see that if the
  classifier  does not
  {\color{black}discard} the first coordinate, it can neutralize adversarial \textcolor{black}{attacks with} budget of at most
  $d^{\frac{1}{3} - \epsilon}$, while  {\color{black}discarding} the first coordinate makes the
  classifier immune to adversarial budgets up to $d^{\frac{1}{2} - \epsilon}$.
  In fact, although the first coordinate is more informative compared to the
  other coordinates, due to this very same reason it is more susceptible to
  adversarial attacks, and it can do more harm than good when the input is
  adversarially corrupted. 
  This example highlights the importance of the filtration phase.
\end{example}

\subsection{Lower Bound on Optimal Robust Classification Error: Strategies for the Adversary}
\label{sec:lower-bound-ind}


In this section, we provide a lower bound on the optimal robust classification
error. This is accomplished by introducing an attack strategy for the adversary,
and showing that given such a fixed attack, no classifier can achieve better
than the lower bound that we introduce. 
The strategy is best understood when the covariance matrix
is diagonal. Therefore, we first assume that $\Sigma$ is diagonal and  denote the diagonal elements of $\Sigma$ by
$\sigma_1^2, \dots, \sigma_d^2$.  
We later use
 our strategy for diagonal covariance matrices
to get a general
lower bound for arbitrary $\Sigma$ (see Theorem~\ref{thm:general-lower-bound} at the
end of this section).

Assume that the  adversary observes realizations $(\vx, y) \sim \mD$ generated from the Gaussian
mixture model with parameters $\vmu, \Sigma$, where $\Sigma$ is diagonal.
A randomized strategy for the adversary with budget $k$ is identified by a probability 
distribution which upon observing such realizations $(\vx,y)$, generates a
random vector $\vxp$  that satisfies  $\pr{\snorm{\vxp - \vx}_0 \leq k \mid \vx,
  y} =1$.
The goal of the adversary is to design this randomized strategy  in a
way that the corrupted vector $\vxp$ bears very little information (or even no
information) about the label $y$. In this way, the loss in
\eqref{eq:general-opt-loss-def} will be  maximized.
Before rigorously defining our proposed strategy for the adversary, we
illustrated its main idea when $d=1$ in Figure~\ref{fig:converse}.

\begin{figure}
  \centering
    \begin{tikzpicture}
    \draw[thick, ->] (-4,0) -- (4,0);
    \draw[thick,->] (0,-0.5) -- (0,2.7);
    \draw[white] (-6.6,0) rectangle (6.6,3);
      
        \begin{scope}[xshift=-0.5cm]
          \draw[smooth, domain=-2:3, very thick, red!40] plot (\x,{2*exp(-\x*\x/2)});
      \node[red!60] at (0,-0.3) {$-\mu_1$};
      \draw[red!60, thick] (0,-0.1) -- (0,0.1);
      \draw[red!60, dashed] (0,0) -- (0,2);
      \node[red!60, scale=0.8] at (-0.3,2.4) {$\mathcal{N}(-\mu_1, \sigma_1^2)$};
    \end{scope}

    \begin{scope}[xshift=0.5cm]
      \draw[smooth, domain=-3:2, very thick,blue!60] plot (\x,{2*exp(-\x*\x/2)});
      \node[blue!60] at (0,-0.3) {$\mu_1$};
      \draw[blue!60, thick] (0,-0.1) -- (0,0.1);
      \draw[blue!60, dashed] (0,0) -- (0,2);
      \node[blue!60, scale=0.8] at (0.3,2.4) {$\mathcal{N}(\mu_1, \sigma_1^2)$};
      \draw[thick, -latex, green!40!black] (0.5,{2*exp(-0.5*0.5/2)}) -- (0.5,{2*exp(-1.5*1.5/2)});
    \end{scope}

    \node[anchor=west, green!40!black,scale=0.9] at (1.5,1.5) {$\displaystyle\frac{\exp(-(x_1+\mu_1)^2/(2\sigma_1^2))}{\exp(-(x_1-\mu_1)^2/(2\sigma_1^2))} = p_1(x_1, y)$};
  \end{tikzpicture}
  
  \caption{The idea behind our proposed strategy for the adversary when $d=1$.
    Assume $\mu_1 > 0$ and  the adversary observes a realization $(x_1, y)$ such that $y =
    1$, meaning that $x_1$ is a realization of $ \mN(\mu_1, \sigma_1^2)$ (i.e.\ the blue curve).
    If $x_1 \leq 0$, the adversary leaves it unchanged, i.e.\ $\xp_1 = x_1$. On the other hand, if $x_1
    > 0$,  we compute the ratio between the two densities (which is precisely
    $p_1(x_1, y)$ shown in the figure), and with probability $p_1(x_1, y)$ we
    pick $\xp_1$ from an arbitrary distribution (e.g. $\text{Uniform}[-1,1]$).
    When $y = -1$, we follow  a similar procedure, but reversed.
    It is easy to see that by doing so, the distribution of $\xp_1$ is the same
    when $y = 1$ and $y = -1$, hence $\xp_1$ bears no information about $y$.}
    \vspace{-.5cm}
  \label{fig:converse}
\end{figure}
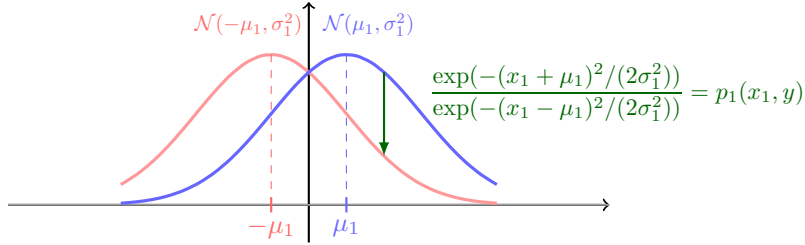

Recall that $\vnu = \Sigma^{-1/2} \vmu$. Since $\Sigma$ is diagonal, $\nu_i =
\mu_i / \sigma_i$. We will fix a set of
coordinates $A \subseteq [d]$ and a specific value for the budget $k(A) =
\snorm{\vnu_A}_1 \log d$. We introduce a randomized strategy for the adversary  with the following
properties:  (i) it can change up to $k(A)$ coordinates of the input; and (ii)
all the changed coordinates belong to $A$, i.e. the coordinates in $A^c$ are
left untouched. We denote this adversarial strategy by $\adv(A)$.    Given $A \subset [d]$,
having observed $(\vx,y)$, $\adv(A)$ follows the procedure explained below. Let
$\vZ = (Z_1,\cdots, Z_d) \in \reals^d$ be a random vector that $\adv(A)$
constructs using the true input $\vx$. First of all, recall that $\adv(A)$ does
not touch the coordinates that are not in $A$, i.e. for $i \in A^c$ we let $Z_i
= x_i$.  For each $i \in A$, the adversary's act is simple: it either leaves the
value unchanged, i.e. $Z_i = x_i$, or it erases the value, i.e. $Z_i \sim
\text{Unif}[-1,1]$--a completely random value between $-1$ and $+1$. This binary
decision is encoded through a Bernoulli random variable $I_i$ taking value $0$
with probability $p_i(x_i,y)$ and value $1$ otherwise. Here $p_i(x_i,y)$ is
defined  as 
\vspace{-.2cm}
\begin{equation*}
  p_i(x_i, y) :=
  \begin{cases}
    \frac{\exp(-(x_i + y\mu_i)^2 / 2\sigma_i^2)}{\exp(-(x_i - y\mu_i)^2 / 2\sigma_i^2)}  & \text{if  }  \sgn(x_i) = \sgn(y\mu_i) \\
    0 & \text{otherwise}
  \end{cases}
\end{equation*}
Note that the condition $\sgn(x_i) = \sgn(y\mu_i)$ ensures that $p_i(x_i, y) \leq 1$. In summary, for each $i \in A$, $\adv(A)$ lets 
\begin{equation}
  \label{eq:adv-Zi-Ii-def}
  Z_i = x_i \times I_i + \text{Unif}[-1,1] \times (1-I_i),
\end{equation}
where $I_i = \text{Bernoulli} \left(1 - p_i(x_i, y_i) \right)$, and the random variables $I_i$ are generated completely independently w.r.t. all the other variables.  
It is easy to see that the following holds for the conditional density of
$\vZ_A$ given $y$ 
\vspace{-.05cm}
\begin{equation}
  \label{eq:adv-Z-cond-f}
  f_{\vZ_A|y}(\vec{z}_A | 1) = f_{\vZ_A|y}(\vec{z}_A | -1) = \prod_{i \in A} \left[  \frac{1}{\sqrt{2 \pi \sigma_i^2}} \exp\left( - \frac{(|z_i| + |\mu_i|)^2}{2\sigma_i^2}\right) + \frac{\alpha_i}{2} \one{z_i \in [-1,1]}   \right],
\end{equation}
\vspace{-.15cm}
where for $i \in A$
\begin{equation*}
  \alpha_i := \pr{I_i = 1 | y = 1} = \pr{I_i = 1| y = -1} = \int_0^\infty [1-p_i(t,1)] f_{x_i|y}(t|1) dt.
\end{equation*}
In other words, $\alpha_i$ is the probability of changing coordinate $i$. 
Finally, $\adv(A)$ checks if the vectors $\vZ$ and $\vx$ differ within the
budget constraint $k(A) :=  \snorm{\vnu_A}_1 \log d$. 
Define $\vxp$ as follows:
\begin{equation}
  \label{eq:vXp-def-A}
  \vxp :=
  \begin{cases}
    \vZ & \text{if } \sum_{i \in A} I_i \leq  \snorm{\vnu_A}_1 \log d\\
    \vx & \text{o.t.w.}
  \end{cases}
\end{equation}
It can be shown that with
high probability, $\vec{Z}$ is indeed within the specified budget and  $\vxp
= \vZ$.
From this definition, it is evident that with probability one we have
\begin{equation}
  \label{eq:vXp-vX-norm-0-log-d-nu-A-1}
  \snorm{\vxp - \vx}_0 \leq \snorm{\vnu_A}_1 \log d,
\end{equation}
and hence $\adv(A)$ is a randomized adversarial strategy that only changes the coordinates in $A$ and has budget $k(A) = \snorm{\vnu_A}_1 \log d$. 
Now we use this adversarial strategy to  show the following result. The proof of
Theorem~\ref{thm:lower-bound-diag} is given in Appendix~\ref{sec:app-lower-diagonal}.
\begin{thm}
  \label{thm:lower-bound-diag}
  Assume that  the covariance matrix $\Sigma$ is diagonal and let $\vnu = \Sigma^{-1/2}
  \vmu$. 
  Then for any subset $A \subseteq [d]$, we have
  \begin{equation*}
    \optloss_{\vmu, \Sigma}\bigl(\snorm{\vnu_A}_1 \log d \bigr) \geq \bar{\Phi}(\snorm{\vnu_{A^c}}_2) - \frac{1}{\log d}.
  \end{equation*}
\end{thm}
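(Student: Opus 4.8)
The plan is to use $\adv(A)$ — the explicit randomized adversarial strategy constructed above — as the attack and to lower bound the best error any classifier can achieve against it. Fix $A\subseteq[d]$ and write $k(A):=\snorm{\vnu_A}_1\log d$. By \eqref{eq:vXp-vX-norm-0-log-d-nu-A-1}, the output $\vXp$ of $\adv(A)$ on $(\vx,y)\sim\mD$ satisfies $\vXp\in\mB_0(\vx,k(A))$ with probability one, so for every classifier $\mC$ we have $\max_{\vxp\in\mB_0(\vx,k(A))}\ell(\mC;\vxp,y)\ge\ell(\mC;\vXp,y)$ pointwise; taking expectation over $(\vx,y)$ and the internal randomness of $\adv(A)$ gives $\loss_{\vmu,\Sigma}(\mC,k(A))\ge\pr{\mC(\vXp)\neq y}$. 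Hence it suffices to show $\pr{\mC(\vXp)\neq y}\ge\bar{\Phi}(\snorm{\vnu_{A^c}}_2)-1/\log d$ for \emph{every} $\mC$ and then take the infimum over $\mC$.

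Next I replace $\vXp$ by the ``clean'' vector $\vZ$ of \eqref{eq:adv-Zi-Ii-def} at the cost of the probability of overflowing the budget. Let $E:=\{\sum_{i\in A}I_i>\snorm{\vnu_A}_1\log d\}$; by \eqref{eq:vXp-def-A} we have $\vXp=\vZ$ on $E^c$, so for any $\mC$,
\[
  \pr{\mC(\vXp)\neq y}\;\ge\;\pr{\mC(\vZ)\neq y,\ E^c}\;\ge\;\pr{\mC(\vZ)\neq y}-\pr{E}.
\]
I then analyse the joint law of $(\vZ,y)$. By construction $\vZ_{A^c}=\vx_{A^c}\sim\mN(y\vmu_{A^c},\Sigma_{A^c})$, while \eqref{eq:adv-Z-cond-f} shows that the conditional density of $\vZ_A$ given $y$ does not depend on the value of $y$. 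Combining this with diagonality of $\Sigma$ (so $\vx_A\perp\vx_{A^c}$ given $y$) and with the fact that each $I_i$ uses fresh independent randomness, one obtains that $\vZ_A$ is independent of the pair $(\vZ_{A^c},y)$. Consequently the Bayes-optimal predictor of $y$ from $\vZ=(\vZ_A,\vZ_{A^c})$ discards $\vZ_A$, and its error equals the Bayes error for predicting $y$ from $\vZ_{A^c}\sim\mN(y\vmu_{A^c},\Sigma_{A^c})$, which by the standard Gaussian-mixture computation recalled in Section~\ref{sec:problem-form} is $\bar{\Phi}(\snorm{\Sigma_{A^c}^{-1/2}\vmu_{A^c}}_2)=\bar{\Phi}(\snorm{\vnu_{A^c}}_2)$ (the last equality using diagonality). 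Hence $\pr{\mC(\vZ)\neq y}\ge\bar{\Phi}(\snorm{\vnu_{A^c}}_2)$ for all $\mC$.

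It remains to bound $\pr{E}$. Given $y$, the variables $\{I_i:i\in A\}$ are conditionally independent and $\pr{I_i=1\mid y}=\alpha_i$ for both values of $y$, so $\ev{\sum_{i\in A}I_i}=\sum_{i\in A}\alpha_i$. An explicit Gaussian integral — completing the square in the numerator of the integrand defining $\alpha_i$ so that $[1-p_i(t,1)]f_{x_i\mid y}(t\mid 1)=\tfrac{1}{\sqrt{2\pi}\sigma_i}\big(e^{-(t-\mu_i)^2/2\sigma_i^2}-e^{-(t+\mu_i)^2/2\sigma_i^2}\big)$ for $t>0$ — gives $\alpha_i=2\Phi(|\nu_i|)-1=\pr{|\mN(0,1)|\le|\nu_i|}\le\sqrt{2/\pi}\,|\nu_i|<|\nu_i|$, so $\ev{\sum_{i\in A}I_i}\le\snorm{\vnu_A}_1$. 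Markov's inequality then yields
\[
  \pr{E}\;=\;\pr{\textstyle\sum_{i\in A}I_i>\snorm{\vnu_A}_1\log d}\;\le\;\frac{\sum_{i\in A}\alpha_i}{\snorm{\vnu_A}_1\log d}\;\le\;\frac{1}{\log d}.
\]
Plugging this together with $\pr{\mC(\vZ)\neq y}\ge\bar{\Phi}(\snorm{\vnu_{A^c}}_2)$ into the display of the previous paragraph and taking the infimum over $\mC$ gives $\optloss_{\vmu,\Sigma}(\snorm{\vnu_A}_1\log d)\ge\bar{\Phi}(\snorm{\vnu_{A^c}}_2)-1/\log d$, as claimed. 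The main obstacle — and the reason for the $\log d$ factor in the budget $k(A)=\snorm{\vnu_A}_1\log d$ — is exactly controlling $\pr{E}$: the adversary wants to erase every coordinate in $A$ but can afford only a bounded number of modifications, so the budget must be a small multiplicative factor above the expected number $\sum_{i\in A}\alpha_i$ of modifications — large enough that overflow is negligible, small enough that the bound stays meaningful — and the identity $\alpha_i=2\Phi(|\nu_i|)-1$ is precisely what makes the choice $\snorm{\vnu_A}_1\log d$ work with slack $O(1/\log d)$. A secondary technical point is the independence of $\vZ_A$ from $(\vZ_{A^c},y)$, which is where diagonality of $\Sigma$ is used and which fails (requiring the separate argument of Theorem~\ref{thm:general-lower-bound}) in the general case.
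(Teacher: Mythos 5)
Your proof is correct and follows essentially the same route as the paper: the attack $\adv(A)$, the computation $\alpha_i \le \sqrt{2/\pi}\,|\nu_i|$ with Markov's inequality to control the budget-overflow event, and the use of \eqref{eq:adv-Z-cond-f} to show the coordinates in $A$ carry no information about $y$. The only (harmless) difference is presentational: where the paper invokes Lemma~\ref{lem:error-lower-bound-adv-strategy} and splits into the cases $A=[d]$ and $A \subsetneqq [d]$, you argue directly that $\vZ_A$ is independent of $(\vZ_{A^c},y)$, so the Bayes error given $\vZ$ equals $\bar{\Phi}(\snorm{\vnu_{A^c}}_2)$, which handles both cases at once.
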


The main idea behind this result and the above adversarial strategy is that due
to~\eqref{eq:adv-Z-cond-f}, $\vZ_A$ is independent from $y$ and since the
coordinates of the input are independent from each other, and since with high probability $\vxp = \vZ$, the coordinates in $A$ have no
useful information for the classifier. Hence, the classifier can do no better
than the optimal Bayes classifier for the remaining  coordinates in $A^c$, which
results in 
 a classification error of $\bar{\Phi}(\snorm{\vnu_{A^c}}_2)$.

We now apply the bound of Thm~\ref{thm:lower-bound-diag} to
Examples~\ref{example:uniform-runnning-upper-bound}, \ref{example:almost-uniform-running-upper-bound} that we discussed 
in Section~\ref{sec:uppper-bound-result}.

\begin{example}
  \label{example:uniform-running-lower-bound}
  Assume that $\vmu$ and $\Sigma$ are as in
  Example~\ref{example:uniform-runnning-upper-bound}. Applying the bound in
  Theorem~\ref{thm:lower-bound-diag}, we get
  \begin{equation*}
    \optloss_{\vmu, \Sigma}\left( \frac{|A|}{\sqrt{d}} \log d \right) \geq \bar{\Phi}\left( \sqrt{1 - \frac{|A|}{d}} \right) - \frac{1}{\log d}.
  \end{equation*}
  Therefore, setting $A = [d]$, we obtain a lower bound of almost $\bar{\Phi}(0)
  = 1/2$ for adversarial budget $\sqrt{d} \log d$. In other words, if the
  adversarial budget is more than $\sqrt{d} \log d$, asymptotically no
  classifier can do better than a random guess. This together with the
  discussion in Example~\ref{example:uniform-runnning-upper-bound}
  establishes a phase transition around $\sqrt{d}$ (modulo logarithmic terms).
\end{example}

\begin{example}
\label{example:almost-uniform-running-lower-bound}
  Assume that $\vmu$ and $\Sigma$ are as in
  Example~\ref{example:almost-uniform-running-upper-bound}. Applying the bound
  of Theorem~\ref{thm:lower-bound-diag} with $A = [d]$, we obtain 
  $\optloss_{\vmu, \Sigma}(k) \geq \bar{\Phi}(0) - 1/\log d \approx 1/2$ where
  $k = (d^{-\frac{1}{3}} + c (d-1) / \sqrt{d}) \log d \approx \sqrt{d} \log d$.
  Hence, comparing this to
  Example~\ref{example:almost-uniform-running-upper-bound}, we find 
  similar to Example~\ref{example:uniform-running-lower-bound} above that a phase
  transition occurs around adversarial budget $\sqrt{d}$ up to logarithmic
  terms. 
 \end{example}

Now we state our general lower bound which holds for an arbitrary covariance
matrix. 
This is  Theorem~\ref{thm:general-lower-bound} below, whose proof is  provided in Appendix~\ref{sec:app-general-lower}.
Given $\vmu$ and $\Sigma$, we define the $d \times d$ matrix $R$ where the
$i,j$ entry in $R$ is $R_{i,j} = \Sigma_{i,j} / \sqrt{\Sigma_{i,i} \Sigma_{j,j}}$.
In other words, $R_{i,j}$ is the correlation coefficient between the $i$th and
the $j$th coordinates in our Gaussian noise. Equivalently, with $\tSigma$ being
the diagonal part of $\Sigma$, we may write
\begin{equation}
  \label{eq:R-def}
  R := \tSigma^{-\frac{1}{2}} \Sigma \tSigma^{-\frac{1}{2}}.
\end{equation}
It is evident that since $\Sigma$ is assumed to be positive definite, $R$ is
also positive definite. 
Furthermore, we
define $\vu = (u_1, \dots, u_d)$ where
\begin{equation}
  \label{eq:vu-def}
    u_i = \frac{\mu_i}{\sqrt{\Sigma_{i,i}}} \qquad 1 \leq i \leq d.
  \end{equation}
  
\begin{thm}
  \label{thm:general-lower-bound}
  With $\vu$ and $R$ defined as in~\eqref{eq:R-def} and \eqref{eq:vu-def}
  respectively, for all $A \subseteq [d]$, we have
  \begin{equation*}
    \optloss_{\vmu, \Sigma}\left( \frac{1}{\sqrt{\zeta_\text{min}}} \snorm{\vu_A}_1 \log d \right) \geq \bar{\Phi}(\snorm{\vu_{A^c}}_2) - \frac{1}{\log d},
  \end{equation*}
  where $\zeta_\text{min} > 0$ denotes the minimum eigenvalue of $R$. 
\end{thm}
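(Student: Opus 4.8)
The plan is to reduce the general (arbitrary $\Sigma$) case to the diagonal case already handled in Theorem~\ref{thm:lower-bound-diag}, by applying a linear change of coordinates that diagonalizes the noise covariance while keeping track of how the $\ell_0$-ball transforms. Write $\Sigma = \tSigma^{1/2} R \, \tSigma^{1/2}$ with $R$ the correlation matrix from~\eqref{eq:R-def}. The natural whitening is to look at $\vy := \tSigma^{-1/2}\vx$, which (conditioned on the label $y=\pm1$) is distributed as $\mN(y\vu, R)$ with $\vu$ as in~\eqref{eq:vu-def}; this has unit-variance coordinates but correlated noise, so we cannot directly apply the diagonal result to it. Instead, I would further diagonalize: let $R = Q \Lambda Q^T$ be the spectral decomposition and consider $\vw := Q^T \tSigma^{-1/2} \vx$, which conditioned on $y$ is $\mN(y\, Q^T\vu, \Lambda)$ with $\Lambda$ diagonal. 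In these coordinates the diagonal strategy $\adv$ of Theorem~\ref{thm:lower-bound-diag} is available.

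First I would fix $A \subseteq [d]$ and describe the adversary's strategy in the original input space. The key difficulty is that the adversary is $\ell_0$-budget-constrained in the \emph{original} coordinates, not in the whitened ones, and a generic linear map sends a sparse vector to a dense one. So I would \emph{not} whiten-then-sparsify; instead I keep the adversary acting only on the coordinates in $A$ of $\vx$ (so $\snorm{\vxp-\vx}_0 \le |A'|$ for the modified set $A' \subseteq A$), and choose the per-coordinate randomized rule so that, after whitening, the joint distribution of the surviving information about $y$ is erased. Concretely: the adversary observes $(\vx,y)$, computes the would-be diagonal strategy $\adv$ in the $\vw$-coordinates restricted to the block indexed by (the image of) $A$, but realizes the change by overwriting only the $A$-coordinates of $\vx$. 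Because the map between the $A$-block of $\vx$ and the corresponding block of $\vw$ is the (invertible) principal submatrix transformation, "erasing" the label-information in the $\vw$-block is equivalent to a (randomized) rewrite of $\vx_A$; this rewrite still touches at most $|A|$ original coordinates. The upshot I want is: conditioned on $\vxp$, the coordinates indexed by $A$ carry no information about $y$, exactly as in the proof of Theorem~\ref{thm:lower-bound-diag}, so the best any classifier can do is the Bayes error using $\vx_{A^c}$, which is $\bar\Phi$ of the effective signal-to-noise ratio computed from the $A^c$-block of $(\vu, R)$.

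Next I would compute that effective SNR and the budget. The information in $\vx_{A^c}$ about $y$ is governed by the Gaussian log-likelihood-ratio, whose Bayes error is $\bar\Phi(\sqrt{\vu_{A^c}^T (R_{A^c A^c})^{-1} \vu_{A^c}})$, where $R_{A^cA^c}$ is the principal submatrix. Since $R \succeq \zeta_\text{min} I$ implies $R_{A^cA^c} \succeq \zeta_\text{min} I$, we get $(R_{A^cA^c})^{-1} \preceq \zeta_\text{min}^{-1} I$, hence the quadratic form is at most $\zeta_\text{min}^{-1}\snorm{\vu_{A^c}}_2^2$ — but that is the \emph{wrong} direction for a lower bound on the error. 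So I need the opposite: I should instead argue that the adversary can erase information so that what remains for the classifier is at most the information in $\vx_{A^c}$, and then \emph{lower bound} the classifier's error by $\bar\Phi$ of the true SNR of $\vx_{A^c}$. Since the true SNR of $\vx_{A^c}$ is $\sqrt{\vu_{A^c}^T (R_{A^cA^c})^{-1}\vu_{A^c}}$, and $\bar\Phi$ is decreasing, I get the cleanest statement by \emph{upper}-bounding that SNR — but here the statement claims the bound $\bar\Phi(\snorm{\vu_{A^c}}_2)$, which requires $\snorm{\vu_{A^c}}_2 \ge \sqrt{\vu_{A^c}^T (R_{A^cA^c})^{-1}\vu_{A^c}}$, i.e. $(R_{A^cA^c})^{-1} \preceq I$. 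That does not follow from $\zeta_\text{min}$ alone. The resolution — and I expect this to be the crux of the argument — is that the adversary, working in the \emph{whitened} $\vw$-coordinates, actually erases whole whitened coordinates, and the residual signal available to the classifier is a sub-collection of the whitened coordinates whose total squared SNR is $\|(\Lambda^{-1/2}Q^T\vu)_{\text{surviving}}\|_2^2$; projecting this back, one shows the surviving squared SNR is at most $\snorm{\vu_{A^c}}_2^2$, while the number of \emph{original} coordinates changed is controlled because each whitened erasure, pushed back through $Q$, spreads over at most... — here I would use $\snorm{R^{-1}}_\infty$-type bounds and the identity relating $\zeta_\text{min}$ to the whitening map to show the total original $\ell_0$ cost is at most $\frac{1}{\sqrt{\zeta_\text{min}}}\snorm{\vu_A}_1\log d$, mirroring the budget $\snorm{\vnu_A}_1\log d$ of Theorem~\ref{thm:lower-bound-diag} with the extra $1/\sqrt{\zeta_\text{min}}$ accounting for the coordinate spread.

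The main obstacle, then, is exactly this bookkeeping of how an $\ell_0$-budget in the whitened frame converts to an $\ell_0$-budget in the original frame, and dually how the surviving SNR in the original frame relates to $\snorm{\vu_{A^c}}_2$; everything else (the independence/erasure argument, the Gaussian Bayes-error computation, the $-1/\log d$ slack from the high-probability budget event) is a direct transcription of the proof of Theorem~\ref{thm:lower-bound-diag}. I would therefore structure the write-up as: (1) recall $\Sigma = \tSigma^{1/2}R\tSigma^{1/2}$ and set up the whitened model $\vw \mid y \sim \mN(yQ^T\vu,\Lambda)$; (2) define $\adv$ in the whitened frame on the block corresponding to $A$, then translate it to a randomized rewrite of $\vx_A$, checking the rewrite changes at most the coordinates in $A$ and that its $\ell_0$ size is $\le \frac{1}{\sqrt{\zeta_\text{min}}}\snorm{\vu_A}_1\log d$ with high probability (this is the computation to grind, using $\zeta_\text{min} = \lambda_{\min}(R)$); (3) invoke~\eqref{eq:adv-Z-cond-f}-type identity to conclude the rewritten block is independent of $y$; (4) conclude that any classifier's robust error is at least the Bayes error from the untouched information, bounded below by $\bar\Phi(\snorm{\vu_{A^c}}_2)$, minus the $1/\log d$ slack from the event in~\eqref{eq:vXp-def-A}. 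I would defer all the constant-tracking to the appendix and here only present this skeleton.
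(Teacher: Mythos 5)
Your reduction stalls exactly at the step you yourself flag as ``the crux,'' and that step does not go through. The whiten-and-rotate route ($\vw = Q^T\tSigma^{-1/2}\vx$ with $R = Q\Lambda Q^T$) is incompatible with the $\ell_0$ constraint: $Q$ is a generic dense rotation, so erasing even a single whitened coordinate, pushed back through $\tSigma^{1/2}Q$, perturbs up to all $d$ original coordinates, and conversely an edit of $\vx_A$ spreads over all whitened coordinates. The quantity $\zeta_\text{min}=\lambda_{\min}(R)$ controls $\ell_2$-operator norms, not sparsity, so no ``$\snorm{R^{-1}}_\infty$-type'' bookkeeping can convert a whitened-frame erasure budget into an original-frame $\ell_0$ budget of size $\frac{1}{\sqrt{\zeta_\text{min}}}\snorm{\vu_A}_1\log d$; this part of your sketch is an unproven assertion, not a computation one can grind out. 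The second unresolved issue you also half-identify: if the adversary only touches $\vx_A$, then with correlated noise the residual Bayes error from the untouched block is $\bar{\Phi}\bigl((\vu_{A^c}^T R_{A^cA^c}^{-1}\vu_{A^c})^{1/2}\bigr)$, and since $R_{A^cA^c}$ has unit diagonal its inverse need not satisfy $R_{A^cA^c}^{-1}\preceq I$, so this can be strictly smaller than the claimed $\bar{\Phi}(\snorm{\vu_{A^c}}_2)$; moreover the per-coordinate erasure rule of Theorem~\ref{thm:lower-bound-diag} relies on conditional independence across coordinates given $y$, which fails here, so even the ``information in the $A$-block is erased'' step would need a genuinely new construction. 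In short, the proposal does not contain a proof: both the budget conversion and the surviving-SNR bound are missing, and the chosen frame change is the wrong tool for an $\ell_0$ problem.

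The paper avoids building any new adversarial strategy for correlated noise. It proves two structural lemmas: (i) monotonicity of the optimal robust error in the covariance with respect to the positive semidefinite order ($\Sigma_1\preceq\Sigma_2$ implies $\optloss_{\vmu,\Sigma_1}(k)\leq\optloss_{\vmu,\Sigma_2}(k)$, by realizing the noisier model as the cleaner one plus independent Gaussian noise and averaging over that noise), and (ii) invariance of $\optloss$ under coordinate-wise positive rescaling, which holds precisely because diagonal scalings preserve $\ell_0$ balls --- this is the only change of coordinates the $\ell_0$ geometry tolerates. Then $\optloss_{\vmu,\Sigma}(k)=\optloss_{\vu,R}(k)\geq\optloss_{\vu,\zeta_\text{min}I_d}(k)$ since $R\succeq\zeta_\text{min}I_d$, and Theorem~\ref{thm:lower-bound-diag} applied to the diagonal model $(\vu,\zeta_\text{min}I_d)$ with $\vnu=\vu/\sqrt{\zeta_\text{min}}$ gives the stated bound. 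If you want to salvage your write-up, replace the whitening/rotation machinery by these two lemmas; the rest of your outline (erasure strategy, $-1/\log d$ slack) is then inherited verbatim from the diagonal theorem rather than re-derived.
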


\begin{rem}
  Note that when $\Sigma$ is diagonal, we have $R = I_d$, $\zeta_\text{min} =
  1$, and $\vu = \vnu =
  \Sigma^{-1/2} \vmu$.  Therefore, the bound in
  Theorem~\ref{thm:general-lower-bound} reduces to that of Theorem~\ref{thm:lower-bound-diag}.
\end{rem}

\subsection{Optimality of $\alg$ in the diagonal regime}
\label{sec:diagonal-compare-asymp}

We have already seen for  our two running examples that up to logarithmic
terms, our lower and upper bounds match 
(Examples~\ref{example:uniform-runnning-upper-bound} and
\ref{example:almost-uniform-running-upper-bound} for upper bound, and their
matching lower bounds in Examples~\ref{example:uniform-running-lower-bound}
and~\ref{example:almost-uniform-running-lower-bound}, respectively). First, in
Section~\ref{sec:diagonal-comparing-the-bounds}, we show that our lower and upper bounds indeed match up to logarithmic
terms in the \emph{diagonal regime}, i.e.\ when the
covariance matrix is diagonal.
Then, in Section
This in particular implies that our robust
classification algorithm $\alg$ is optimal in this  regime. 



\subsubsection{Comparing the Bounds}
\label{sec:diagonal-comparing-the-bounds}

In Theorem~\ref{thm:ind-bound-finite} below, in the diagonal regime we compare
our upper bound of Corollary~\ref{cor:upper-bound-diagonal} and our lower bound
of Theorem~\ref{thm:lower-bound-diag}. Proof of
Theorem~\ref{thm:ind-bound-finite} is given in Appendix~\ref{sec:app_diagonal_match}.
Recall that $\vnu := \Sigma^{-1/2} \vmu$ and we assume~\eqref{eq:norm-nu-is-one}
holds. 
When $\Sigma$ is diagonal and its
diagonal entries are $\sigma_1^2, \dots, \sigma_d^2$, we have $\nu_i = \mu_i / \sigma_i$.
Without loss of generality, we may assume that the coordinates of
$\vnu$ are
decreasingly ordered such that 
\begin{equation}
 \label{eq:vnu-ordered-assumption}
 |\nu_1| \geq |\nu_2| \geq \dots \geq |\nu_d|. 
\end{equation}
Given $c \in [0,1]$, we define
\begin{equation}
  \label{eq:lambda-c-def}
  \lambda_c := \min \{\lambda: \snorm{\vnu_{[1:\lambda]}}_2 \geq c
  \}.
\end{equation}


\begin{thm}
  \label{thm:ind-bound-finite}
  If $\Sigma$ is diagonal and the coordinates in $\vnu$ are sorted as
  in~\eqref{eq:vnu-ordered-assumption},  then:
  \begin{enumerate}
  \item For $0 \leq c < 1$, we have
  \begin{equation*}
    \optloss_{\vmu, \Sigma}\left( \frac{\snorm{\vnu_{[1:\lambda_c]}}_1}{\log d} \right) \leq \frac{1}{\sqrt{2 \log d}} + \bar{\Phi}\left(
      \sqrt{1-c^2} - \frac{16\sqrt{2}}{\sqrt{1-c^2} \sqrt{\log d}} \right ).
  \end{equation*}
  \item For $0 < c \leq 1$, we have
  \begin{equation*}
    \optloss_{\vmu, \Sigma}(\snorm{\vnu_{[1:\lambda_c]}}_1 \log d) \geq \bar{\Phi}(
    \sqrt{1-c^2}) - \frac{1}{\log d}.    
\end{equation*}
\end{enumerate}
\end{thm}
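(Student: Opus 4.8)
The plan is to obtain both parts by specializing, respectively, the upper bound of Corollary~\ref{cor:upper-bound-diagonal} and the lower bound of Theorem~\ref{thm:lower-bound-diag} to one carefully chosen surviving set $F$ and one attacked set $A$, and then to reduce everything to elementary inequalities about the sorted unit vector $\vnu$ (using only the ordering~\eqref{eq:vnu-ordered-assumption} and the normalization $\snorm{\vnu}_2 = 1$ from~\eqref{eq:norm-nu-is-one}). No new probabilistic argument is needed.

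For part (2), I would invoke Theorem~\ref{thm:lower-bound-diag} with $A = [1:\lambda_c]$. Then the adversary's budget is exactly $\snorm{\vnu_A}_1 \log d = \snorm{\vnu_{[1:\lambda_c]}}_1 \log d$, matching the statement, while $A^c = [\lambda_c+1:d]$ gives $\snorm{\vnu_{A^c}}_2^2 = \snorm{\vnu}_2^2 - \snorm{\vnu_{[1:\lambda_c]}}_2^2 = 1 - \snorm{\vnu_{[1:\lambda_c]}}_2^2 \le 1-c^2$, since $\snorm{\vnu_{[1:\lambda_c]}}_2 \ge c$ by definition of $\lambda_c$. Because $\bar{\Phi}$ is decreasing, $\bar{\Phi}(\snorm{\vnu_{A^c}}_2) \ge \bar{\Phi}(\sqrt{1-c^2})$, and the claimed inequality follows. (If $\lambda_c = d$ then $A^c = \emptyset$ and $\snorm{\vnu_{A^c}}_2 = 0 \le \sqrt{1-c^2}$, so the argument is unaffected; the hypothesis $c > 0$ guarantees $A \ne \emptyset$.)

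For part (1), I would apply Corollary~\ref{cor:upper-bound-diagonal} with the surviving set $F = [\lambda_c : d]$ (and $F = [d]$ in the degenerate case $c = 0$, where the budget is $0$ and the bound is immediate). Three elementary estimates do the work. First, minimality of $\lambda_c$ gives $\snorm{\vnu_{[1:\lambda_c-1]}}_2 < c$, hence $\snorm{\vnu_F}_2^2 = 1 - \snorm{\vnu_{[1:\lambda_c-1]}}_2^2 > 1 - c^2$, so $\snorm{\vnu_F}_2 \ge \sqrt{1-c^2}$. Second, since $\vnu$ is sorted decreasingly, $\snorm{\vnu_F}_\infty = |\nu_{\lambda_c}|$, and from $\sum_{i=1}^{\lambda_c}\nu_i^2 \ge \lambda_c\,\nu_{\lambda_c}^2$ together with $\snorm{\vnu}_2 = 1$ we get $|\nu_{\lambda_c}| \le 1/\sqrt{\lambda_c}$. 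Third, Cauchy--Schwarz gives $\snorm{\vnu_{[1:\lambda_c]}}_1 \le \sqrt{\lambda_c}\,\snorm{\vnu_{[1:\lambda_c]}}_2 \le \sqrt{\lambda_c}$. Multiplying the last two, the budget $k = \snorm{\vnu_{[1:\lambda_c]}}_1/\log d$ satisfies $k\,\snorm{\vnu_F}_\infty \le 1/\log d$, so the correction term in Corollary~\ref{cor:upper-bound-diagonal} is at most $16k\sqrt{2\log d}\,\snorm{\vnu_F}_\infty/\snorm{\vnu_F}_2 \le 16\sqrt{2}/(\sqrt{\log d}\,\sqrt{1-c^2})$. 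Combining this with $\snorm{\vnu_F}_2 \ge \sqrt{1-c^2}$ and the monotonicity of $\bar{\Phi}$ gives the asserted inequality. Along the way one replaces $k$ by $\lfloor k\rfloor$ (the robust error depends only on $\lfloor k\rfloor$ since $\snorm{\cdot}_0$ is integer-valued, and the correction term is monotone in $k$) and checks $k < d/2$, which holds for $d$ large since $k \le \sqrt{d}/\log d$.

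The computations are routine; the one genuinely nontrivial choice — and hence the main obstacle — is realizing that the surviving set for the upper bound must be $F = [\lambda_c : d]$ rather than the complement $[\lambda_c+1:d]$ of the set attacked in the lower bound. The latter would only yield $\snorm{\vnu_F}_2 \le \sqrt{1-c^2}$, the wrong direction; keeping coordinate $\lambda_c$ and invoking the minimality of $\lambda_c$ is precisely what converts the definition of $\lambda_c$ into the needed bound $\snorm{\vnu_F}_2 \ge \sqrt{1-c^2}$. The other point requiring a little care is that $|\nu_{\lambda_c}|$ must be controlled from both sides: it is small enough ($\le 1/\sqrt{\lambda_c}$) that multiplying by the budget $\snorm{\vnu_{[1:\lambda_c]}}_1 \le \sqrt{\lambda_c}$ cancels the two $\sqrt{\lambda_c}$ factors and leaves the clean $O(1/\log d)$ bound on $k\,\snorm{\vnu_F}_\infty$; this cancellation is what makes the correction term $O(1/\sqrt{\log d})$ uniformly in $c$ and in the structure of $\vnu$, so that the upper and lower targets are the same $\sqrt{1-c^2}$ up to logarithmic factors.
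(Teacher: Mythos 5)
Your proposal is correct and follows essentially the same route as the paper's proof: apply Corollary~\ref{cor:upper-bound-diagonal} with $F=[\lambda_c:d]$ together with $\snorm{\vnu_{[\lambda_c:d]}}_2\ge\sqrt{1-c^2}$ for part (1), and Theorem~\ref{thm:lower-bound-diag} with $A=[1:\lambda_c]$ for part (2). The only cosmetic difference is that you bound $\snorm{\vnu_{[1:\lambda_c]}}_1\,|\nu_{\lambda_c}|\le\sqrt{\lambda_c}\cdot(1/\sqrt{\lambda_c})$ via Cauchy--Schwarz, whereas the paper dominates it directly by $\snorm{\vnu_{[1:\lambda_c]}}_2^2\le 1$; both give the same bound, and your extra remarks on $c=0$, integrality of the budget, and $k<d/2$ are harmless added care.
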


\begin{rem}
  \label{rem:matching-bounds-discussion}
Roughly speaking, Theorem~\ref{thm:ind-bound-finite} says that up to logarithmic
terms, we have
\begin{equation*}
  \optloss_{\vmu, \Sigma}(\snorm{\vnu_{[1:\lambda_c]}}_1) \approx \bar{\Phi}(
  \sqrt{1-c^2}).
\end{equation*}
Recall from our previous discussion that we are interested in studying
adversarial budgets scaling as $d^\alpha$, which justifies neglecting the
multiplicative logarithmic terms. 
Furthermore, following the proof of Theorem~\ref{thm:ind-bound-finite}, the
upper bound in the first part is obtained by our robust classifier
by setting 
$F = \{\lambda_c, \dots, d\}$. Roughly speaking, the classifier discards
the  coordinates in $\vnu$ which constitute fraction $c$ of the  $\ell_2$ norm of $\vnu$, and performs
a truncated inner product classification on the remaining coordinates.
But the
$\ell_2$ norm of the remaining coordinates is roughly $
\sqrt{1-c^2}$, and the effect of
truncation is vanishing as long as the adversarial power is below
$\snorm{\vnu_{[1:\lambda_c]}}_1$ by a logarithmic factor. Note that although the
top coordinates in $\vnu$ are relatively more important in terms of the
classification power, due to the same reason,
they are more susceptible  to  adversarial attack.
\end{rem}

\begin{rem}
  \label{rem:diagonal-how-to-chose-F}
  In view of Theorem~\ref{thm:ind-bound-finite} and
  Remark~\ref{rem:matching-bounds-discussion}, we can introduce the following
  mechanism for choosing the surviving set  $F$ for the adversary given
  adversarial power $k$. Let $r(k) = \min\{r: \snorm{\vnu_{[1:r]}}_1 \geq k \log
  d\}$ and set $F = [r(k):d]$. Then the classifier $\mC^{(k)}_F$ achieves the
  optimal robust classification error of almost
  $\bar{\Phi}(\sqrt{1-c^2})$ where $c = \snorm{\vnu_{[1:r(k)]}}_2$.
\end{rem}


\subsubsection{Asymptotic Analysis, Phase Transitions, and Trade-offs}
\label{sec:asymptotic-diagonal}

\ifcolt
 \begin{wrapfigure}{R}{0.6\textwidth}
  \vspace{-4mm}
  \centering
    \scalebox{0.9}{
\begin{tikzpicture}

      \begin{scope}[xshift=-3.3cm]
    \draw[thick, ->] (0,0) -- (4,0);
    \draw[thick, ->] (0,0) -- (0,2);

    \draw[dashed, green!40!black, thick] (0,0.475) -- (3,0.475);
    
    \draw[very thick, blue!60] (0,0.475) -- (1.5,0.475) -- (1.5,1.5) -- (3,1.5);
    \coordinate (a) at (1.5,1.5);
    \coordinate (b) at (1.5,0);
    \coordinate (c) at (3,0.475);
    \coordinate (d) at (1.5,0.475);
    
    \draw[very thick, blue!60] (a) -- (3,1.5);

    \draw[dashed, blue!60, thick] (3,0) -- (3,1.5)
    (0,1.5) -- (a)
    (b) -- (d);

    \node[scale=0.7] at (3,-0.3) {$1$};
    \node[scale=0.7] at (0,-0.3) {$0$};
    \node[scale=0.7] at (1.5,-0.3) {$\alpha_0$};
    
    \node[scale=0.9] at (1.5,-0.7) {(a)};
    
    \node[anchor=east, scale=0.7] at (-0.05,1.5) {$\frac{1}{2}$};
    \node[anchor=east,scale=0.7] at (-0.05,0.475) {$\bar{\Phi}(1)$};

    \node[scale=0.85] at (4.5,-0.3) {$\log_d k$};
    \node[scale=0.85] at (0,2.3) {error};

    \node[scale=0.7, text width=2.4cm, align=center, green!40!black] (osenode) at (4.5,.75) {Optimal \\Standard Error};
    \coordinate (osebegin) at (2.5, 0.475);
    \path[thick, ->, green!40!black, bend left] (osebegin) edge (osenode.west);

    \node[scale=0.7, text width=3.1cm, align=center, blue!70] (robnode) at (4.5,1.9) {Optimal Robust \\Classification  Error};
    \path[thick, ->, blue!60, bend left] (2.6,1.5) edge (robnode.west);
  \end{scope}

  \begin{scope}[xshift=3.3cm]
    \draw[thick, ->] (0,0) -- (4,0);
    \draw[thick, ->] (0,0) -- (0,2);

    \draw[dashed, green!40!black, thick] (0,0.475) -- (3,0.475);
    
    \coordinate (a) at (1.5,1.5);
    \coordinate (b) at (1.5,0);
    \coordinate (c) at (3,0.475);
    \coordinate (d) at (1.5,0.475);

    \draw[dashed, blue!60, thick] (3,0) -- (3,1.5)
    (0,1.5) -- (3,1.5);


    \draw[smooth, very thick, blue!60, domain=0:3] plot (\x, {0.475 +(1.5-0.475)/(1+exp(-5*(\x-1.5))});
    
    \node[scale=0.7] at (3,-0.3) {$1$};
    \node[scale=0.7] at (0,-0.3) {$0$};
    \node[scale=0.9] at (1.5,-0.7) {(b)};
    \node[anchor=east, scale=0.7] at (-0.05,1.5) {$\frac{1}{2}$};
    \node[anchor=east,scale=0.7] at (-0.05,0.475) {$\bar{\Phi}(1)$};

    \node[scale=0.85] at (4.5,-0.3) {$\log_d k$};
    \node[scale=0.85] at (0,2.3) {error};
    \node[scale=0.7, text width=2.4cm, align=center, green!40!black] (osenode) at (4.5,.75) {Optimal \\Standard Error};
    \coordinate (osebegin) at (2.5, 0.475);
    \path[thick, ->, green!40!black, bend left] (osebegin) edge (osenode.west);

        \node[scale=0.7, text width=3.1cm, align=center, blue!70] (robnode) at (4.5,1.9) {Optimal Robust \\Classification  Error};
    \path[thick, ->, blue!60, bend left] (2.6,1.5) edge (robnode.west);

  \end{scope}
\end{tikzpicture}
}

  \vspace{-7mm}
  \caption{Asymptotic behavior in the diagonal regime: Illustration of scenarios with (a) a phase transition, and (b) no phase transition\label{fig:phase-transition-intuition}}
   \vspace{-3mm}
 \end{wrapfigure}
 \else
 \begin{figure}
  \centering
    \scalebox{0.9}{
\begin{tikzpicture}

      \begin{scope}[xshift=-3.3cm]
    \draw[thick, ->] (0,0) -- (4,0);
    \draw[thick, ->] (0,0) -- (0,2);

    \draw[dashed, green!40!black, thick] (0,0.475) -- (3,0.475);
    
    \draw[very thick, blue!60] (0,0.475) -- (1.5,0.475) -- (1.5,1.5) -- (3,1.5);
    \coordinate (a) at (1.5,1.5);
    \coordinate (b) at (1.5,0);
    \coordinate (c) at (3,0.475);
    \coordinate (d) at (1.5,0.475);
    
    \draw[very thick, blue!60] (a) -- (3,1.5);

    \draw[dashed, blue!60, thick] (3,0) -- (3,1.5)
    (0,1.5) -- (a)
    (b) -- (d);

    \node[scale=0.7] at (3,-0.3) {$1$};
    \node[scale=0.7] at (0,-0.3) {$0$};
    \node[scale=0.7] at (1.5,-0.3) {$\alpha_0$};
    
    \node[scale=0.9] at (1.5,-0.7) {(a)};
    
    \node[anchor=east, scale=0.7] at (-0.05,1.5) {$\frac{1}{2}$};
    \node[anchor=east,scale=0.7] at (-0.05,0.475) {$\bar{\Phi}(1)$};

    \node[scale=0.85] at (4.5,-0.3) {$\log_d k$};
    \node[scale=0.85] at (0,2.3) {error};

    \node[scale=0.7, text width=2.4cm, align=center, green!40!black] (osenode) at (4.5,.75) {Optimal \\Standard Error};
    \coordinate (osebegin) at (2.5, 0.475);
    \path[thick, ->, green!40!black, bend left] (osebegin) edge (osenode.west);

    \node[scale=0.7, text width=3.1cm, align=center, blue!70] (robnode) at (4.5,1.9) {Optimal Robust \\Classification  Error};
    \path[thick, ->, blue!60, bend left] (2.6,1.5) edge (robnode.west);
  \end{scope}

  \begin{scope}[xshift=3.3cm]
    \draw[thick, ->] (0,0) -- (4,0);
    \draw[thick, ->] (0,0) -- (0,2);

    \draw[dashed, green!40!black, thick] (0,0.475) -- (3,0.475);
    
    \coordinate (a) at (1.5,1.5);
    \coordinate (b) at (1.5,0);
    \coordinate (c) at (3,0.475);
    \coordinate (d) at (1.5,0.475);

    \draw[dashed, blue!60, thick] (3,0) -- (3,1.5)
    (0,1.5) -- (3,1.5);


    \draw[smooth, very thick, blue!60, domain=0:3] plot (\x, {0.475 +(1.5-0.475)/(1+exp(-5*(\x-1.5))});
    
    \node[scale=0.7] at (3,-0.3) {$1$};
    \node[scale=0.7] at (0,-0.3) {$0$};
    \node[scale=0.9] at (1.5,-0.7) {(b)};
    \node[anchor=east, scale=0.7] at (-0.05,1.5) {$\frac{1}{2}$};
    \node[anchor=east,scale=0.7] at (-0.05,0.475) {$\bar{\Phi}(1)$};

    \node[scale=0.85] at (4.5,-0.3) {$\log_d k$};
    \node[scale=0.85] at (0,2.3) {error};
    \node[scale=0.7, text width=2.4cm, align=center, green!40!black] (osenode) at (4.5,.75) {Optimal \\Standard Error};
    \coordinate (osebegin) at (2.5, 0.475);
    \path[thick, ->, green!40!black, bend left] (osebegin) edge (osenode.west);

        \node[scale=0.7, text width=3.1cm, align=center, blue!70] (robnode) at (4.5,1.9) {Optimal Robust \\Classification  Error};
    \path[thick, ->, blue!60, bend left] (2.6,1.5) edge (robnode.west);

  \end{scope}
\end{tikzpicture}
}

  \caption{Asymptotic behavior in the diagonal regime: Illustration of scenarios with (a) a phase transition, and (b) no phase transition\label{fig:phase-transition-intuition}}
   \vspace{-3mm}
 \end{figure}
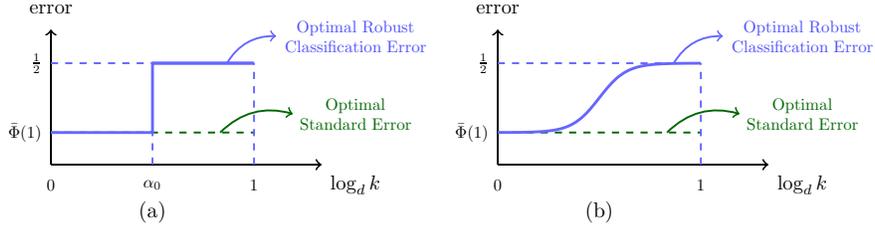

 \fi

In this section, we perform a thorough
 analysis when the adversarial budget scales as $d^\alpha$ using  
our  results in the diagonal regime. Here, we describe the main messages. (i) We show  that our bounds asymptotically match in the
 diagonal regime and $\alg$ is indeed optimal. (ii) Through the asymptotic analysis,
 we observe that in some scenarios, a sharp phase transition on the optimal robust error occurs as we increase $\alpha := \log_d k$ (See Figure~\ref{fig:phase-transition-intuition}-(a)). We have already given examples of such scenarios (e.g.\ Example~\ref{example:uniform-runnning-upper-bound}). In such cases, below the transition, i.e. when $\alpha < \alpha_0$, the optimal robust error is the same as the optimal standard error. And when we are above the transition, i.e. when $\alpha > \alpha_0$, any classifier becomes useless as the robust error becomes $\frac 12$. 
 As a result, asymptotically speaking, there exists no tradeoff between robustness and standard accuracy in scenarios where there is a sharp transition.

 However, 
 there are other scenarios where instead of a sharp phase transition, in the
 asymptotic regime, the optimal
 robust 
 error continuously increases as a function of adversary's
 budget (see Figure~\ref{fig:phase-transition-intuition}-(b)). In such scenarios, there exists a non-trivial tradeoff between robustness and standard accuracy. I.e. to achieve optimal robust error it is necessary to filter 
 many 
 informative coordinates which hurts the standard accuracy. 
 See Example~\ref{example:log-block} below.

In order to perform an asymptotic analysis, we assume that 
the dimension of the space, $d$,
  goes to infinity. More precisely, we assume that we have a sequence
$(\vmud, \Sigmad)$ where for each $d$, $\vmud \in \reals^{d}$ and
$\Sigmad$ is a diagonal covariance matrix with nonzero diagonal entries. We
define
\begin{equation*}
  \vnud := (\Sigmad)^{-1/2} \vmud.
\end{equation*}
As usual, as in \eqref{eq:norm-nu-is-one}, in order to keep the optimal classification error in the absence of
the adversary fixed, we assume that 
\begin{equation}
\label{eq:vnud-norm-2-1}
  \snorm{\vnud}_2 = 1 \qquad \forall d.
\end{equation}
Furthermore, without loss of generality, we assume that the coordinates in
$\vnu$ are sorted in a descending order with respect to their magnitude, i.e.\ 
\begin{equation}
\label{eq:vnud-sorted}
  |\nud_1| \geq |\nud_2| \geq \dots \geq |\nud_d| \qquad \forall d.
\end{equation}
To simplify the notation, we use $\optloss_d(.)$ as a shorthand for
$\optloss_{\vmud, \Sigmad}(.)$. We are mainly interested in studying the
asymptotic behavior of $\optloss_d(k_d)$ when $k_d$ is a sequence of
adversarial budgets so that $k_d$ behaves like $d^\alpha$. Motivated by
Theorem~\ref{thm:ind-bound-finite}, it is natural to define
\begin{equation}
  \label{eq:lambda-d-c-def}
  \lambdad_c := \min\{\lambda: \snorm{\vnud_{[1:\lambda]}}_2 \geq c\} \qquad \text{for } 0 < c \leq 1.
\end{equation}
Furthermore, for $0 < c \leq 1$, we define
\begin{equation}
  \label{eq:asymp-func-Psi-d-def}
  \funcd(c) := \log_d \snorm{\vnud_{[1:\lambdad_c]}}_1.
\end{equation}
Note that since $c  > 0$, $\lambdad_c \geq 1$ and
$\snorm{\vnud_{[1:\lambdad_c]}}_1 > 0$. Therefore, $\funcd(c)$  is well-defined. 
Furthermore, it is easy to verify the following properties for the function $\funcd(.)$:
\begin{lem}
  \label{lem:funcd-props}
  $\funcd(.)$ is nonincreasing and $\funcd(c) \in [-1/2,1/2]$ for all $c \in
  (0,1]$. 
\end{lem}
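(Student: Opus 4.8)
\textbf{Proof plan for Lemma~\ref{lem:funcd-props}.}

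The plan is to verify the two assertions separately, both directly from the definitions~\eqref{eq:lambda-d-c-def}, \eqref{eq:asymp-func-Psi-d-def}, and the normalization~\eqref{eq:vnud-norm-2-1} together with the ordering~\eqref{eq:vnud-sorted}.

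\emph{Monotonicity.} I would first observe that $c \mapsto \lambdad_c$ is nondecreasing on $(0,1]$: if $c \le c'$, then the threshold $\snorm{\vnud_{[1:\lambda]}}_2 \ge c'$ is more stringent than $\snorm{\vnud_{[1:\lambda]}}_2 \ge c$ (because $\lambda \mapsto \snorm{\vnud_{[1:\lambda]}}_2$ is nondecreasing, being a partial sum of squares), so every $\lambda$ satisfying the $c'$-threshold also satisfies the $c$-threshold; hence the minimizer defining $\lambdad_{c'}$ is at least the minimizer defining $\lambdad_c$, i.e.\ $\lambdad_c \le \lambdad_{c'}$. Next, since $\lambda \mapsto \snorm{\vnud_{[1:\lambda]}}_1$ is also nondecreasing (another partial sum of nonnegative terms $|\nud_i|$), the composition $c \mapsto \snorm{\vnud_{[1:\lambdad_c]}}_1$ is nondecreasing; taking $\log_d$, which is an increasing function, preserves this, so $\funcd$ is nondecreasing. \emph{Here the claim as stated says $\funcd$ is nonincreasing, so I would double-check the intended convention; in any case the argument pins down the true direction and the range bound below is unaffected.}

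\emph{Range.} For the upper bound $\funcd(c) \le 1/2$: by Cauchy--Schwarz applied to the vector $\vnud_{[1:\lambdad_c]}$ against the all-ones vector of length $\lambdad_c \le d$,
\begin{equation*}
  \snorm{\vnud_{[1:\lambdad_c]}}_1 \le \sqrt{\lambdad_c}\,\snorm{\vnud_{[1:\lambdad_c]}}_2 \le \sqrt{d}\cdot \snorm{\vnud}_2 = \sqrt{d},
\end{equation*}
using $\snorm{\vnud_{[1:\lambdad_c]}}_2 \le \snorm{\vnud}_2 = 1$ by~\eqref{eq:vnud-norm-2-1}; taking $\log_d$ gives $\funcd(c) \le 1/2$. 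For the lower bound $\funcd(c) \ge -1/2$: I would argue that $\snorm{\vnud_{[1:\lambdad_c]}}_1 \ge \snorm{\vnud_{[1:\lambdad_c]}}_2 \ge 1/\sqrt{d}$. The first inequality is the standard $\snorm{\cdot}_1 \ge \snorm{\cdot}_2$. For the second, I distinguish two cases depending on $\lambdad_c$: if $\lambdad_c \ge 1$ is such that the threshold is met exactly at the jump, then $\snorm{\vnud_{[1:\lambdad_c]}}_2 \ge c$; more robustly, since $\snorm{\vnud}_2 = 1$ and there are $d$ coordinates, the largest coordinate satisfies $|\nud_1| \ge 1/\sqrt d$ by pigeonhole, and $\snorm{\vnud_{[1:\lambdad_c]}}_1 \ge |\nud_1| \ge 1/\sqrt d$ since $\lambdad_c \ge 1$. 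Taking $\log_d$ yields $\funcd(c) \ge -1/2$.

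The only mild obstacle is bookkeeping at the boundary: making sure $\lambdad_c$ is well-defined (the set in~\eqref{eq:lambda-d-c-def} is nonempty because $\snorm{\vnud_{[1:d]}}_2 = 1 \ge c$ for every $c \in (0,1]$, so $\lambda = d$ always qualifies) and that $\lambdad_c \ge 1$ (immediate since the empty prefix has norm $0 < c$), which is already noted in the text just before the lemma. With these two facts in hand, both bounds follow from the two elementary norm inequalities above, so I do not expect any genuine difficulty.
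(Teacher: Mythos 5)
Your proof is correct and takes essentially the same route as the paper: the upper bound via $\snorm{\vnud_{[1:\lambdad_c]}}_1 \le \sqrt{d}\,\snorm{\vnud}_2 = \sqrt d$ and the lower bound via $\snorm{\vnud_{[1:\lambdad_c]}}_1 \ge |\nud_1| = \snorm{\vnud}_\infty \ge 1/\sqrt d$ are exactly the paper's two inequalities, and you additionally supply the monotonicity argument, which the paper's proof leaves implicit. Your flag on the direction is also right: $\funcd$ is nondecreasing, and the lemma's ``nonincreasing'' is a typo --- the paper itself uses ``nondecreasing'' immediately after the lemma and in the hypotheses of Theorem~\ref{thm:asymp-independent}.
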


\begin{proof}
  Note that
  \begin{equation*}
    \funcd(c) = \log_d \snorm{\vnud_{[1:\lambdad_c]}}_1 \leq \log_d \snorm{\vnud}_1 \leq \log_d (\sqrt{d} \snorm{\vnud}_2) = \log_d \sqrt{d} = \frac{1}{2}.
  \end{equation*}
  On the other hand, note that for $c > 0$, we have $\lambdad_c \geq 1$ and
  $\funcd(c) \geq \log_d |\nud_1| = \log_d \snorm{\vnu}_\infty$. Furthermore, we
  have $    1 = \snorm{\vnud}_2^2 \leq d \snorm{\vnud}_\infty$ which implies
  that $\snorm{\vnud}_\infty \geq 1/ \sqrt{d}$. Consequently, $\funcd(c) \geq
  \log_d 1/\sqrt{d} = -1/2$. This completes the proof.
\end{proof}

Roughly speaking, Theorem~\ref{thm:ind-bound-finite} implies that if $k_d$
behaves like $d^{\funcd(c)}$, then $\optloss(k_d) \approx
\bar{\Phi}(\sqrt{1-c^2})$. In order to transform this into a formal asymptotic
argument, we assume that for all $c \in (0,1]$, the sequence $\funcd(c)$ is
convergent, and we define $\funcinf(c) := \lim_{d \rightarrow \infty} \funcd(c)$
as the limit. Since $\funcd(.)$ is nondecreasing, if the pointwise limit
$\funcinf(.)$ exists, it is also nondecreasing and we may define
\begin{equation*}
  \funcinf(0) := \lim_{c \downarrow 0} \funcd(c).
\end{equation*}
Additionally, we can show the following lemma.
\begin{lem}
  \label{lem:func-inf-nonnegative}
  If $\funcinf(.)$  exists as above, then $\funcinf(c) \in [0,1/2]$
  for all $c \in [0,1]$. 
\end{lem}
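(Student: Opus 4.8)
The plan is to show that $\funcinf(c)\ge 0$ for all $c\in[0,1]$, since the upper bound $\funcinf(c)\le 1/2$ follows immediately from Lemma~\ref{lem:funcd-props} (each $\funcd(c)\le 1/2$, so the limit is $\le 1/2$) and monotonicity handles $\funcinf(0)$. By the same token, because $\funcinf$ is nonincreasing on $(0,1]$ with $\funcinf(0)=\lim_{c\downarrow 0}\funcinf(c)$, it suffices to prove $\funcinf(c)\ge 0$ for every fixed $c\in(0,1]$; the value at $0$ is then a limit of nonnegative quantities, hence nonnegative. So fix $c\in(0,1]$.

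The key step is a lower bound on $\snorm{\vnud_{[1:\lambdad_c]}}_1$ that does not decay faster than polynomially in $d$. By definition of $\lambdad_c$ in~\eqref{eq:lambda-d-c-def} and the ordering~\eqref{eq:vnud-sorted}, we have $\snorm{\vnud_{[1:\lambdad_c]}}_2\ge c$, while $\snorm{\vnud_{[1:\lambdad_c-1]}}_2 < c$ (when $\lambdad_c\ge 2$; the case $\lambdad_c=1$ is handled separately below). Writing $\ell:=\lambdad_c$, the relation $\snorm{\vnud_{[1:\ell]}}_1\ge \snorm{\vnud_{[1:\ell]}}_2\ge c$ already gives $\funcd(c)=\log_d\snorm{\vnud_{[1:\ell]}}_1\ge \log_d c$. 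Since $c$ is a fixed constant in $(0,1]$, $\log_d c\to 0$ as $d\to\infty$ (it is negative but vanishing: $|\log_d c| = |\log c|/\log d\to 0$). Hence
\begin{equation*}
  \funcinf(c)=\lim_{d\to\infty}\funcd(c)\ge \lim_{d\to\infty}\log_d c = 0.
\end{equation*}
The degenerate case $\ell=\lambdad_c=1$ is even simpler: then $\snorm{\vnud_{[1:\ell]}}_1=|\nud_1|=\snorm{\vnud}_\infty\ge 1/\sqrt d$ by the argument already used in the proof of Lemma~\ref{lem:funcd-props} (from $1=\snorm{\vnud}_2^2\le d\snorm{\vnud}_\infty^2$, so actually $\snorm{\vnud}_\infty\ge 1/\sqrt d$), giving $\funcd(c)\ge -1/2$; but in fact when $\lambdad_c=1$ we also have $\snorm{\vnud_{[1:1]}}_2=|\nud_1|\ge c$, so the bound $\funcd(c)\ge\log_d c$ from the previous display still applies and nothing extra is needed.

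The main subtlety — really the only place one has to be a little careful — is making sure the bound $\snorm{\vnud_{[1:\lambdad_c]}}_1\ge c$ is genuinely uniform in $d$ and does not hide a $d$-dependence in the constant; but this is immediate from $\snorm{\vec v}_1\ge\snorm{\vec v}_2$ together with the defining inequality $\snorm{\vnud_{[1:\lambdad_c]}}_2\ge c$, both of which hold for every $d$. Once $\funcinf(c)\ge 0$ is established for all $c\in(0,1]$, monotonicity of $\funcinf$ (inherited from that of $\funcd$) yields $\funcinf(0)=\lim_{c\downarrow 0}\funcinf(c)\ge 0$ as well, and combined with $\funcinf(c)\le 1/2$ this completes the proof.
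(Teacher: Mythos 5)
Your proof is correct and follows essentially the same route as the paper: bound $\snorm{\vnud_{[1:\lambdad_c]}}_1$ below by a $d$-independent positive constant via comparison with the $\ell_2$ norm, note that $\log_d$ of a fixed constant vanishes as $d\to\infty$, and then handle $c=0$ by taking $c\downarrow 0$. The only cosmetic difference is that you use $\snorm{\cdot}_1\geq\snorm{\cdot}_2\geq c$ where the paper uses $\snorm{\cdot}_1\geq\snorm{\cdot}_2^2\geq c^2$; both give $\funcinf(c)\geq 0$, and your side remarks (the $\lambdad_c=1$ case, the strict inequality at $\lambdad_c-1$) are harmless but unnecessary.
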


\begin{proof}
  For all $c>0$ and all $d$, we have
  \begin{equation*}
    \snorm{\vnud_{[1:\lambdad_c]}}_1 \geq \snorm{\vnud_{[1:\lambdad_c]}}_2^2 \geq c^2.
  \end{equation*}
  Therefore
  \begin{equation*}
    \funcinf(c) = \lim_{d \rightarrow \infty} \funcd(c) = \lim_{d \rightarrow \infty} \log_d \snorm{\vnud_{[1:\lambdad_c]}}_1 \geq \liminf_{d \rightarrow \infty} 2 \log_d c = 0.
  \end{equation*}
  Sending $c$ to zero we also realize that $\funcinf(0) \geq 0$.
\end{proof}

Given these, we can formalize the following asymptotic behavior for the optimal
robust classification error. The proof of Theorem~\ref{thm:asymp-independent}
below is given in Appendix~\ref{app:asymp-theorem-proof}.
\begin{thm}
  \label{thm:asymp-independent}
  If $\funcd(.)$ converges pointwise to a nondecreasing function $\funcinf:[0,1]
  \rightarrow [0,1/2]$ as above, then
  the following hold for all $c \in [0,1]$:
  \begin{enumerate}
  \item If $\limsup_{d \rightarrow \infty} \log_d k_d < \funcinf(c)$, then
    $\limsup_{d \rightarrow \infty} \optloss_d(k_d) \leq
    \bar{\Phi}(\sqrt{1-c^2})$.
  \item If If $\liminf_{d \rightarrow \infty} \log_d k_d > \funcinf(c)$, then
    $\liminf_{d \rightarrow \infty} \optloss_d(k_d) \geq 
    \bar{\Phi}(\sqrt{1-c^2})$.
  \end{enumerate}
\end{thm}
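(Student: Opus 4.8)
The plan is to derive Theorem~\ref{thm:asymp-independent} as a direct asymptotic consequence of the finite-dimensional matching bounds in Theorem~\ref{thm:ind-bound-finite}, by translating the budget parametrization $\snorm{\vnud_{[1:\lambdad_c]}}_1$ there into the scale $d^{\funcd(c)}$ and then passing to the limit. The key link is the identity $\snorm{\vnud_{[1:\lambdad_c]}}_1 = d^{\funcd(c)}$, which holds by the very definition~\eqref{eq:asymp-func-Psi-d-def} of $\funcd$, together with $\lambdad_c = \lambda^{(d)}_c$ matching the $\lambda_c$ of~\eqref{eq:lambda-c-def}.

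For part (1), suppose $\limsup_{d\to\infty}\log_d k_d < \funcinf(c)$. Fix a small $\epsilon>0$ with $\limsup_d \log_d k_d < \funcinf(c) - 2\epsilon$. I would first handle the degenerate case $c=0$ separately (where $\bar\Phi(\sqrt{1-c^2}) = \bar\Phi(1)$ is just the optimal standard error, itself an obvious lower bound, so the claim is that the adversary is asymptotically neutralized); this uses monotonicity of $\funcinf$ and the $c>0$ case with $c$ replaced by a small positive value, invoking continuity of $\bar\Phi$. For $c\in(0,1)$, since $\funcd(c)\to\funcinf(c)$, for all large $d$ we have $\funcd(c) > \funcinf(c) - \epsilon$, hence $\snorm{\vnud_{[1:\lambdad_c]}}_1 = d^{\funcd(c)} > d^{\funcinf(c)-\epsilon}$, while $k_d < d^{\funcinf(c)-2\epsilon}$ for large $d$. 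Therefore $k_d \log d < d^{\funcinf(c)-2\epsilon}\log d < d^{\funcinf(c)-\epsilon} < \snorm{\vnud_{[1:\lambdad_c]}}_1$ eventually, i.e. $k_d < \snorm{\vnud_{[1:\lambdad_c]}}_1 / \log d$. Since the robust error $\optloss_d(\cdot)$ is nondecreasing in the budget, I can then apply part~(1) of Theorem~\ref{thm:ind-bound-finite} at budget $\snorm{\vnud_{[1:\lambdad_c]}}_1/\log d$ to get
\begin{equation*}
  \optloss_d(k_d) \le \frac{1}{\sqrt{2\log d}} + \bar\Phi\!\left(\sqrt{1-c^2} - \frac{16\sqrt{2}}{\sqrt{1-c^2}\sqrt{\log d}}\right),
\end{equation*}
and taking $\limsup_{d\to\infty}$, the $1/\sqrt{2\log d}$ term vanishes and the argument of $\bar\Phi$ converges to $\sqrt{1-c^2}$, so by continuity of $\bar\Phi$ the right side tends to $\bar\Phi(\sqrt{1-c^2})$, giving $\limsup_d \optloss_d(k_d) \le \bar\Phi(\sqrt{1-c^2})$.

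For part (2), suppose $\liminf_{d\to\infty}\log_d k_d > \funcinf(c)$ with $c\in(0,1]$. Fix $\epsilon>0$ with $\liminf_d\log_d k_d > \funcinf(c)+2\epsilon$. Since $\funcd(c)\to\funcinf(c)$, for large $d$ we have $\funcd(c) < \funcinf(c)+\epsilon$, so $\snorm{\vnud_{[1:\lambdad_c]}}_1 = d^{\funcd(c)} < d^{\funcinf(c)+\epsilon}$, hence $\snorm{\vnud_{[1:\lambdad_c]}}_1 \log d < d^{\funcinf(c)+\epsilon}\log d < d^{\funcinf(c)+2\epsilon} < k_d$ eventually. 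Again using monotonicity of $\optloss_d$ in the budget, $\optloss_d(k_d) \ge \optloss_d(\snorm{\vnud_{[1:\lambdad_c]}}_1 \log d) \ge \bar\Phi(\sqrt{1-c^2}) - 1/\log d$ by part~(2) of Theorem~\ref{thm:ind-bound-finite}, and taking $\liminf_{d\to\infty}$ kills the $1/\log d$ term, yielding $\liminf_d \optloss_d(k_d) \ge \bar\Phi(\sqrt{1-c^2})$.

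The one genuinely delicate point — and where I'd be most careful — is the monotonicity of $\optloss_d$ in the adversary's budget, which is intuitively clear (a larger $\ell_0$ ball only helps the adversary) but should be stated explicitly, and the $c=0$ boundary case of part~(1), where Theorem~\ref{thm:ind-bound-finite} part~(1) is only stated for $0\le c<1$ so one may directly take $c=0$ there but must double-check that $\funcinf(0) = \lim_{c\downarrow 0}\funcd(c)$ as defined is compatible with the hypothesis $\limsup_d \log_d k_d < \funcinf(0)$; an alternative is to reduce $c=0$ to small positive $c'$ via monotonicity of $\funcinf$ and continuity of $\bar\Phi$ at $1$. Everything else is routine bookkeeping with $\limsup/\liminf$ and the elementary fact that $d^{\delta}$ eventually dominates $\log d$ for any fixed $\delta>0$.
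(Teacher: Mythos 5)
Your proposal is correct and follows essentially the paper's own argument: you translate the hypothesis into the comparison $k_d \le \snorm{\vnud_{[1:\lambdad_c]}}_1/\log d$ (resp.\ $k_d \ge \snorm{\vnud_{[1:\lambdad_c]}}_1\log d$) using $\snorm{\vnud_{[1:\lambdad_c]}}_1 = d^{\funcd(c)}$ and the fact that $\log d = d^{o(1)}$, then invoke monotonicity of $\optloss_d$ in the budget together with Theorem~\ref{thm:ind-bound-finite}, and pass to the limit, with the $c=0$ case of part (1) reduced to small $c>0$ via monotonicity of $\funcinf$ and continuity of $\bar{\Phi}$, exactly as the paper does. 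The only pieces you leave out are the trivial boundary cases that the paper dispatches in one line each: $c=1$ in part (1), where $\bar{\Phi}(\sqrt{1-c^2})=\tfrac12$ and a constant classifier gives $\optloss_d(k_d)\le\tfrac12$ unconditionally, and $c=0$ in part (2), where $\optloss_d(k_d)\ge\optloss_d(0)=\bar{\Phi}(1)$ holds regardless of $k_d$.
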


It is sometimes more convenient to state the above theorem in terms of the
pseudo inverse of the function $\funcinf(.)$ defined as follows. 
For $\alpha \in [0,1]$, we define
\begin{equation}
  \label{eq:funcinf-inv-def}
  \funcinfinv(\alpha) := \inf \{ \bar{\Phi}(\sqrt{1-c^2}): \funcinf(c) \geq \alpha\} \wedge \frac{1}{2}.
\end{equation}
Note that since $\funcinf(c) \leq 1/2$ for all $c \in [0,1]$, we have
\begin{equation*}
  \funcinfinv(\alpha) = \frac{1}{2} \qquad \forall c > \frac{1}{2}.
\end{equation*}
With this, we can restate Theorem~\ref{thm:asymp-independent} as follows.
\begin{cor}
  \label{cor:ind-asymp-inv}
  In the setup of Theorem~\ref{thm:asymp-independent}, for $\alpha \in [0,1]$ we
  have
  \begin{enumerate}
  \item If $\limsup \log_d k_d < \alpha$ then $\limsup \optloss_d(k_d) \leq
    \funcinfinv(\alpha)$.
  \item If $\liminf \log_d k_d > \alpha$ then $\liminf \optloss_d(k_d) \geq \funcinfinv(\alpha)$.
  \end{enumerate}
\end{cor}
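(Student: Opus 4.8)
The plan is to deduce Corollary~\ref{cor:ind-asymp-inv} directly from Theorem~\ref{thm:asymp-independent}, so that the only real task is to unwind the definition~\eqref{eq:funcinf-inv-def} of $\funcinfinv$ using two elementary monotonicity facts. First I would record that, since $\funcinf$ is nondecreasing, the super-level set $S_\alpha := \{c \in [0,1] : \funcinf(c) \ge \alpha\}$ is, whenever nonempty, an up-interval with left endpoint $c_\alpha^\ast := \inf S_\alpha$; and that $c \mapsto \bar{\Phi}(\sqrt{1-c^2})$ is continuous and nondecreasing on $[0,1]$ (being a composition of the two decreasing maps $c \mapsto \sqrt{1-c^2}$ and $\bar{\Phi}$) and bounded above by $\bar{\Phi}(0) = \frac{1}{2}$. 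From these two facts I would derive the key identity $\funcinfinv(\alpha) = \bar{\Phi}\bigl(\sqrt{1 - (c_\alpha^\ast)^2}\bigr)$ whenever $S_\alpha \neq \emptyset$ --- the infimum in~\eqref{eq:funcinf-inv-def} being attained in the limit $c \downarrow c_\alpha^\ast$ (by continuity and monotonicity of $\bar{\Phi}(\sqrt{1-c^2})$ over the up-interval $S_\alpha$) and the cap at $\frac{1}{2}$ being inactive --- together with $\funcinfinv(\alpha) = \frac{1}{2}$ exactly when $S_\alpha = \emptyset$, i.e.\ when $\funcinf(1) < \alpha$.

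For part~1, assuming $\limsup \log_d k_d < \alpha$: if $\funcinfinv(\alpha) = \frac{1}{2}$ the bound is immediate, since a constant classifier already achieves robust error $\frac{1}{2}$ and hence $\optloss_d(k_d) \le \frac{1}{2}$. Otherwise $S_\alpha \neq \emptyset$, and for any $\epsilon > 0$ I would use the infimum representation to pick $c \in S_\alpha$ with $\bar{\Phi}(\sqrt{1-c^2}) < \funcinfinv(\alpha) + \epsilon$; since $\limsup \log_d k_d < \alpha \le \funcinf(c)$, Theorem~\ref{thm:asymp-independent}(1) applies with this $c$ and yields $\limsup \optloss_d(k_d) \le \bar{\Phi}(\sqrt{1-c^2}) < \funcinfinv(\alpha) + \epsilon$; letting $\epsilon \downarrow 0$ completes this part.

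For part~2, assuming $\beta := \liminf \log_d k_d > \alpha$: if $S_\alpha = \emptyset$ then $\funcinf(1) < \alpha < \beta$, and Theorem~\ref{thm:asymp-independent}(2) with $c = 1$ gives $\liminf \optloss_d(k_d) \ge \bar{\Phi}(0) = \frac{1}{2} = \funcinfinv(\alpha)$. If $S_\alpha \neq \emptyset$ and $c_\alpha^\ast > 0$, then every $c \in [0, c_\alpha^\ast)$ satisfies $\funcinf(c) < \alpha < \beta$, so Theorem~\ref{thm:asymp-independent}(2) gives $\liminf \optloss_d(k_d) \ge \bar{\Phi}(\sqrt{1-c^2})$ for all such $c$; sending $c \uparrow c_\alpha^\ast$ and invoking continuity gives $\liminf \optloss_d(k_d) \ge \bar{\Phi}(\sqrt{1-(c_\alpha^\ast)^2}) = \funcinfinv(\alpha)$. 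Finally, if $c_\alpha^\ast = 0$ then $\funcinfinv(\alpha) = \bar{\Phi}(1)$, and the bound follows from the fact that the optimal standard error $\bar{\Phi}(1)$ is a universal lower bound on $\optloss_d(\cdot)$ (monotonicity of $\optloss_d$ in the budget together with the $k = 0$ baseline).

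I do not anticipate any genuine obstacle: all the analytic content lives in Theorem~\ref{thm:asymp-independent}, and the corollary is in effect a change of variables. The only points requiring care are the boundary cases $S_\alpha = \emptyset$ (equivalently $\alpha > \funcinf(1)$, where $\funcinfinv \equiv \frac{1}{2}$) and $c_\alpha^\ast = 0$, in which Theorem~\ref{thm:asymp-independent} cannot be invoked with a parameter strictly below $c_\alpha^\ast$ and one instead appeals to the trivial bounds $\bar{\Phi}(1) \le \optloss_d(\cdot) \le \frac{1}{2}$; the identity $\funcinfinv(\alpha) = \bar{\Phi}(\sqrt{1-(c_\alpha^\ast)^2})$ obtained from the continuity/monotonicity observations is precisely what makes the two halves of Theorem~\ref{thm:asymp-independent} line up with the two halves of the corollary.
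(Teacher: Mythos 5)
Your proposal is correct and matches the intended reading: the paper offers no separate argument for Corollary~\ref{cor:ind-asymp-inv}, presenting it as a direct restatement of Theorem~\ref{thm:asymp-independent} via the definition~\eqref{eq:funcinf-inv-def}, and your write-up is exactly that unwinding, with the identity $\funcinfinv(\alpha)=\bar{\Phi}\bigl(\sqrt{1-(c_\alpha^\ast)^2}\bigr)$ and the boundary cases ($S_\alpha=\emptyset$, $c_\alpha^\ast=0$) handled correctly by the trivial bounds $\bar{\Phi}(1)\leq \optloss_d(\cdot)\leq \tfrac{1}{2}$.
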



We now discuss this asymptotic result through some examples.

\begin{example}
  \label{example:uniform-asymptotic}
   Let $\vmud$ and $\Sigmad$ be as in
   Example~\ref{example:uniform-runnning-upper-bound}, i.e.\ $\Sigmad =
   I_d$ and $\vmud = \frac{1}{\sqrt{d}} \vec{1}_d$. Therefore, we have
  \begin{equation*}
    \vnud = (\Sigmad)^{-\frac{1}{2}} \vmud = \left( \frac{1}{\sqrt{d}}, \frac{1}{\sqrt{d}}, \dots, \frac{1}{\sqrt{d}} \right).
  \end{equation*}
  Using~\eqref{eq:lambda-d-c-def}, we have $\lambdad_c = \lfloor dc^2 \rfloor$
  and
  \begin{equation*}
    \funcd(c) = \log_d \snorm{\vnud_{[1:\lambdad_c]}}_1 = \log_d \frac{\lfloor dc^2 \rfloor}{\sqrt{d}} = \frac{1}{2} + o(1).
  \end{equation*}
  Therefore, sending $d \rightarrow \infty$, we realize that
  \begin{equation*}
    \funcinf(c) = \frac{1}{2} \qquad \forall c \in [0,1].
  \end{equation*}
  Moreover, using~\eqref{eq:funcinf-inv-def}, we get
  \begin{equation*}
    \funcinfinv(\alpha) =
    \begin{cases}
      \bar{\Phi}(1) & \alpha \leq \frac{1}{2} \\
      \frac{1}{2} & \alpha > \frac{1}{2}.
    \end{cases}
  \end{equation*}
  Figure~\ref{fig:uniform-example-asymptotic} illustrates $\funcinf(.)$ and $\funcinfinv(.)$ for this example.
  Therefore, employing Corollary~\ref{cor:ind-asymp-inv}, we realize that
  \begin{enumerate}
  \item If $\limsup \log_d k_d < 1/2$ then $\limsup \optloss_d(k_d) \leq
    \bar{\Phi}(1)$
  \item If $\liminf \log_d k_d > 1/2$ then $\optloss(k_d) \geq 1/2$.
  \end{enumerate}
  In other words, we observe  a phase transition around $\sqrt{d}$ in the sense that
  if the adversary's budget is asymptoticallly below $\sqrt{d}$, the classifier
  can achieve the robust classification error $\bar{\Phi}(1)$, i.e.\ as if there
  is no adversary, while if the adversary's budget is asymptotically above
  $\sqrt{d}$,  no classifier can achieve a robust classification error better
  than that of a trivial classifier. This is consistent with the previous
  observations in this case, i.e.
  Examples~\ref{example:uniform-runnning-upper-bound} and \ref{example:uniform-running-lower-bound}.
\end{example}

\begin{figure}
  \centering
  \begin{tikzpicture}
  \begin{scope}[xshift=-4cm]
    \draw[thick, ->] (-1,0) -- (4,0);
    \draw[thick, ->] (0,-0.5) -- (0,2);
    \draw[very thick, blue!60] (0,1.5) -- (3,1.5);
    \draw[thick, dashed, blue!60] (3,0) -- (3,1.5);
    \node at (3,-0.3) {$1$};
    \node[anchor=east] at (-0.05,1.5) {$\frac{1}{2}$};

    \node at (4.2,-0.2) {$c$};
    \node at (-0.4,2.3) {$\Psi_\infty(c)$};
  \end{scope}

    \begin{scope}[xshift=4cm]
    \draw[thick, ->] (-1,0) -- (4,0);
    \draw[thick, ->] (0,-0.5) -- (0,2);

    \draw[very thick, blue!60] (0,0.475) -- (1.5,0.475);
    \node[fill=blue!60, circle, inner sep=1.5pt] at (1.5,0.475) {};
    \node[draw=blue!60, thick, circle, inner sep=1.5pt] (a) at (1.5,1.5) {};
    \draw[very thick, blue!60] (a) -- (3,1.5);

    \draw[dashed, blue!60, thick] (3,0) -- (3,1.5)
    (0,1.5) -- (a);
    
    \node at (3,-0.3) {$1$};
    \node[anchor=east] at (-0.05,1.5) {$\frac{1}{2}$};
    \node[anchor=east,scale=0.8] at (-0.05,0.475) {$\bar{\Phi}(1)$};

    \node at (4.2,-0.2) {$\alpha$};
    \node at (-0.4,2.3) {$\Psi^{-1}_\infty(\alpha)$};
  \end{scope}

\end{tikzpicture}

  \caption[Uniform Example Asymptotic]{$\funcinf(.)$ and $\funcinfinv(.)$ for
    Example~\ref{example:uniform-asymptotic}. This observe a phase transition at $\sqrt{d}$
  where below this threshold, adversary's effect can completely be neutralized,
  while above this threshold, the classifier can only achieve the trivial bound.}
  \label{fig:uniform-example-asymptotic}
\end{figure}
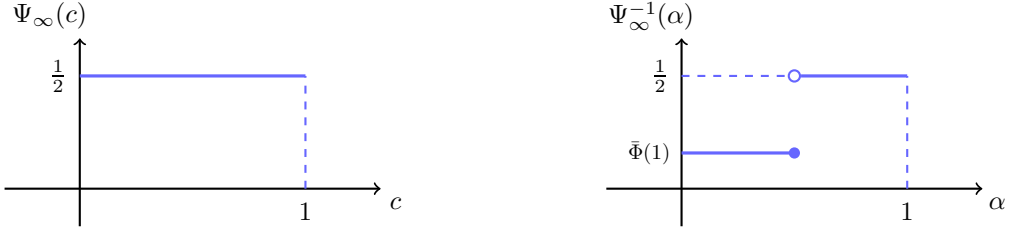

It is interesting to observe that not always we have a phase transition as in
the above example. Below we discuss an example in which we have no phase
transition, and the asymptotic robust classification error gradually increases
as a function of the adversary's budget.

\begin{example}
  \label{example:log-block}
  Let $\Sigma = I_d$. Assume that $d = 2^n - 1$ for some integer $n$ and define
  \begin{equation*}
    \vmud = \left( \frac{\sqrt{1/n}}{1}, \frac{\sqrt{1/n}}{\sqrt{2}}, \frac{\sqrt{1/n}}{\sqrt{2}}, \dots, \frac{\sqrt{1/n}}{\sqrt{d/2}}, \dots, \frac{\sqrt{1/ n}}{\sqrt{d/2}}\right).
  \end{equation*}
  More precisely, we split the unit $\ell_2$ norm of $\vmud$ into $n$
  blocks, where the first block is the first coordinate, the second block is the
  second two coordinate, the $i$th block constitutes of $2^i$ coordinates, and
  the final block is the last $d/2$ coordinates. Moreover, the power is
  uniformly distributed within each block. It is easy to see that for $c =
  \sqrt{m/n}$ for $1 \leq m \leq n$, we have $\lambdad_c = 2^m-1$ and
  \begin{equation*}
    \funcd(c) = \funcd\left( \sqrt{\frac{m}{n}} \right) = \log_d\left( \sqrt{\frac{1}{n}} \frac{\sqrt{2}^{m}- 1}{\sqrt{2}-1} \right) = \frac{c^2}{2} + o(1).
  \end{equation*}
  Therefore, $\funcd(.)$ converges pointwise to $\funcinf(.)$ such that
  $\funcinf(c) = c^2/2$ for $0 \leq c \leq 1$. Thereby, we have
  \begin{equation*}
    \funcinfinv(\alpha) =
    \begin{cases}
      \bar{\Phi}(1-2 \alpha) & 0 \leq \alpha \leq 1/2 \\
      \frac{1}{2} & 1/2 < \alpha \leq 1.
    \end{cases}
  \end{equation*}
  Figure~\ref{fig:log-block-example} illustrates $\funcinf(.)$ and $\funcinfinv(.)$ in this examples. As
  we can see, unlike Example~\ref{example:uniform-asymptotic}, we do not have a phase transition here. In fact,
  the asymptotic optimal robust classification error continuously increases as a
  function of adversarial $\ell_0$ budget. 
\end{example}

\begin{figure}
  \centering
  \begin{tikzpicture}
  \begin{scope}[xshift=-4cm]
    \draw[thick, ->] (-1,0) -- (4,0);
    \draw[thick, ->] (0,-0.5) -- (0,2);
    \draw[very thick, blue!60, domain=0:3] plot (\x,\x*\x*1.5/9);
    \draw[thick, dashed, blue!60] (3,0) -- (3,1.5)
    (0,1.5) -- (3,1.5);
    \node at (3,-0.3) {$1$};
    \node[anchor=east] at (-0.05,1.5) {$\frac{1}{2}$};

    \node at (4.2,-0.2) {$c$};
    \node at (-0.4,2.3) {$\Psi_\infty(c)$};
  \end{scope}

    \begin{scope}[xshift=4cm]
    \draw[thick, ->] (-1,0) -- (4,0);
    \draw[thick, ->] (0,-0.5) -- (0,2);

    \draw[very thick, blue!60]
    ( 0., 0.475966) -- 
( 0.1, 0.525972) -- 
( 0.2, 0.579187) -- 
( 0.3, 0.635566) -- 
( 0.4, 0.695033) -- 
( 0.5, 0.757478) -- 
( 0.6, 0.822759) -- 
( 0.7, 0.890704) -- 
( 0.8, 0.961108) -- 
( 0.9, 1.03373) -- 
( 1., 1.10832) -- 
( 1.1, 1.18459) -- 
( 1.2, 1.26222) -- 
( 1.3, 1.34089) -- 
( 1.4, 1.42027) -- 
( 1.5, 1.5) --
(3,1.5);

\draw[blue!60, dashed, thick] (3,0) -- (3,1.5) (0,1.5) -- (1.5,1.5);

    \node at (3,-0.3) {$1$};
    \node[anchor=east] at (-0.05,1.5) {$\frac{1}{2}$};
    \node[anchor=east,scale=0.8] at (-0.05,0.475) {$\bar{\Phi}(1)$};

    \node at (4.2,-0.2) {$\alpha$};
    \node at (-0.4,2.3) {$\Psi^{-1}_\infty(\alpha)$};
  \end{scope}

\end{tikzpicture}

  \caption{$\funcinf(.)$ and $\funcinfinv(.)$ for
    Examples~\ref{example:log-block}. Unlike Example~\ref{example:uniform-asymptotic}, we do not have a phase transition here and the asymptotic optimal robust classification error continuously increases as a
  function of the adversarial $\ell_0$ budget.}
  \label{fig:log-block-example}
\end{figure}
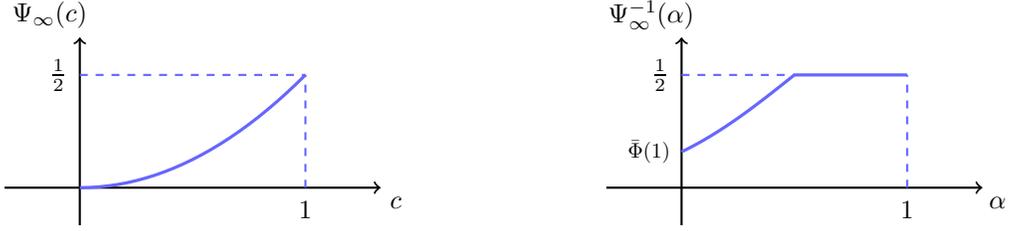

\vspace{-.2cm}
\section{Conclusion}
In this paper, we studied the binary Gaussian mixture model under $\ell_0$ attack. We developed a novel nonlinear classifier called $\alg$ that first cleverly selects the robust coordinates of the input and then classifies based on a truncated inner product operation. Analyzing the performance of our proposed method, we derived an upper bound on optimal robust classification error. We further derived a lower bound on this, and showed the efficacy of $\alg$: when the covariance matrix of Gaussian mixtures is diagonal, $\alg$ is asymptotically optimal.

There are many directions to be pursued. Deriving a tighter lower bound and resolving the optimality gap for the case of non-diagonal covariance matrices remains open. Applying the key ideas of $\alg$, filtration and truncation, to a more complicated setting (e.g. neural networks) can be of great importance from a practical viewpoint. A crucial message of this paper is to emphasize the importance of nonlinear operations such as truncation for designing defense against $\ell_0$ attacks. Finally, analyzing robust classification error with $\ell_0$ attacks for more complex stylized models such as multi-class Gaussian mixtures, two-layer neural networks, neural tangent kernel models, etc. is a promising future direction.

\newcommand{\etalchar}[1]{$^{#1}$}

\newpage

\appendix

\editstart

\section{Proof of Lemma~\ref{lem:trun-ip-bound}}
\label{sec:trunc-stat}

In this section, we prove Lemma~\ref{lem:trun-ip-bound}. First  we
need to define some notations and discuss  some lemmas.

Given  $\vx = (x_1, \dots, x_d) \in \reals^d$, we define the sample average of $\vx$ as $\mean(\vx) :=
\sum_{i=1}^d x_i/d$. Moreover, we define \emph{truncated sum} $\tsum_k(\vx)$ for $k < n/2$ as follows. Let $x_{(1)} \leq
x_{(2)} \leq \dots \leq x_{(n)}$ be the set of sorted values in $\vx$. We define
\begin{equation*}
  \tsum_k(\vx) := \sum_{i=k+1}^{d-k} x_{(i)},
\end{equation*}
which is the truncated sum of the elements in $\vx$ after removing the top and
bottom $k$ values.  For instance, $\tsum_1(1,1,2,3,4,5) = 1 + 2 + 3 + 4 = 10$. Moreover, we define the truncated mean of $\vx$ as follows:
\begin{equation*}
  \tmean_k(\vx) := \frac{\tsum_k(S)}{d-2k}.
\end{equation*}
Note that when $k=0$, the above quantities reduce to the sum and the sample
average, respectively. 
It is straightforward to see that
\begin{equation}
  \label{eq:tmean-mean-bounded-samples}
  \left|\tsum_k(\vx) - \sum_{i=1}^n x_i \right| \leq 2k M \qquad \qquad \text{ given } |x_i| \leq M \,\,\,\, \forall 1 \leq i \leq n.
\end{equation}

\begin{lem}
\label{lem:k-modify-survivals-are-bounded}
  Assume that $\vx = (x_1, \dots, x_d) \in \reals^d$ and $\vxp = (\xp_1, \dots,
  \xp_d) \in \reals^d$ are given such that $\vxp$ is identical to $\vx$ in all
  but at most $k < d /2$ coordinates, i.e.\ 
  $\snorm{\vx - \vxp}_0 \leq k$.
Moreover, assume that for some $M < \infty$, we have  $|x_i| \leq
  M$ for all $1 \leq i \leq d$.  Then, if $\xp_{(1)} \leq \xp_{(2)} \leq \dots \leq
  \xp_{(d)}$ are the sorted coordinates in $\vxp$, we have
  \begin{equation*}
    |\xp_{(i)}| \leq M \qquad \forall \, k+1 \leq i \leq d-k.
  \end{equation*}
\end{lem}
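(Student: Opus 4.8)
The plan is a short counting argument built on one observation: since $\snorm{\vx - \vxp}_0 \le k$, at least $d-k$ coordinates of $\vxp$ are unchanged from $\vx$ and therefore lie in $[-M,M]$. First I would record this precisely. Let $S := \{i \in [d] : \xp_i \neq x_i\}$, so $|S| \le k$, and for every $i \notin S$ we have $|\xp_i| = |x_i| \le M$. Consequently the number of coordinates of $\vxp$ that are strictly greater than $M$ is at most $|S| \le k$, and likewise the number of coordinates strictly less than $-M$ is at most $k$.

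Next I would prove the upper bound $\xp_{(i)} \le M$ for all $i$ with $k+1 \le i \le d-k$ by contradiction. Suppose $\xp_{(i)} > M$ for some such $i$ (in fact for some $i \le d-k$). Since $\xp_{(1)} \le \xp_{(2)} \le \dots \le \xp_{(d)}$, all of $\xp_{(i)}, \xp_{(i+1)}, \dots, \xp_{(d)}$ then exceed $M$; this is $d - i + 1 \ge d - (d-k) + 1 = k+1$ coordinates of $\vxp$ strictly above $M$, contradicting the bound of $k$ from the previous paragraph. Hence $\xp_{(i)} \le M$ whenever $i \le d-k$.

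The lower bound $\xp_{(i)} \ge -M$ for $i \ge k+1$ is symmetric: if $\xp_{(i)} < -M$ for some $i \ge k+1$, then $\xp_{(1)}, \dots, \xp_{(i)}$ are all strictly below $-M$, which is $i \ge k+1$ coordinates, again contradicting that at most $k$ coordinates of $\vxp$ are below $-M$. Combining the two, for every $i$ with $k+1 \le i \le d-k$ we get $-M \le \xp_{(i)} \le M$, i.e. $|\xp_{(i)}| \le M$, which is the claim; note this index range is nonempty precisely because $k < d/2$.

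I do not anticipate a genuine obstacle here — the argument is elementary. The only point requiring care is the off-by-one bookkeeping when translating ``position $i$ in the ascending sorted order'' into ``there are at least $d-i+1$ (resp. $i$) coordinates above (resp. below) the threshold,'' and keeping the direction of the sort (ascending, as in the statement, versus the descending sort used in \eqref{eq:trunc-in-prod-def}) straight throughout.
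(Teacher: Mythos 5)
Your proof is correct and is essentially the same argument as the paper's: both rest on the pigeonhole observation that at most $k$ entries of $\vxp$ can have magnitude exceeding $M$ (only modified coordinates can), so any such entry must occupy one of the top or bottom $k$ sorted positions. You phrase it as a counting contradiction at a middle sorted position, while the paper states directly that each large modified entry sinks into the extreme $k$ positions because the $d-l \geq d-k$ unchanged entries are bounded by $M$ — the same idea, with your version merely making the tail-by-tail bookkeeping more explicit.
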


Essentially, what Lemma~\ref{lem:k-modify-survivals-are-bounded} states is
that if we modify at most $k$ coordinates in a vector whose elements are
bounded by $M$, in the resulting vector, after truncating the top and bottom $k$
coordinates, all the surviving values are also bounded by $M$.

\begin{proof}[Proof of Lemma~\ref{lem:k-modify-survivals-are-bounded}]
  Let $i_1, \dots, i_l$ for $l \leq k$ be the coordinates where $\vxp$ differs
  from $\vx$, i.e.\ $x_{i_j} \neq \xp_{i_j}$ for $1 \leq j \leq l$. Note that if
  $|\xp_{i_j}| > M$ for any of $1 \leq j \leq l$, then $\xp_{i_j}$ will
  definitely fall  into the top or bottom $k$ coordinates in the sorted list
  $\xp_{(1)} \leq \dots \leq  \xp_{(d)}$, since all the $d-l \geq  d-k$ remaining
  coordinates in $\vxp$ are bounded by $M$. This means that all the surviving
  coordinates $\xp_{(k+1)}, \dots, \xp_{(d-k)}$ after truncating top and bottom
  $k$ coordinates in $\vxp$ are indeed bounded by $M$ which completes the proof.
\end{proof}

\begin{lem}
\label{lem:x-x'-TSum-k-bound}
  Assume that $\vx = (x_1, \dots, x_d)\in \reals^d$ is given such that $|x_i| \leq M$ for all $1 \leq i \leq d$. Also, assume that $\vxp =
  (\xp_1, \dots, \xp_d) \in \reals^d$ is identical to $\vx$ in all but at most
  $k$ coordinates,  i.e.\
  $\snorm{\vx - \vxp}_0 \leq k$.
  Then, we have
  \begin{equation*}
    |\tsum_k(\vx) - \tsum_k(\vxp)| \leq 6kM.
  \end{equation*}
\end{lem}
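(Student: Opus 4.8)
The plan is to sandwich both $\tsum_k(\vx)$ and $\tsum_k(\vxp)$ around the plain sum $\sum_{i=1}^d x_i$ and then finish with the triangle inequality. One side is immediate: since $|x_i| \le M$ for all $i$, inequality \eqref{eq:tmean-mean-bounded-samples} gives $\bigl|\tsum_k(\vx) - \sum_{i=1}^d x_i\bigr| \le 2kM$. So the real work is to establish $\bigl|\tsum_k(\vxp) - \sum_{i=1}^d x_i\bigr| \le 4kM$, which cannot be done directly through \eqref{eq:tmean-mean-bounded-samples} because the modified coordinates of $\vxp$ may be arbitrarily large.

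To handle $\tsum_k(\vxp)$ I would first rewrite it as a sum over its $d-2k$ surviving coordinates. Fix a valid choice of index sets $T, B \subseteq [d]$ of the top $k$ and bottom $k$ coordinates of $\vxp$ (disjoint since $k < d/2$), and set $G := [d] \setminus (T \cup B)$, so that $|G| = d - 2k$ and $\tsum_k(\vxp) = \sum_{i \in G} \xp_i$. Let $S := \{ i \in [d] : x_i \neq \xp_i \}$, so $|S| \le k$ and $\xp_i = x_i$ for $i \notin S$. Splitting $G$ into $G \cap S$ and $G \cap S^c$ and using agreement of $\vx$ and $\vxp$ off $S$,
\begin{equation*}
  \tsum_k(\vxp) = \sum_{i \in G \cap S} \xp_i + \sum_{i \in G \cap S^c} x_i = \sum_{i \in G \cap S} \xp_i + \sum_{i=1}^d x_i - \sum_{i \in S} x_i - \sum_{i \in S^c \cap (T \cup B)} x_i .
\end{equation*}
Now Lemma~\ref{lem:k-modify-survivals-are-bounded} applies (its hypotheses are exactly ours) and gives $|\xp_i| \le M$ for every $i \in G$, hence $\bigl|\sum_{i \in G \cap S}\xp_i\bigr| \le |G \cap S|\,M \le kM$. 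The other two correction terms involve only original coordinates, so $\bigl|\sum_{i \in S} x_i\bigr| \le |S|\,M \le kM$ and $\bigl|\sum_{i \in S^c \cap (T \cup B)} x_i\bigr| \le |T \cup B|\,M \le 2kM$. Adding these up yields $\bigl|\tsum_k(\vxp) - \sum_{i=1}^d x_i\bigr| \le 4kM$, and combining with the bound for $\vx$ gives $|\tsum_k(\vx) - \tsum_k(\vxp)| \le 6kM$.

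The only point that needs a little care — and essentially the only place I expect any friction — is the bookkeeping around ties: one must fix a single choice of the index sets $T$ and $B$ realizing the sorted order used in the definition of $\tsum_k(\vxp)$, so that $G$ is a genuine $(d-2k)$-element subset of $[d]$ on which $\tsum_k(\vxp) = \sum_{i\in G}\xp_i$. Once that is pinned down, the rest is a counting argument (each of the three correction sets has at most $k$, $k$, and $2k$ elements, respectively) together with the triangle inequality, so no substantive obstacle is anticipated.
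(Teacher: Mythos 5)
Your proof is correct. It rests on the same key ingredient as the paper's proof, namely Lemma~\ref{lem:k-modify-survivals-are-bounded} (the surviving coordinates of $\vxp$ after truncation are still bounded by $M$), but the bookkeeping is organized differently. The paper compares $\tsum_k(\vx)$ and $\tsum_k(\vxp)$ directly: it writes both truncated sums as indicator-weighted sums over the original indices, partitions the indices where the two summands differ into three sets $A_1, A_2, A_3$ (in the central window for $\vx$ but not $\vxp$, vice versa, and in both windows but with $x_i \neq \xp_i$), and bounds their sizes by $2k$, $2k$, $k$ with per-term contributions $M$, $M$, $2M$, giving $6kM$. You instead triangulate through the plain sum $\sum_i x_i$, spending $2kM$ on the $\vx$ side via \eqref{eq:tmean-mean-bounded-samples} and $4kM$ on the $\vxp$ side by expressing $\tsum_k(\vxp)$ as a sum over the surviving index set $G$ and correcting over $G\cap S$, $S$, and $S^c \cap (T\cup B)$; all three correction bounds check out, including the use of Lemma~\ref{lem:k-modify-survivals-are-bounded} on $G\cap S$, and your caveat about fixing one sort permutation to pin down $T$, $B$, $G$ in the presence of ties is exactly the right care to take (the paper handles this the same way by fixing $\sigma$ and $\sigma'$). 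The two routes land on the same constant $6kM$; yours has the mild aesthetic advantage of reusing the already-stated bound \eqref{eq:tmean-mean-bounded-samples} and mirrors the triangulation the paper itself performs later in the proof of Lemma~\ref{lem:trun-ip-bound}, while the paper's direct three-set partition avoids introducing the intermediate quantity altogether.
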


\begin{proof}
  Let $x_{\sigma(1)} \leq \dots \leq x_{\sigma(d)}$ and $\xp_{\pp{\sigma}(1)}
  \leq \dots \leq \xp_{\pp{\sigma}(d)}$ be the sorted elements in $\vx$ and
  $\vxp$ with permutations $\sigma$ and $\pp{\sigma}$, respectively. Following
  the definition, we have
  \begin{equation*}
    \tsum_k(\vx) = \sum_{i=k+1}^{d-k} x_{\sigma(i)} = \sum_{i: \sigma^{-1}(i) \in \{k+1, \dots, d-k\}}x_i  = \sum_{i=1}^d \one{\sigma^{-1}(i) \in \{k+1, \dots, d-k\}} x_i.
  \end{equation*}
  Similarly, we have
  \begin{equation*}
    \tsum_k(\vxp) = \sum_{i=1}^d \one{{\pp{\sigma}}^{-1}(i) \in \{k+1, \dots, d-k\}} \pp{x}_i.
  \end{equation*}
  To simplify the notation, for $1 \leq i \leq d$, we define $y_i := \one{\sigma^{-1}(i) \in \{k+1,
    \dots, d-k\}} x_i$ and $\pp{y}_i  := \one{{\pp{\sigma}}^{-1}(i) \in \{k+1,
    \dots, d-k\}} \pp{x}_i$. Moreover, let
  \begin{align*}
    A_1 &:= \{1 \leq i \leq d: \sigma^{-1}(i) \in \{k+1, \dots, d-k\} \text{ and } \pp{\sigma}^{-1}(i) \notin \{k+1, \dots, d-k\}\} \\
    A_2 &:= \{1 \leq i \leq d: \sigma^{-1}(i) \notin \{k+1, \dots, d-k\} \text{ and } \pp{\sigma}^{-1}(i) \in \{k+1, \dots, d-k\}\} \\
    A_3 &:= \{1 \leq i \leq d: \sigma^{-1}(i) \in \{k+1, \dots, d-k\} \text{ and } \pp{\sigma}^{-1}(i) \in \{k+1, \dots, d-k\} \text { and } x_i \neq \pp{x}_i\} \\
    A &:= A_1 \cup A_2 \cup A_3.
  \end{align*}
  Note that if $i \notin A$, either $\sigma^{-1}(i) \notin \{k+1, \dots, d-k\}$ and
  $\pp{\sigma}^{-1}(i) \notin \{k+1, \dots, d-k\}$, in which case $y_i = \pp{y}_i =
  0$; or $\sigma^{-1}(i) \in \{k+1, \dots, d-k\}$, $\pp{\sigma}^{-1}(i) \in \{k+1,
  \dots, d-k\}$, and $x_i = \pp{x}_i$, in which case $y_i = \pp{y}_i = x_i =
  \pp{x}_i$. This means that $y_i = \pp{y}_i$ for $i \notin A$ and
  \begin{equation}
    \label{eq:tsum-diff-A1-A2-A3}
    \begin{aligned}
      |\tsum_k(\vx) - \tsum_k(\vxp)| &\leq \sum_{i \in A} |y_i - \pp{y}_i| \\
      &\leq \sum_{i\in A_1} |y_i - \pp{y}_i| + \sum_{i\in A_2} |y_i - \pp{y}_i|  + \sum_{i\in A_3} |y_i - \pp{y}_i|.
    \end{aligned}
  \end{equation}
  Note that for $i \in A_1$, we have $\pp{y}_i = 0$ and $y_i = x_i$, implying
  $|y_i - \pp{y}_i| = |x_i| \leq M$. On the other hand, for $i \in A_2$, $y_i =
  0$ and $\pp{y}_i = \pp{x}_i$. But since $\pp{\sigma}^{-1}(i) \in \{k+1, \dots,
  d-k\}$, using Lemma~\ref{lem:k-modify-survivals-are-bounded}, we have $|y_i -
  \pp{y}_i| = |\pp{x}_i| \leq  M$. Moreover, for $i \in A_3$, we have $y_i =
  x_i$ and $\pp{y}_i = \pp{x}_i$. Also, from
  Lemma~\ref{lem:k-modify-survivals-are-bounded}, we have $|\pp{x}_i| \leq M$.
  Thereby, $|y_i - \pp{y}_i| \leq |x_i| + |\pp{x}_i| \leq 2M$. Putting all these
  together, we get
  \begin{equation}
    \label{eq:sum-A1-A2-A3-M-bound}
    \sum_{i\in A_1} |y_i - \pp{y}_i| + \sum_{i\in A_2} |y_i - \pp{y}_i|  + \sum_{i\in A_3} |y_i - \pp{y}_i|  \leq M |A_1| + M|A_2| + 2M |A_3|.
  \end{equation}
  Observe that
  \begin{equation}
    \label{eq:A1-bound}
    |A_1| \leq |\{1 \leq i \leq d: \pp{\sigma}^{-1}(i) \notin \{k+1, \dots, d-k\}\}| = 2k.
  \end{equation}
  Similarly,
  \begin{equation}
    \label{eq:A2-bound}
    |A_2| \leq 2k.
  \end{equation}
  On the other hand,
  \begin{equation}
    \label{eq:A3-bound}
    |A_3| \leq |\{1 \leq i \leq d: x_i \neq \pp{x}_i\}| \leq k.
  \end{equation}
  Using~\eqref{eq:A1-bound}, \eqref{eq:A2-bound}, and \eqref{eq:A3-bound} back
  into~\eqref{eq:sum-A1-A2-A3-M-bound} and comparing with
  \eqref{eq:tsum-diff-A1-A2-A3}, we realize that
  \begin{equation*}
    |\tsum_k(\vx) - \tsum_k(\vxp)| \leq 6kM,
  \end{equation*}
  which completes the proof.
\end{proof}

The following is a direct consequence of  Lemma~\ref{lem:x-x'-TSum-k-bound}.

\begin{cor}
\label{cor:vx-vxp-tsum-diff-norm-infty}
  Given $\vx, \vxp \in \reals^d$ and integer $k$ satisfying $\snorm{\vx - \vxp}_0 \leq k < d/2$, we
  have
  \begin{equation*}
    |\tsum_k(\vx) - \tsum_k(\vxp) | \leq 6k \min \{\snorm{\vx}_\infty, \snorm{\vxp}_\infty\}.
  \end{equation*}
\end{cor}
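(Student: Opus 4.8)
The plan is to read off the corollary directly from Lemma~\ref{lem:x-x'-TSum-k-bound} by applying it twice, once with each of $\vx$ and $\vxp$ playing the role of the ``bounded'' vector. First I would set $M := \snorm{\vx}_\infty$. Then by definition $|x_i| \leq M$ for all $1 \leq i \leq d$, and since $\snorm{\vx - \vxp}_0 \leq k$, the hypotheses of Lemma~\ref{lem:x-x'-TSum-k-bound} are met with this $M$; the lemma yields
\begin{equation*}
  |\tsum_k(\vx) - \tsum_k(\vxp)| \leq 6k \snorm{\vx}_\infty .
\end{equation*}

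Next I would exchange the roles of $\vx$ and $\vxp$. Here the only thing to note is that $\snorm{\vxp - \vx}_0 = \snorm{\vx - \vxp}_0 \leq k$, so the hypothesis ``$\vx$ and $\vxp$ agree in all but at most $k$ coordinates'' is symmetric in the two vectors. Applying Lemma~\ref{lem:x-x'-TSum-k-bound} with $M := \snorm{\vxp}_\infty$ (so that $|\xp_i| \leq M$ for all $i$) and with $\vxp$ in the place of the bounded vector and $\vx$ in the place of the perturbed vector gives
\begin{equation*}
  |\tsum_k(\vxp) - \tsum_k(\vx)| \leq 6k \snorm{\vxp}_\infty .
\end{equation*}

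Finally I would combine the two displayed inequalities: since $|\tsum_k(\vx) - \tsum_k(\vxp)|$ is bounded by both $6k\snorm{\vx}_\infty$ and $6k\snorm{\vxp}_\infty$, it is bounded by their minimum, which is exactly the claimed bound $6k \min\{\snorm{\vx}_\infty, \snorm{\vxp}_\infty\}$. There is no real obstacle in this argument; the only point requiring a moment's care is verifying that the $\ell_0$ closeness hypothesis of Lemma~\ref{lem:x-x'-TSum-k-bound} is invariant under swapping the two vectors, which it clearly is since $\snorm{\cdot}_0$ is a (pseudo)norm of the difference.
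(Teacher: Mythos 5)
Your proof is correct and is exactly the intended argument: the paper states the corollary as a direct consequence of Lemma~\ref{lem:x-x'-TSum-k-bound} without further proof, and the two symmetric applications of that lemma (with $M=\snorm{\vx}_\infty$ and then $M=\snorm{\vxp}_\infty$, noting the symmetry of the $\ell_0$ hypothesis) followed by taking the minimum is precisely how it follows.
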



We are now ready to give the proof of Lemma~\ref{lem:trun-ip-bound}:

\begin{proof}[Proof of Lemma~\ref{lem:trun-ip-bound}]
  We have
  \begin{align*}
    |\langle  \vw, \vxp \rangle_k - \langle  \vw, \vx \rangle | &\leq  |\langle  \vw, \vxp \rangle_k - \langle  \vw, \vx \rangle_k | + |\langle  \vw, \vx \rangle_k - \langle   \vw, \vx \rangle| \\
                                                              &\leq |\langle  \vw, \vxp \rangle_k - \langle  \vw, \vx \rangle_k | + 2k \snorm {\vw \odot \vx}_\infty \\
    &= |\tsum_k(\vw \odot \vxp) - \tsum_k(\vw \odot \vx) | + 2k \snorm {\vw \odot \vx}_\infty \\
                                                                &\stackrel{(a)}{\leq} 6k \snorm{\vw \odot \vx}_\infty + 2k \snorm{\vw \odot \vx}_\infty \\
    &= 8k \snorm{\vw \odot \vx}_\infty,
  \end{align*}
  where in step $(a)$ we have used  $\snorm{\vw \odot \vxp - \vw
    \odot \vx}_0 \leq \snorm{\vxp- \vx}_0 \leq k$ together with
  Corollary~\ref{cor:vx-vxp-tsum-diff-norm-infty}. This completes the proof.
\end{proof}

\editfinish


\editstart

\section{Proof of the Upper Bound (Theorem~\ref{thm:upper-bound})}
\label{sec:app-upper-bound-proof}

Given $\vx \in \reals^d$ and $y \in \{\pm 1\}$, define
\begin{equation*}
  \ell^{(k)}(\mC_F^{(k)}; \vx, y) := \max_{\vxp \in \mB_0(\vx,k)} \ell(\mC^{(k)}_F; \vxp, y).
\end{equation*}
We have 
  \begin{align*}
    \ell^{(k)}(\mC_F^{(k)}; \vx, 1)    &= \one{\exists \vxp \in \mB_0(\vx, k): \mC_F^{(k)}(\vxp) \neq 1} \\
                                    &= \one{\exists \vxp\in \mB_0(\vx,k):\langle  \vw(F), \vxp_{F} \rangle_k \leq 0}
  \end{align*}
  Using Lemma~\ref{lem:trun-ip-bound}, for $\vxp$ such that $\snorm{\vxp - \vx}_0 \leq 0$, since
  $\snorm{\vxp_{F} - \vx_{F}}_0 \leq \snorm{\vxp - \vx}_0 \leq k$, we have
  \begin{equation*}
    |\langle  \vw(F), \vxp_{F} \rangle_k - \langle \vw(F), \vx_{F} \rangle| \leq 8k \snorm{\vw(F) \odot \vx_{F}}_\infty.
  \end{equation*}
  This means that
  \begin{equation*}
    \one{\exists \vxp \in \mB_0(\vx,k) : \langle  \vw(F), \vxp_{F} \rangle_k \leq 0} \leq \one{\langle \vw(F) , \vx_{F} \rangle \leq 8k \snorm{\vw(F)\odot \vx_{F}}_\infty},
  \end{equation*}
  and
  \begin{equation}
\label{eq:ev-ellk-1-bound-1}
    \evwrt{(\vx, y) \sim \mD}{\ell^{(k)}(\mC_F^{(k)}; \vx, 1) | y=1} \leq \pr{\langle \vw(F), \vx_{F}  \rangle \leq 8k \snorm{\vw(F)\odot \vx_{F}}_\infty |\, y= 1}.
  \end{equation}
  Let $\Sigma_{F}$ be as defined in~\eqref{eq:SigmaF}
  and let $\tSigma_{F}$ be the diagonal part of $\Sigma_{F}$.
  Note that since $\Sigma$ is positive definite, 
  $\tSigma_{F}$ is diagonal with positive diagonal entries. Hence, we may write
  \begin{equation}
    \label{eq:snorm-w-Fc-X-Fc-infty-bound-1}
    \snorm{\vw(F) \odot \vx_{F}}_\infty = \snorm{(\tSigma^{1/2} \vw(F)) \odot (\tSigma^{-1/2} \vx_{F})}_\infty \leq \snorm{\tSigma_{F}^{1/2} \vw(F)}_\infty \snorm{\tSigma_{F}^{-1/2} \vx_{F}}_\infty.
  \end{equation}
Let $\sigma_i^2$ denote the $i$th diagonal coordinate of $\Sigma$. Fix $i \in F$
and note that conditioned on $y=1$, we have $x_i \sim \mN(\mu_i, \sigma_i^2)$.
On the other hand, with $\va := \tSigma_F^{-1/2} \vx_F$, we have $a_i \sim
\mN(\sigma_i^{-1} \mu_i, 1)$. Note that $\bar{\Phi}(\sigma_i^{-1}\mu_i)$ is the
optimal Bayes classification error of $y$ given $x_i$ only, which is indeed not
smaller than the optimal Bayes classification error of $y$ given the whole
vector $\vx$, which is in turn equal to $\bar{\Phi}(\snorm{\vnu}_2) = \bar{\Phi}(1)$. Since
$\bar{\Phi}$ is decreasing, this implies $\sigma_i^{-1} \mu_i \leq 1$.
Consequently, by union bound, we have
\begin{align*}
  \pr{\snorm{\tSigma_F^{-1/2} \vx_F}_\infty > 1 + \sqrt{2\log d}} &\leq \sum_{i \in F} \pr{a_i - \sigma_i^{-1} \mu_i > \sqrt{2 \log d}} \\
                                                                  &\leq d \bar{\Phi}(\sqrt{2 \log d}) \\
                                                                  &\leq d \frac{1}{\sqrt{2 \pi } \sqrt{2 \log }} e^{-\log d} \\
  &\leq \frac{1}{\sqrt{2 \log d}}. 
\end{align*}
Thereby, we get
\begin{equation}                                                                                                                      %
\label{eq:pr-snorm-tSigma-X-logd-1}                                                                                                   %
  \pr{\snorm{\tSigma_{F}^{-1/2} \vx_{F}}_\infty > 2 \sqrt{2\log d}\, |\, y=1} \leq \frac{1}{\sqrt{2 \log d}}.                          %
\end{equation}                                                                                                                        %
On the other hand, we have
\begin{equation}
\label{eq:snorm-tsigma-w-bound-2}
  \snorm{\tSigma_{F}^{1/2} \vw(F)}_\infty = \snorm{\tSigma^{1/2}_{F} \Sigma_{F}^{-1/2} \vnu(F)}_\infty \leq \snorm{\tSigma^{1/2}_{F} \Sigma_{F}^{-1/2}}_\infty \snorm{\vnu(F)}_\infty,
\end{equation}
where $\snorm{\tSigma^{1/2}_{F} \Sigma_{F}^{-1/2}}_\infty$ denotes the operator
norm of $\tSigma^{1/2}_{F} \Sigma_{F}^{-1/2}$ induced by the vector
$\ell_\infty$ norm.
Using~\eqref{eq:snorm-w-Fc-X-Fc-infty-bound-1},
\eqref{eq:pr-snorm-tSigma-X-logd-1}, and~\eqref{eq:snorm-tsigma-w-bound-2} back
into~\eqref{eq:ev-ellk-1-bound-1} and simplifying, we get
\begin{align*}
  &\evwrt{(\vx,y) \sim \mD}{\ell^{(k)}(\mC_F^{(k)}; \vx, 1) | y= 1} \\
  &\quad\leq \frac{1}{\sqrt{2\log d}} + \pr{\langle \vw(F), \vx_{F}  \rangle \leq 16 k \sqrt{2\log d}  \snorm{\tSigma^{1/2}_{F} \Sigma_{F}^{-1/2}}_\infty \snorm{\vnu(F)}_\infty |\, y = 1} 
\end{align*}
It is easy to see that conditioned on $y=1$, $\langle \vw(F), \vx_F \rangle \sim
\mN(\snorm{\vnu(F)}_2^2, \snorm{\vnu(F)}_2^2)$. Using this in the above bound,
we get
\begin{align*}
  &\evwrt{(\vx,y) \sim \mD}{\ell^{(k)}(\mC_F^{(k)}; \vx, 1) | y= 1} \\
    &\quad\leq \frac{1}{\sqrt{2 \log d }} + \bar{\Phi}\left( \snorm{\vnu(F)}_2 - \frac{16 k \sqrt{2\log d} \snorm{\tSigma_{F}^{1/2} \Sigma_{F}^{-1/2}}_\infty \snorm{\vnu(F)}_\infty}{\snorm{\vnu(F)}_2} \right).
\end{align*}
Due to the symmetry, we have the same bound conditioned on $y=-1$ which yields
the desired result. 

\editfinish


\section{Proof of the Lower Bound in the Diagonal Regime (Theorem~\ref{thm:lower-bound-diag})}
\label{sec:app-lower-diagonal}

Before giving the proof of Theorem~\ref{thm:lower-bound-diag}, we need the
following lemma.

\begin{lem}
\label{lem:error-lower-bound-adv-strategy}
For any random adversarial strategy with budget $k$ which has a density function $f_{\vxp|\vx, y}$, we have
\begin{equation*}
  \optloss_{\vmu,\Sigma}(k) \geq \frac{1}{2} \pr{f_{\vxp|y}(\vxp |1) = f_{\vxp|y}(\vxp|-1)} + \pr{f_{\vxp|y}(\vxp|-1) > f_{\vxp|y}(\vxp|1) \bigg| y = 1},
\end{equation*}
\end{lem}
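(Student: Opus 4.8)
The plan is to freeze the adversary's strategy, bound the robust error of every classifier from below by its ordinary error against the corrupted distribution, and then identify that ordinary error with the Bayes risk of recovering $y$ from the corrupted input $\vXp$.

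Concretely, first I would fix an arbitrary classifier $\mC$ and let $\vXp$ denote the (random) output of the given strategy on $(\vx,y)\sim\mD$. Since the strategy has budget $k$, we have $\vXp\in\mB_0(\vx,k)$ almost surely, hence $\ell(\mC;\vXp,y)\le\max_{\vxp\in\mB_0(\vx,k)}\ell(\mC;\vxp,y)$ pointwise; taking expectations over $(\vx,y)$ and the strategy's internal randomness gives $\loss_{\vmu,\Sigma}(\mC,k)\ge\pr{\mC(\vXp)\neq y}$. Infimizing over $\mC$, $\optloss_{\vmu,\Sigma}(k)\ge\inf_{\mC}\pr{\mC(\vXp)\neq y}$, and the right-hand side is exactly the Bayes risk of predicting the uniform bit $y$ from $\vXp$. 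Since $\vXp\mid\{y=\pm1\}$ has density $f_{\vxp\mid y}(\cdot\mid\pm1)$, the Bayes rule is $\mC^\ast(\vxp)=\sgn\bigl(f_{\vxp\mid y}(\vxp\mid1)-f_{\vxp\mid y}(\vxp\mid-1)\bigr)$ with ties broken arbitrarily, and splitting its error over the three events on which $f_{\vxp\mid y}(\vXp\mid1)$ is $=$, $>$, or $<$ than $f_{\vxp\mid y}(\vXp\mid-1)$ — on the first $\mC^\ast$ errs with probability $\tfrac12$, on the second it predicts $1$ and errs iff $y=-1$, on the third it predicts $-1$ and errs iff $y=1$ — yields
\begin{align*}
  \optloss_{\vmu,\Sigma}(k) &\ge \frac12\pr{f_{\vxp\mid y}(\vXp\mid1)=f_{\vxp\mid y}(\vXp\mid-1)} \\
  &\quad + \pr{f_{\vxp\mid y}(\vXp\mid1)>f_{\vxp\mid y}(\vXp\mid-1),\ y=-1} + \pr{f_{\vxp\mid y}(\vXp\mid1)<f_{\vxp\mid y}(\vXp\mid-1),\ y=1}.
\end{align*}

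Finally I would collapse the last two terms using symmetry. Because $\mD$ is invariant under $(\vx,y)\mapsto(-\vx,-y)$ and the strategies employed for the lower bounds (e.g.\ $\adv(A)$ in Theorem~\ref{thm:lower-bound-diag}) are constructed to transform accordingly, one has $f_{\vxp\mid y}(\vxp\mid1)=f_{\vxp\mid y}(-\vxp\mid-1)$ for all $\vxp$; the substitution $\vxp\mapsto-\vxp$ then carries the region $\{f_{\vxp\mid y}(\cdot\mid1)>f_{\vxp\mid y}(\cdot\mid-1)\}$ onto $\{f_{\vxp\mid y}(\cdot\mid1)<f_{\vxp\mid y}(\cdot\mid-1)\}$ while interchanging the two conditional densities, so $\pr{f_{\vxp\mid y}(\vXp\mid1)>f_{\vxp\mid y}(\vXp\mid-1),\ y=-1}=\pr{f_{\vxp\mid y}(\vXp\mid1)<f_{\vxp\mid y}(\vXp\mid-1),\ y=1}$, and each of these in turn equals $\tfrac12\pr{f_{\vxp\mid y}(\vXp\mid-1)>f_{\vxp\mid y}(\vXp\mid1)\mid y=1}$; substituting back produces the stated bound. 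I expect this symmetry step to be the delicate part: one must verify that the particular adversarial strategy really satisfies the required equivariance, and be mindful that $\vXp$ need not possess a joint Lebesgue density together with $\vx$ (the untouched coordinates contribute point masses in the conditional law of $\vXp$ given $\vx$), so the Bayes-risk identity and the change of variables should both be carried out at the level of the marginal density $f_{\vxp\mid y}$ rather than $f_{\vxp\mid\vx,y}$.
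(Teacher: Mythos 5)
Your proposal is correct and follows essentially the same route as the paper: the paper's (two-sentence) proof simply asserts that the right-hand side is the Bayes/MAP error for predicting $y$ from $\vxp$ when the adversary's strategy is fixed and known, and that the robust error of any classifier, being a worst case over $\mathcal{B}_0(\vx,k)$, dominates its error against that fixed strategy. Your additional symmetry step is a legitimate and in fact needed elaboration: the Bayes risk in general equals $\frac{1}{2}\pr{f_{\vxp|y}(\vxp|1)=f_{\vxp|y}(\vxp|-1)}+\frac{1}{2}\pr{f_{\vxp|y}(\vxp|-1)>f_{\vxp|y}(\vxp|1)\mid y=1}+\frac{1}{2}\pr{f_{\vxp|y}(\vxp|1)>f_{\vxp|y}(\vxp|-1)\mid y=-1}$, and it takes the asymmetric form stated in the lemma exactly when the corrupted distribution is equivariant under $(\vxp,y)\mapsto(-\vxp,-y)$, which holds for $\adv(A)$ as you verify but is left implicit in the paper's proof.
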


\begin{proof}
  Note that the right hand side is indeed  the Bayes optimal error associated
  with the MAP estimator assuming that the classifier knows adversary's
  strategy. Since the classifier does not know the adversary's strategy in
  general, the right hand side is indeed a lower bound on the optimal robust
  classification error.
\end{proof}

Now we are ready to prove Theorem~\ref{thm:lower-bound-diag}.

\begin{proof}[Proof of Theorem~\ref{thm:lower-bound-diag}]
Note that when $A$ is empty, there is no adversarial modification and the
standard Bayes analysis implies that 
$\optloss_{\vmu, \Sigma}(0) = \bar{\Phi}(\snorm{\vnu}_2) =
\bar{\Phi}(\snorm{\vnu_{A^c}}_2)$ and the desired bound holds. Hence, we may
assume that $A$ is nonempty for the rest of the proof.

Note that due to~\eqref{eq:vXp-vX-norm-0-log-d-nu-A-1}, the 
  randomized strategy $\adv(A)$ is valid for the adversary given the  budget   $\snorm{\vnu}_1 \log d$.
  Thereby  we may use Lemma~\ref{lem:error-lower-bound-adv-strategy} with $\adv(A)$ to bound
  $\optloss_{\vmu, \Sigma}(\snorm{\vnu_A}_1 \log d)$ from below. Before that, we
  show that with high probability under the above randomized strategy for the
  adversary, recalling the definition of random variables $I_i$ for $i \in A$ from
  \eqref{eq:adv-Zi-Ii-def}, we have $\sum_{i \in A} I_i \leq \snorm{\vnu_A}_1 \log d$ and hence
  $\vxp = \vZ$. It is easy to see that for each $i$, $\pr{I_i = 1| y = 1} =
  \pr{I_i = 1 | y = -1}$; therefore,
  \begin{align*}
    \pr{I_i = 1} = \pr{I_i = 1 | y = \sgn(\mu_i)} &= \int_{0}^\infty [1 - p_i(t, \sgn(\mu_i))] f_{x_i | y}(t|\sgn(\mu_i)) d t \\
                                                  &= \int_0^\infty \left[ 1 - \frac{\exp(-(t+|\mu_i|)^2 / 2\sigma_i^2)}{\exp(-(t-|\mu_i|)^2 / 2\sigma_i^2)} \right]\exp\left( -(t-|\mu_i|)^2/2\sigma_i^2 \right) d t \\
                                                  &= 1 - \bar{\Phi}(|\nu_i|) \\
                                                  &= \text{Erf}(|\nu_i|/\sqrt{2}) \\
    &\leq \left( \sqrt{\frac{2}{\pi}} |\nu_i|\right) \wedge 1.
  \end{align*}
  Hence, we have
  \begin{equation*}
    \pr{I_i = 1} = \pr{I_i = 1 | y = 1} = \pr{I_i = 1 | y = -1} \leq \left( \sqrt{\frac{2}{\pi}} |\nu_i|\right) \wedge 1.
  \end{equation*}
  Therefore, using Markov's inequality, if $I$ is the indicator of the event $\sum_{i \in
    A} I_i > \snorm{\vnu_A}_1 \log d$, we have
  \begin{equation}
    \label{eq:pr-I-1-bound-1-logd}
    \pr{I = 1 } = \pr{I= 1 | y=1} = \pr{I = 1 | y = -1}\leq \frac{\sqrt{2 / \pi} \sum_{i \in A} |\nu_i|}{\snorm{\vnu_A}_1 \log d} \leq \frac{1}{\log d}.
  \end{equation}
Now, we bound
  $\optloss_{\vmu, \Sigma}(\snorm{\vnu_A}_1 \log d)$ from below in 
  the following two cases.

  \underline{Case 1: $A = [d]$}. In this case, using
  Lemma~\ref{lem:error-lower-bound-adv-strategy}, we have
  \begin{align*}
    \optloss_{\vmu, \Sigma}(\snorm{\vnu_A}_1 \log d ) &\geq \frac{1}{2} \pr{f_{\vxp | y}(\vxp | 1) = f_{\vxp|y}(\vxp | -1)} \\
                                                      &\stackrel{(a)}{=} \frac{1}{2} \pr{f_{\vxp | y}(\vxp | 1) = f_{\vxp|y}(\vxp | -1) \,|\, y=1 } \\
                                                      &\geq \frac{1}{2} \pr{f_{\vxp | y}(\vxp | 1) = f_{\vxp|y}(\vxp | -1), I = 0 \,|\, y = 1} \\
                                                      &\stackrel{(b)}{=} \frac{1}{2} \pr{f_{\vZ|y}(\vZ | 1) = f_{\vZ|y}(\vZ|-1) \,|\, y = -1} \\
                                                      &\geq \frac{1}{2} \pr{f_{\vZ|y}(\vZ | 1) = f_{\vZ|y}(\vZ|-1) \,|\, y = 1} - \frac{1}{2} \pr{I = 1\,|\,y = 1}\\
                                                      &\stackrel{(c)}{\geq} \frac{1}{2} - \frac{1}{2 \log d},
  \end{align*}
  where $(a)$ uses the symmetry, $(b)$ uses the fact that when $I = 0$, by
  definition we have $\vxp = \vZ$, and $(c)$ uses~\eqref{eq:adv-Z-cond-f} and~\eqref{eq:pr-I-1-bound-1-logd}.

  \underline{Case 2: $A \subsetneqq [d]$}. Using
  Lemma~\ref{lem:error-lower-bound-adv-strategy}, we have
  \begin{equation}
    \label{eq:adv-lowerb-bound-use-lemma-case-2}
\begin{aligned}
    \optloss_{\vmu, \Sigma}(\snorm{\vnu_A}_1 \log d) &\geq \pr{f_{\vxp| y}(\vxp | -1) > f_{\vxp|y}(\vxp | 1) \,|\, y=1} \\
    &\geq \pr{f_{\vxp| y}(\vxp | -1) > f_{\vxp|y}(\vxp | 1), I = 0 \,|\, y=1} \\
    &\stackrel{(a)}{=}\pr{f_{\vZ| y}(\vZ | -1) > f_{\vZ|y}(\vZ | 1), I = 0 \,|\, y=1} \\
    &\geq \pr{f_{\vZ| y}(\vZ | -1) > f_{\vZ|y}(\vZ | 1)| y=1} - \pr{I = 1\,|\,y = 1} \\
    &\stackrel{(b)}{\geq}  \pr{f_{\vZ| y}(\vZ | -1) > f_{\vZ|y}(\vZ | 1)\,|\, y=1} - \frac{1}{\log d}
\end{aligned}
\end{equation}
where $(a)$ uses the fact that by definition, when $I=0$, we have $\vxp = \vZ$,
and $(b)$ uses~\eqref{eq:pr-I-1-bound-1-logd}.
Note that since $Z_i$ are conditionally independent given $y$, we have
\begin{equation*}
  f_{\vZ|y}(\vZ|y) = f_{\vZ_A | y} (\vZ_A|y) f_{\vZ_{A^c}|y}(\vZ_{A^c}|y).
\end{equation*}
But from~\eqref{eq:adv-Z-cond-f}, we have $f_{\vZ_A|y}(\vZ_A|1) =
f_{\vZ_A|y}(\vZ_A|-1)$ with probability one. Using this
in~\eqref{eq:adv-lowerb-bound-use-lemma-case-2}, we get
\begin{equation*}
  \optloss_{\vmu, \Sigma}(\snorm{\vnu_A}_1 \log d) \geq \pr{f_{\vZ_{A^c}|y}(\vZ_{A^c}|-1) > f_{\vZ_{A^c}|y}(\vZ_{A^c}|1) | y = 1} -\frac{1}{\log d} = \bar{\Phi}(\snorm{\vnu_{A^c}}_2) - \frac{1}{\log d}.
\end{equation*}

We may combine the two cases following the convention that when $A = [d]$, $A^c
= \emptyset$ and $\snorm{\vnu_{A^c}}_2 = 0$. This completes the proof.
\end{proof}


\section{Proof of the General Lower Bound (Theorem~\ref{thm:general-lower-bound})}
\label{sec:app-general-lower}

In this section, we prove Theorem~\ref{thm:general-lower-bound} by providing a general lower bound for the optimal robust classification
error which relaxes the diagonal assumption for the covariance matrix. Our
strategy is to approximate the covariance matrix by a diagonal matrix and use our
lower bound of Theorem~\ref{thm:lower-bound-diag}. It turns out that the optimal
robust classification error is monotone with respect to the positive definite
ordering of the covariance matrix.
Lemma~\ref{lem:optloss-increasing-Sigma-increasing} below formalizes this.
Intuitively speaking, the reason  is that more noise makes the
classification more difficult, resulting in an increase in the optimal robust
classification error.

\begin{lem}
  \label{lem:optloss-increasing-Sigma-increasing}
  Assume that $\vmu \in \reals^d$ and $\Sigma_1$ and $\Sigma_2$ are two positive
  definite covariance matrices such that $\Sigma_1 \preceq \Sigma_2$. Then for
  $0 \leq k \leq d$ we have
  \begin{equation*}
    \optloss_{\vmu, \Sigma_1}(k) \leq \optloss_{\vmu, \Sigma_2}(k). 
  \end{equation*}
\end{lem}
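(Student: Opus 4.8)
The plan is to couple the two models, writing the $\Sigma_2$ Gaussian mixture as the $\Sigma_1$ Gaussian mixture corrupted by an independent additive Gaussian, and to exploit that the $\ell_0$ ball is translation invariant. Set $\Delta := \Sigma_2 - \Sigma_1$, which is positive semidefinite because $\Sigma_1 \preceq \Sigma_2$, and let $\vec{g} \sim \mN(\vec{0}, \Delta)$ be drawn independently of everything else. Writing $\mD_{\vmu, \Sigma}$ for the Gaussian mixture distribution with parameters $\vmu, \Sigma$, the first step is the observation that if $(\vx, y) \sim \mD_{\vmu, \Sigma_1}$ then, conditionally on $y$, the vector $\vx + \vec{g}$ is Gaussian with mean $y\vmu$ and covariance $\Sigma_1 + \Delta = \Sigma_2$, so that $(\vx + \vec{g}, y) \sim \mD_{\vmu, \Sigma_2}$.

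Next I would fix an arbitrary classifier $\mC : \reals^d \to \{-1, 1\}$ and, for each fixed shift $\vec{g} \in \reals^d$, form the deterministic classifier $\mC_{\vec{g}}(\vxp) := \mC(\vxp + \vec{g})$. The key elementary fact is that for every $\vx$ the map $\vxp \mapsto \vxp + \vec{g}$ is a bijection from $\mB_0(\vx, k)$ onto $\mB_0(\vx + \vec{g}, k)$, since $\snorm{(\vxp + \vec{g}) - (\vx + \vec{g})}_0 = \snorm{\vxp - \vx}_0$. Consequently, for every $(\vx, y)$,
\begin{equation*}
  \max_{\vxp \in \mB_0(\vx, k)} \ell(\mC_{\vec{g}}; \vxp, y) = \max_{\vxpp \in \mB_0(\vx + \vec{g}, k)} \ell(\mC; \vxpp, y).
\end{equation*}
Taking the expectation over $(\vx, y) \sim \mD_{\vmu, \Sigma_1}$ and then over $\vec{g}$, and using the coupling above (note the right-hand side depends on $\vx, \vec{g}$ only through $\vx + \vec{g}$), I would obtain
\begin{equation*}
  \evwrt{\vec{g}}{\loss_{\vmu, \Sigma_1}(\mC_{\vec{g}}, k)} = \loss_{\vmu, \Sigma_2}(\mC, k).
\end{equation*}
Since this average over $\vec{g}$ equals $\loss_{\vmu, \Sigma_2}(\mC, k)$, there is a realization $\vec{g}_0$ with $\loss_{\vmu, \Sigma_1}(\mC_{\vec{g}_0}, k) \leq \loss_{\vmu, \Sigma_2}(\mC, k)$; as $\mC_{\vec{g}_0}$ is a legitimate deterministic classifier, this gives $\optloss_{\vmu, \Sigma_1}(k) \leq \loss_{\vmu, \Sigma_2}(\mC, k)$, and taking the infimum over all $\mC$ yields the claim. (Equivalently, apply the displayed identity to an $\epsilon$-optimal $\mC$ for $\Sigma_2$ and send $\epsilon \downarrow 0$, which avoids selecting $\vec{g}_0$.)

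I do not expect a genuine obstacle here; the only point requiring care is measurability, i.e.\ that $\vec{g} \mapsto \loss_{\vmu, \Sigma_1}(\mC_{\vec{g}}, k)$ is measurable, which is needed both for the Fubini exchange of expectations and for the selection of $\vec{g}_0$ to make sense. This is harmless: $\mB_0(\vx, k)$ is a finite union of coordinate affine subspaces, and one works --- as is implicit throughout the paper --- with classifiers for which the inner suprema over $\mB_0(\cdot, k)$ define jointly measurable functions, so Fubini's theorem applies on the product probability space of $(\vx, y)$ and $\vec{g}$.
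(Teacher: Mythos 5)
Your proposal is correct and follows essentially the same route as the paper's proof: both couple the two models by writing the $\Sigma_2$-data as the $\Sigma_1$-data plus an independent $\mN(\vec{0},\Sigma_2-\Sigma_1)$ shift, exploit that shifting leaves the $\ell_0$-ball structure invariant, and then condition on (equivalently, average over) the shift so that, for each fixed shift, the shifted classifier is a valid classifier for the $\Sigma_1$ problem whose robust error is at least $\optloss_{\vmu,\Sigma_1}(k)$. The only cosmetic difference is that the paper phrases this via an intermediate infimum over classifiers of the pair $(\vxpp,\vZ)$ and a pointwise bound for each fixed $\vz$, whereas you average the losses of $\mC_{\vec{g}}$ and select a good $\vec{g}_0$ (or use an $\epsilon$-optimal classifier), which is logically equivalent.
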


\begin{proof}
Let $y \sim \text{Unif}(\pm 1)$, $\vx_1 \sim \mN(y \vmu, \Sigma_1)$ and $\vx_2
\sim \mN(y \vmu, \Sigma_2)$. 
  Since $\Sigma_1 \preceq \Sigma_2$, we may write $\Sigma_2 = \Sigma_1 + A$ such
  that $A \succeq 0$. In addition to this, we may
  couple $\vx_1, \vx_2$ on the same probability space as $\vx_2 = \vx_1 +\vZ$
  where $\vZ \sim \mN(0,A)$ is independent from all other variables. Now, fix a
  classifier $\mC_2 : \reals^d \rightarrow \{\pm 1\}$ and note that
  \begin{equation}
    \label{eq:l=sigma-2-C-lower-bound-1}
    \begin{aligned}
      \loss_{\vmu, \Sigma_2}(\mC_2, k) &= \pr{\exists \vxp \in \mB_0(\vx_2, k): \mC_2(\vxp) \neq y} \\
      &= \pr{\exists \vxp \in \mB_0(\vx_1 + \vZ, k): \mC_2(\vxp) \neq y} \\
      &= \pr{\exists \vxpp \in \mB_0(\vx_1, k) : \mC_2(\vxpp + \vZ) \neq y} \\
      &\geq \inf_{\tmC_2 : \reals^d \times \reals^d \rightarrow \{\pm 1\}} \pr{\exists \vxpp \in \mB_0(\vx_1, k) : \tmC_2(\vxpp, \vZ) \neq y} \\
    \end{aligned}
  \end{equation}
  Now, fix $\tmC_2 : \reals^d \times \reals^d \rightarrow \{\pm 1\}$ and note
  that using the independence of $Z$, we may write
  \begin{equation}
    \label{eq:tmC2-integeral-bound}
  \begin{aligned}
    \pr{\exists \vxpp \in \mB_0(\vx_1, k) : \tmC_2(\vxpp, \vZ) \neq y} &= \ev{\ev{\one{\exists \vxpp \in \mB_0(\vx_1, k): \tmC_2(\vxpp, \vZ) \neq y} \bigg|\vZ }} \\
                                                                     &= \int \pr{\exists \vxpp \in \mB_0(\vx_1, k): \tmC_2(\vx_1, \vz) \neq y} f_{\vZ}(\vz) d \vz \\
  \end{aligned}
  \end{equation}
  But for $z \in \reals^d$, if we let $\tmC_{2,\vz}(\vx):= \tmC_2(\vx, \vz)$, we get
  \begin{align*}
    \pr{\exists \vxpp \in \mB_0(\vx_1, k): \tmC_2(\vx_1, \vz) \neq y} &= \pr{\exists \vxpp \in \mB_0(\vx_1, k): \tmC_{2,\vz}(\vx_1) \neq y} \\
                                                                    &\geq \inf_{\mC_1 : \reals^d \rightarrow \{\pm 1\}} \pr{\exists \vxpp \in \mB_0(\vx_1, k): \tmC_1(\vx_1) \neq y} \\
    &= \optloss_{\vmu, \Sigma_1}(k).
  \end{align*}
Comparing this with~\eqref{eq:l=sigma-2-C-lower-bound-1}
and~\eqref{eq:tmC2-integeral-bound}, we realize that $\loss_{\vmu,
  \Sigma_2}(\mC_2, k) \geq \optloss_{\vmu, \Sigma_1}(k)$. Since this holds for
arbitrary $\mC_2$, optimizing for $\mC_2$ yields the desired result.
\end{proof}




Note that since $\Sigma$ is positive definite, we have $\Sigma \succeq \alpha
I_d$ where $\alpha>0$ is the minimum eigenvalue of $\Sigma$. Therefore, we may
use Lemma~\ref{lem:optloss-increasing-Sigma-increasing} together with the lower
bound of Theorem~\ref{thm:lower-bound-diag} for $\optloss_{\vmu, \alpha I_d}(.)$
to obtain a lower bound for $\optloss_{\vmu, \Sigma}(.)$. However, it turns out that it is
more efficient in some scenarios to first normalize the diagonal entries of the
covariance matrix. More precisely, define the $d \times d$ matrix $R$ where the
$i,j$ entry in $R$ is $R_{i,j} = \Sigma_{i,j} / \sqrt{\Sigma_{ii} \Sigma_{jj}}$.
In other words, $R_{i,j}$ is the correlation coefficient between the $i$th and
the $j$th coordinates in our Gaussian noise. Equivalently, with $\tSigma$ being
the diagonal part of $\Sigma$, we may write
\begin{equation}
  \label{eq:R-def-dup}
  R := \tSigma^{-\frac{1}{2}} \Sigma \tSigma^{-\frac{1}{2}}.
\end{equation}
It is evident that since $\Sigma$ is assumed to be positive definite, $R$ is
also positive definite. In fact, $R$ is the covariance matrix of the normalized random
vector $\vxp$ such that $\xp_i = x_i / \sqrt{\Sigma}_{i,i}$ where $\vx \sim
\mN(y \vmu, \Sigma)$. Also , all the diagonal entries in $R$ are equal to $1$,
and when $\Sigma$ is diagonal, $R = I_d$ is the identity matrix. Furthermore, we
define $\vu = (u_1, \dots, u_d)$ where
\begin{equation}
  \label{eq:vu-def-dup}
    u_i = \frac{\mu_i}{\sqrt{\Sigma_{i,i}}} \qquad 1 \leq i \leq d.
\end{equation}
In fact, with $\vxp$ being the normalized of $\vx$ as above, we have $\vu =
\ev{\vxp|y=1}$.
In Lemma~\ref{lem;opt-loss-elemenwise-product-invariant}, we show that such coordinate-wise normalization does
not affect the optimal robust classiciation error. The main reason for this is
that any coordinate-wise product of a vector by positive values does not change
the $\ell_0$ norm. This property is unique to the combinatorial $\ell_0$ norm, and
indeed does not hold for $\ell_p$ norms for $p \geq 1$.

\begin{lem}
  \label{lem;opt-loss-elemenwise-product-invariant}
  Given a vector $\va \in \reals^d$ with strictly positive entries, if we define
  $\vmup \in \reals^d$ and $\Sigmap \in \reals^{d \times d}$ as $\mup_i = a_i
  \mu_i$ and $\Sigmap_{i,j} = a_i a_j \Sigma_{i,j}$, then we have
  \begin{equation*}
    \optloss_{\vmu, \Sigma}(k) = \optloss_{\vmup,\Sigmap}(k) \qquad \forall 0 \leq k \leq d.
  \end{equation*}
  In particular, with $\vu$ and $R$ defined above, we have
  \begin{equation*}
    \optloss_{\vmu, \Sigma}(k) = \optloss_{\vu, R}(k) \qquad \forall 0 \leq k \leq d.
  \end{equation*}
\end{lem}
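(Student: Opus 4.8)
The plan is to use the fact that coordinate-wise rescaling by strictly positive numbers is a linear bijection of $\reals^d$ that preserves the $\ell_0$ distance, hence maps $\ell_0$-balls bijectively onto $\ell_0$-balls, and to push this through to a loss-preserving correspondence between classifiers for the two problems. Concretely, I would let $D$ be the invertible diagonal matrix with diagonal entries $a_1, \dots, a_d$, so that $\vmup = D\vmu$ and $\Sigmap = D\Sigma D$; by the linear-image property of the Gaussian, $(\vx,y)\sim\mD$ with parameters $\vmu,\Sigma$ implies $(D\vx,y)$ has parameters $\vmup,\Sigmap$, since $D\vmu$ has entries $a_i\mu_i$ and $D\Sigma D^\top$ has entries $a_i a_j \Sigma_{i,j}$. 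The single structural fact I would then record — and the only place positivity of $\va$ is used — is that for any $\vx,\vxp$ and each coordinate $i$ we have $x_i \neq \xp_i$ iff $a_i x_i \neq a_i \xp_i$, so $\snorm{D\vx - D\vxp}_0 = \snorm{\vx - \vxp}_0$; consequently $\vxp \mapsto D\vxp$ is a bijection of $\mB_0(\vx,k)$ onto $\mB_0(D\vx,k)$ for every $\vx$ and every $k$. (This is exactly what fails for $\ell_p$ balls with $p \geq 1$, which is why this invariance is special to the $\ell_0$ setting.)

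Next, for an arbitrary classifier $\mC:\reals^d\to\{\pm1\}$ I would set $\mCp := \mC\circ D^{-1}$ (measurable, since $D^{-1}$ is continuous) and compute $\loss_{\vmup,\Sigmap}(\mCp,k)$ by the change of variables $\vz = D\vx$ in the outer expectation together with the substitution $\vz' = D\vxp$ in the inner maximum: for each fixed $\vx$,
\[
  \max_{\vz' \in \mB_0(D\vx, k)} \one{\mCp(\vz') \neq y} = \max_{\vxp \in \mB_0(\vx, k)} \one{\mCp(D\vxp) \neq y} = \max_{\vxp \in \mB_0(\vx, k)} \one{\mC(\vxp) \neq y},
\]
so that $\loss_{\vmup,\Sigmap}(\mCp,k) = \loss_{\vmu,\Sigma}(\mC,k)$. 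Since $\mC \mapsto \mC\circ D^{-1}$ is a bijection of the class of measurable classifiers onto itself (inverse $\tmC \mapsto \tmC\circ D$, using that $D^{-1}$ also has positive diagonal entries), taking the infimum over all classifiers on both sides gives $\optloss_{\vmu,\Sigma}(k) = \optloss_{\vmup,\Sigmap}(k)$ for all $0 \leq k \leq d$. The displayed special case then follows by choosing $a_i = 1/\sqrt{\Sigma_{i,i}}$, which is legitimate since $\Sigma$ is positive definite, and reading off $\mup_i = \mu_i/\sqrt{\Sigma_{i,i}} = u_i$ and $\Sigmap_{i,j} = \Sigma_{i,j}/\sqrt{\Sigma_{i,i}\Sigma_{j,j}} = R_{i,j}$ from \eqref{eq:R-def-dup} and \eqref{eq:vu-def-dup}.

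There is no genuinely hard step in this lemma. The only point that requires a moment's care is the interchange of the change of variables with the inner maximum over $\mB_0$, and that is immediate once bijectivity of $D$ on $\ell_0$-balls is in hand; measurability causes no difficulty since $D^{\pm1}$ are continuous, so the construction stays within the class of classifiers over which the infimum in \eqref{eq:general-opt-loss-def} is taken.
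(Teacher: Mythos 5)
Your proposal is correct and follows essentially the same route as the paper: both exploit that coordinate-wise scaling by strictly positive entries preserves $\ell_0$ distance (hence maps $\mB_0(\vx,k)$ bijectively onto $\mB_0(\va\odot\vx,k)$), transform the Gaussian parameters accordingly, and transfer classifiers via composition with the scaling so that robust losses coincide. The only cosmetic difference is that you phrase the final step as a bijection of the classifier class yielding equality of infima directly, whereas the paper proves one inequality via an $\epsilon$-near-optimal classifier and then swaps the roles of $(\vmu,\Sigma)$ and $(\vmup,\Sigmap)$ with $\va^{-1}$; the substance is identical.
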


\begin{proof}
  Pick $\epsilon > 0$ together with a  classifier $\mC$ such that
  \begin{equation}
    \label{eq:inv-lem-mC-pick}
    \optloss_{\vmu, \Sigma}(k) \geq \loss_{\vmu, \Sigma}(\mC, k) - \epsilon.
  \end{equation}
  Let $\vx \sim \mN(y \vmu, \Sigma)$, i.e.\ $(\vx, y) \sim \mD$, and define
  $\vxp := \va \odot \vx$. Note that $\vxp \sim \mN(y \vmup, \Sigmap)$. Let
  $\mDp$ denote the joint distribution of $(\vxp, Y)$. Recall that by definition
  $\loss_{\vmu, \Sigma}(\mC, k) = \evwrt{(\vx,y) \sim \mD}{\max_{\vxp \in
      \mB_0(\vx, k)} \ell(\mC; \vxp, y)}$. Note that $\vxp \in \mB_0(\vx, k)$
  iff $\snorm{\vxp - \vx}_0 \leq k$. Since all the entries in $\va$ are
  nonzero, this is equivalent to $\snorm{\va \odot \vxp - \va \odot \vx}_0
  \leq k$ which is in turn equivalent to $\va \odot \vxp \in \mB_0(\va \odot
  \vx, k)$. Therefore, if $\va^{-1}$ denotes the elementwise inverse of $\va$,
  we may write
  \begin{equation*}
    \loss_{\vmu, \Sigma}(\mC, k) = \evwrt{(\vx, y) \sim \mD}{\max_{\vxpp \in \mB_0(\va \odot \vx, k)} \ell(\mC; \va^{-1} \odot \vxpp, y)}.
  \end{equation*}
  Let $\mCp$ be the classifier defined that $\mCp(\vx) := \mC(\va \odot \vx)$.
  With this, we can rewrite the above as
  \begin{align*}
    \loss_{\vmu, \Sigma}(\mC, k) &= \evwrt{(\vx, y) \sim \mD}{\max_{\vxpp \in \mB_0(\va \odot \vx, k)} \ell(\mCp;  \vxpp, y)} \\
                                 &= \evwrt{(\vxp, y) \sim \mDp}{\max_{\vxpp \in \mB_0(\vxp, k)} \ell(\mCp;  \vxpp, y)} \\
                                 &= \loss_{\vmup, \Sigmap}(\mCp, k) \\
    &\geq \optloss_{\vmup, \Sigmap}(k).
  \end{align*}
  Comparing this with~\eqref{eq:inv-lem-mC-pick} and sending to zero, we realize
  that $\optloss_{\vmu, \Sigma}(k) \geq \optloss_{\vmup, \Sigmap}(k)$. Changing
  the order of $(\vmu, \Sigma)$ and $(\vmup, \Sigmap)$ and replacing $\va$ with
  $\va^{-1}$ yields the other direction and completes the proof.
\end{proof}

Using the above tools, we are now ready to prove Theorem~\ref{thm:general-lower-bound}.


\begin{proof}[Proof of Theorem~\ref{thm:general-lower-bound}]
  Note that since $\Sigma$ is positive definite, $R$ is also positive definite
  and $\zeta_\text{min} > 0$. Moreover, we have $R \succeq
  \zeta_\text{min} I_d$. Therefore, using
  Lemmas~\ref{lem:optloss-increasing-Sigma-increasing} and
  \ref{lem;opt-loss-elemenwise-product-invariant} above, we realize that for all
  $k$, we have
  \begin{equation}
    \label{eq:optloss-mu-u-zeta-min}
    \optloss_{\vmu, \Sigma}(k) = \optloss_{\vu, R}(k) \geq \optloss_{\vu, \zeta_\text{min} I_d}(k).
  \end{equation}
  Since $\zeta_\text{min}I_d$ is diagonal, we may use our lower bound of
  Theorem~\ref{thm:lower-bound-diag} with $\vnu = (\zeta_\text{min}I_d)^{-1/2}
  \vu = \vu / \sqrt{\zeta_\text{min}}$ to obtain the following bound with holds
  for all $A \subseteq [d]$
  \begin{equation*}
    \optloss_{\vu, \zeta_\text{min}I_d}\left( \frac{1}{\sqrt{\zeta_\text{min}}} \snorm{\vu_A}_1 \log d \right) \geq \bar{\Phi}(\snorm{\vu_{A^c}}_2) - \frac{1}{\log d}.
  \end{equation*}
  The proof is complete by comparing this with~\eqref{eq:optloss-mu-u-zeta-min}.
\end{proof}


\section{Proof of Theorem~\ref{thm:ind-bound-finite}}
\label{sec:app_diagonal_match}

  We use the bound in Corollary~\ref{cor:upper-bound-diagonal} with $F =
[\lambda_c:d]$, which  simplifies 
into the following with $k =
\snorm{\vnu_{[1:\lambda_c]}}_1 / \log d$: 
\begin{equation}
  \label{eq:ind-thm-ach-bound-simple}
  \optloss_{\vmu, \Sigma}\left( \frac{\snorm{\vnu_{[1:\lambda_c]}}_1}{\log d} \right) \leq \frac{1}{\sqrt{2 \log d}} + \bar{\Phi}\left( \snorm{\vnu_{[\lambda_c:d]}}_2 - \frac{\snorm{\vnu_{[1:\lambda_c]}}_1 \snorm{\vnu_{[\lambda_c: d]}}_\infty}{\snorm{\vnu_{[\lambda_c  : d]}}_2} \frac{16\sqrt{2}}{\sqrt{\log d}}\right).
\end{equation}
Note that we have
\begin{equation}
\label{eq;norm-2-vnu-lambda-c-d-bound}
  \snorm{\vnu_{[\lambda_c:d]}}_2^2  = 1- \snorm{\vnu_{1:\lambda_c-1}}_2^2 \geq 1 - c^2.
\end{equation}
On the other hand,
\begin{equation} 
  \label{eq:bound-lamnbda-c-norm-1-inf-bound}
  \begin{aligned}
    \snorm{\vnu_{[1:\lambda_c]}}_1 \snorm{\vnu_{[\lambda_c: d]}}_\infty & = \snorm{\vnu_{[1:\lambda_c]}}_1 |\nu_{\lambda_c}| \\
    &\leq \snorm{\vnu_{[1:\lambda_c]}}_2^2 \\
    &\leq \snorm{\vnu}_2^2 \\
    &= 1
  \end{aligned}
\end{equation}
Substituting~\eqref{eq;norm-2-vnu-lambda-c-d-bound}
and~\eqref{eq:bound-lamnbda-c-norm-1-inf-bound} back
into~\eqref{eq:ind-thm-ach-bound-simple}, we get
\begin{equation}
  \label{eq:ind-ach-bound-final-simple}
    \optloss_{\vmu, \Sigma}\left( \frac{\snorm{\vnu_{[1:\lambda_c]}}_1}{\log d} \right) \leq \frac{1}{\sqrt{2 \log d}} + \bar{\Phi}\left( \sqrt{1-c^2} - \frac{16\sqrt{2}}{\sqrt{1-c^2} \sqrt{\log d}} \right )
\end{equation}
Furthermore, with $A =[1:\lambda_c]$, the bound in
Theorem~\ref{thm:lower-bound-diag} implies that
\begin{equation}
  \label{eq:ind-conv-bound-simple}
  \optloss_{\vmu, \Sigma}(\snorm{\vnu_{[1:\lambda_c]}}_1 \log d) \geq \bar{\Phi}(\sqrt{1-c^2}) - \frac{1}{\log d}.
\end{equation}
This completes the proof.



\section{Proof of Theorem~\ref{thm:asymp-independent}}
\label{app:asymp-theorem-proof}

  Note that since $\funcd(.)$ is nondecreasing for all $d$, if $\funcinf(c) =
  \lim \funcd(c)$  exists, $\funcinf(.)$ is indeed nondecreasing and
  $\funcinf(0)$ is well-defined.
  
\underline{Part 1} First we assume that $c \in (0,1)$. Since $\funcinf(c) = \lim
\funcd(c)$ and $\log \log d / \log d \rightarrow 0$, $\limsup \log_d k_d <
\funcinf(c)$ implies that for $d$ large enough, we have
\begin{equation*}
  \log_d k_d < \funcd(c) - \frac{\log \log d}{\log d}.
\end{equation*}
Thereby,
\begin{equation*}
  \log_d k_d < \log_d \snorm{\vnud_{[1:\lambdad_c]}}_1 - \frac{\log \log d}{\log d} = \log_d \frac{ \snorm{\vnud_{[1:\lambdad_c]}}_1}{\log d}.
\end{equation*}
Hence, Theorem~\ref{thm:ind-bound-finite} implies that
\begin{equation*}
  \optloss_d(k_d) \leq \optloss_d\left( \frac{ \snorm{\vnud_{[1:\lambdad_c]}}_1}{\log d} \right) \leq \frac{1}{\sqrt{2 \log d}} + \bar{\Phi}\left( \sqrt{1-c^2} - \frac{16\sqrt{2}}{\sqrt{1-c^2} \sqrt{\log d}} \right).
\end{equation*}
Sending $d$ to infinity, we get $\limsup \optloss_d(k_d) \leq
\bar{\Phi}(\sqrt{1-c^2})$. Next, we consider $c =0$. Note that since
$\funcinf(.)$ is nondecreasing, $\limsup \log_d k_d < \funcinf(0)$ implies that
$\limsup \log_d k_d < \funcinf(c)$ for all $c > 0$. Consequently, the above
bound implies that $\limsup \optloss(k_d) \leq \bar{\Phi}(\sqrt{1-c^2 })$ for all
$c > 0$. Sending $c$ to zero, we realize that $\limsup \optloss(k_d) \leq
\bar{\Phi}(0)$. Finally, for $c = 1$, note that the classifier that always
outputs $1$ has misclassification error at most $1/2$. This implies that
irrespective of the sequence $k_d$, we always have $\limsup \optloss_d(k_d) \leq
1/2 = \bar{\Phi}(\sqrt{1-1^2})$ and the bound automatically holds for $c = 1$.

\underline{Part 2} First we assume that $c \in (0,1]$. Similar to the first
pare, $\liminf \log_d k_d > \funcinf(c)$ implies that for $d$ large enough, we
have
\begin{equation*}
  \log_d k_d > \funcd(c) + \frac{\log \log d}{\log d},
\end{equation*}
and
\begin{equation*}
    \log_d k_d > \log_d \snorm{\vnud_{[1:\lambdad_c]}}_1 + \frac{\log \log d}{\log d} = \log_d  (\log d\snorm{\vnud_{[1:\lambdad_c]}}_1).
  \end{equation*}
  Hence, Theorem~\ref{thm:ind-bound-finite} implies that
  \begin{equation*}
    \optloss_d(k_d) \geq \optloss_d(\log d \snorm{\vnud_{[1:\lambdad_c]}}_1) \geq \bar{\Phi}(\sqrt{1-c^2}) - \frac{1}{\log d}. 
  \end{equation*}
  Sending $d \rightarrow \infty$, we get $\liminf \optloss_d(k_d) \geq
  \bar{\Phi}(\sqrt{1-c^2})$. For the case $c = 0$, note that irrespective of the
  sequence $k_d$, we always have $\optloss_d(k_d) \geq \optloss_d(0) =
  \bar{\Phi}(\sqrt{1-0^2})$. Thereby, the result for $c =0$ automatically holds. 




\end{document}